\documentclass{article}

\usepackage{microtype}
\usepackage{graphicx}
\usepackage{subcaption}
\usepackage{booktabs} 

\usepackage{hyperref}
\usepackage{multicol}
\usepackage{xcolor}
\definecolor{seedblue}{HTML}{1F77B4} %

\usepackage[most]{tcolorbox}   

\newtcbtheorem{theobox}{Example}%
{enhanced, breakable,
  colback=seedblue!3,                 
  colframe=seedblue!60!black,         
  boxrule=0pt,                    
  borderline={0.6pt}{0pt}{seedblue!60!black}, 
  arc=3pt,
  fonttitle=\bfseries,
  coltitle=white,
  attach boxed title to top left={yshift=-2mm,xshift=2mm},
  boxed title style={colback=seedblue!80!black, boxrule=0pt, arc=3pt},
}{th}
\usepackage{amsmath, amssymb, amsthm, amsfonts}
\usepackage{times}
\usepackage{xcolor}
\usepackage{graphicx}
\usepackage{enumitem}
\usepackage{cleveref}
\usepackage{titlesec}
\usepackage{mathrsfs}
\usepackage{physics}
\usepackage{booktabs}
\usepackage{tensor}
\usepackage{nicematrix}
\usepackage{thm-restate}

\titlespacing*{\section}{0pt}{6pt}{0pt}
\titlespacing*{\subsection}{0pt}{6pt}{0pt}

\let\P\relax\DeclareMathOperator{\P}{\mathbb{P}}
\DeclareMathOperator{\E}{\mathbb{E}}
\DeclareMathOperator{\Var}{Var}

\usepackage{bm}

\newcommand{\R}{\mathbb{R}}

\DeclareMathOperator{\diag}{diag}

\DeclareMathOperator{\unif}{Unif}
\DeclareMathOperator{\poly}{poly}
\DeclareMathOperator{\polylog}{polylog}

\newcommand{\w}{\bm{w}}
\newcommand{\e}{\bm{e}}

\newcommand{\x}{\bm{x}}
\newcommand{\p}{\bm{p}}

\newcommand{\bu}{\bm{u}}

\newcommand{\X}{\bm{X}}
\newcommand{\I}{\bm{I}}
\newcommand{\D}{\bm{D}}

\newtheorem{lemma}{Lemma}

\newtheorem{theorem}{Theorem}

\theoremstyle{remark}

\ifdefined\usebigfont

\usepackage{times}
\usepackage[fontsize=13pt]{scrextend}
\AtBeginDocument{
	\newgeometry{letterpaper,left=1.56in,right=1.56in,top=1.71in,bottom=1.77in}
}
\else
\fi

\usepackage[accepted]{icml2026}

\usepackage{amsmath}
\usepackage{amssymb}
\usepackage{mathtools}
\usepackage{amsthm}
\usepackage{tikz}
\usepackage{amssymb}
\usetikzlibrary{positioning, shapes, arrows.meta, calc, backgrounds, fit}

\makeatletter
\renewcommand{\paragraph}{%
  \@startsection{paragraph}{4}%
  {\z@}{0ex}{-1em}%
  {\normalfont\normalsize\bfseries}%
}
\makeatother

\definecolor{myred}{RGB}{200, 60, 60}
\definecolor{myblue}{RGB}{60, 100, 200}

\usepackage[textsize=tiny]{todonotes}

\icmltitlerunning{The Power of Power Law: Asymmetry Enables Compositional Reasoning}

\begin{document}

\twocolumn[
  \icmltitle{The Power of Power Law: Asymmetry Enables Compositional Reasoning}



  \icmlsetsymbol{equal}{*}

  \begin{icmlauthorlist}
    \icmlauthor{Zixuan Wang}{yyy}
    \icmlauthor{Xingyu Dang}{yyy}
    \icmlauthor{Jason D. Lee}{sch}
    \icmlauthor{Kaifeng Lyu}{comp}
  \end{icmlauthorlist}

  \icmlaffiliation{yyy}{Princeton University}
  \icmlaffiliation{comp}{Tsinghua University}
  \icmlaffiliation{sch}{University of California, Berkeley}

  \icmlcorrespondingauthor{Kaifeng Lyu}{klyu@mail.tsinghua.edu.cn}

  \icmlkeywords{Machine Learning, ICML}

  \vskip 0.3in
]



\printAffiliationsAndNotice{}  

\begin{abstract}
Natural language data follows a power-law distribution, with most knowledge and skills appearing at very low frequency. While a common intuition suggests that reweighting or curating data towards a uniform distribution may help models better learn these long-tail skills, we find a counterintuitive result: across a wide range of compositional reasoning tasks, such as state tracking and multi-step arithmetic, training under power-law distributions consistently outperforms training under uniform distributions. To understand this advantage, we introduce a minimalist skill-composition task and show that learning under a power-law distribution provably requires significantly less training data. Our theoretical analysis reveals that power law sampling induces a beneficial asymmetry that improves the pathological loss landscape, which enables models to first acquire high-frequency skill compositions with low data complexity, which in turn serves as a stepping stone to efficiently learn rare long-tailed skills. Our results offer an alternative perspective on what constitutes an effective data distribution for training models.
\end{abstract}


\section{Introduction}
In many domains of machine learning, including vision and language, the model performance often has been observed to follow a power-law scaling with respect to dataset size and model size~\citep{kaplan2020scaling, hoffmann2022training, sorscher2022beyond, hestness2017deep, gordon-etal-2021-data, henighan2020scaling}.
A common hypothesis is that this phenomenon arises from heavy-tailed structure in the underlying data distribution.
At the lexical level, natural language exhibits Zipf's law in word frequencies~\citep{zipf2016human}.
At a more abstract level, language data may be viewed as consisting of many latent ``skills'' or ``knowledge pieces'' whose occurrence frequencies follow a power-law distribution, $p_i \propto i^{-\alpha}$ for some $\alpha > 0$. This perspective has been made more concrete in recent studies that attempt to quantify discrete knowledge and skills in language models~\citep{michaud2023quantization,arora2023theory}.

Under such a distribution, a power-law learning curve may naturally arise when increasingly rare knowledge and skills become covered when the dataset scales. However, this perspective also suggests a potential data inefficiency: rare skills are observed only when the dataset becomes very large, while the most frequent skills may be repeatedly sampled far beyond what is necessary for learning them.

This view motivates investigating whether shifting the data distribution, for example, by reweighting existing data or by deliberately curating new data, can help models acquire long-tail knowledge and skills more efficiently and potentially improve upon standard power-law scaling trends~\citep{sorscher2022beyond,medvedev2026shift}.
In particular, a natural approach to consider is to balance the training data by up-weighting low-frequency skills and knowledge pieces while down-weighting high-frequency ones~\citep{jamal2020rethinking,zevallos2023frequency}.
Given sufficient knowledge of the underlying data distribution and adequate compute budget for data curation, one might therefore expect that an ideal data distribution would be close to a uniform distribution over all skills and knowledge components, assigning almost equal probability mass to each.

However, in this paper, we show that shifting towards a uniform distribution may not always be the best choice.
We focus on compositional reasoning tasks, where models are required to combine multiple reasoning skills to solve a problem.
We start with a simple example, \textit{multi-step arithmetic}, where models are required to apply basic operations (addition, subtraction, multiplication) to specific operands, and these atomic components serve as the underlying skills. For example, for the problem $2\times3+1-4$, the three operations $[\times 3, +1, -4]$ can be seen as skills.
Surprisingly, we find that when each individual skill is sampled from the power law distribution in the training set, the model consistently outperforms its counterpart trained with data under uniform sampling, even though the test accuracy is measured under uniform distribution.

\begin{figure*}[t]
    \centering
    \includegraphics[width=0.93\textwidth]{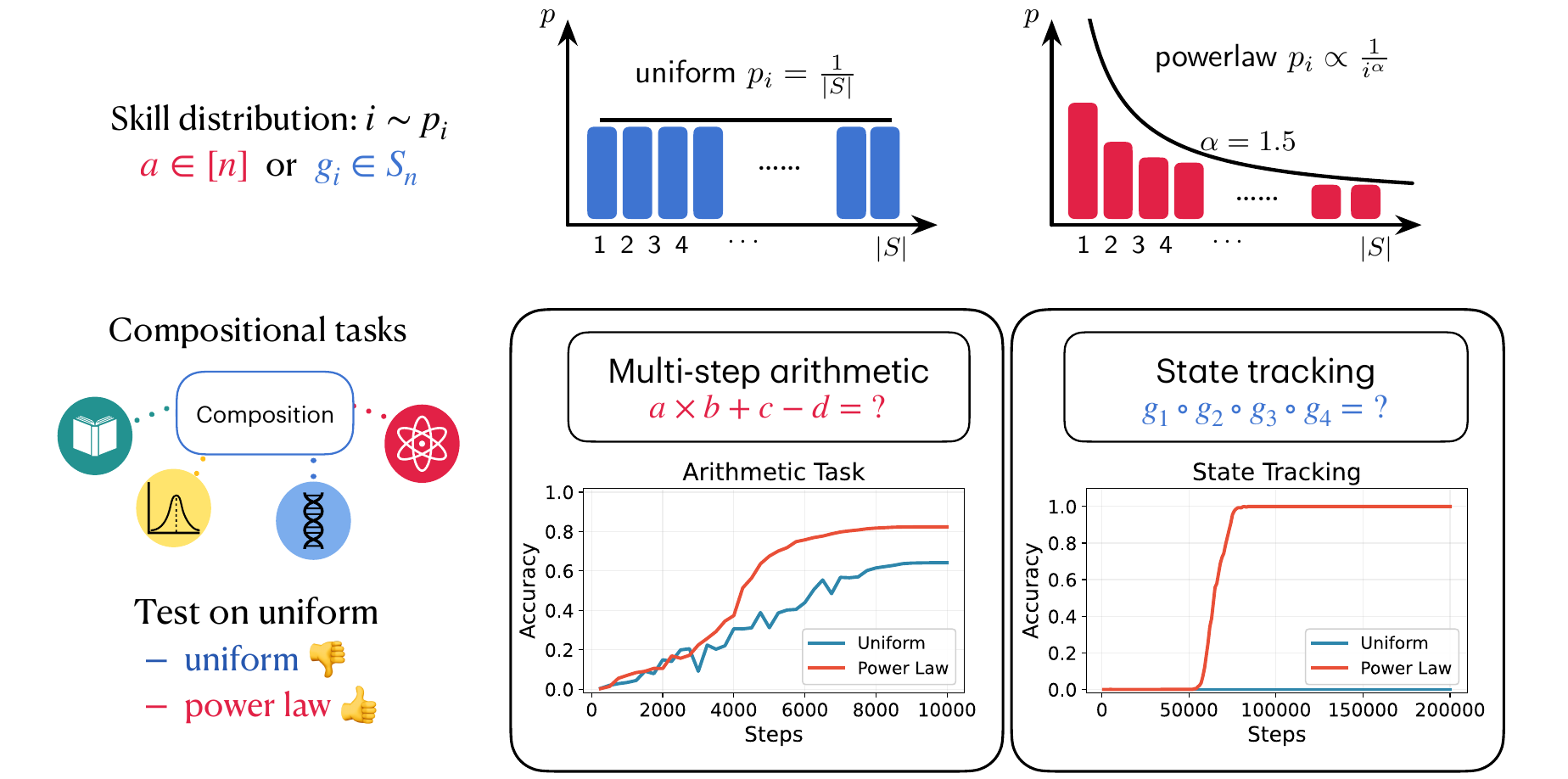}
    \caption{Compositional reasoning tasks require composition of skills. We find that only changing the distribution of skills from uniform to power law enables the language models to learn compositional reasoning tasks faster (arithmetic), or even turning an unlearnable task (e.g. state tracking) into a learnable one.}
    \label{fig:illustration}
\end{figure*}

Moreover, the gap is even larger when it comes to \textit{implicit multi-hop tasks} like the \textit{$k$-hop state tracking} task \citep{merrill2023parallelism, li2024chain}. The models cannot learn without chain-of-thought (CoT) or curriculum learning under uniform distribution, but training with a power law distribution of skills enables the same model to directly learn the task efficiently.

Towards understanding the counterintuitive advantage of power-law distribution, we propose a minimalist task to model the skill composition for theoretical insights. Under this setting, we are able to investigate why the uniform data distribution induces hardness for composition tasks, and how the power-law distribution helps to re-enable the efficient training. Corresponding to the theoretical setting, we also conduct various experiments to mechanistically verify the predictions from the minimalist model and further confirm the advantage of power-law distribution.

\textbf{Our contributions} are summarized as follows:
\begin{itemize}
    \item We observe the counter-intuitive phenomenon on \textit{compositional reasoning tasks} that training on data following an asymmetric power-law consistently outperforms training on uniformly sampled data. Sampling data following power-law distribution can even enable models to learn some complex \textit{implicit multi-hop tasks} efficiently without any intermediate supervision (e.g. curriculum or chain-of-thought), while models cannot learn under a uniform distribution.
    \item We propose a minimalist theoretical task to model the general compositional reasoning tasks, called \textit{\textbf{$k$-multiplicative composition}}. We rigorously prove that if the model is trained under a uniform distribution, learning the task requires at least $d^{\Omega(k)}$ samples or runtime. However, training with gradient descent (GD) under a power-law distribution only requires $d^{O(1)}$ samples and runtime, thus establishing a theoretical separation between uniform and power-law distribution.
    \item In line with our theoretical analysis, we show that a similar stage-wise learning mechanism is also confirmed in the \textit{state tracking} task. The power-law distribution first significantly improves the pathological landscape near initialization and strengthen the initial learning signals of composition. The high-frequency skills are first learned and then in turn benefit the learning of scarce long-tail skills, with the intuitive long-tail drawback of power-law appearing in the final stage. 
    \item We finally validate our theoretical predictions on more complex synthetic natural language reasoning tasks, including multi-hop question answering \citep{yao2025language} and grade-school math problems \citep{ye2024physics, zhou2025gsm}, empirically showing the advantage of power-law distribution in the training of compositional reasoning tasks.
\end{itemize}

\section{Motivating Experiments}
We begin by presenting the empirical phenomenon that motivates our theoretical investigation. We focus on \textit{compositional reasoning tasks}, where a model must compute the result directly by combining several steps of operations. These tasks represent the abundant implicit reasoning within language models, which require the models to understand indivisible or hidden composition of skills or knowledge without explicit thinking process \citep{michaud2023quantization, arora2023theory}. As one of the most intuitive compositional reasoning tasks, we choose \textit{multi-step arithmetic} as a starting testbed for our observation.

\paragraph{Multi-step arithmetic.} We first observed the effectiveness of power-law distribution in \textit{multi-step arithmetic} task in \citet{deng2024explicit,wang2023learningmultistepreasoningsolving}, where the model must directly output the integer result of an expression containing $k=4$ sequential operations (including $+,-,\times$) without chain-of-thought. Here the model needs to learn how each operation transforms the current intermediate result, so \textit{each operand combined with the operation (e.g. $+3,-2$) is seen as an atomic skill to sample} in the training distribution. We train a 0.6B-parameter model with Qwen3-style \citep{qwen3} architecture from scratch, ensuring that the model starts with no prior knowledge of numerical identities. We surprisingly find that simply switching the sampling strategy from uniform distribution to a power-law distribution with the exponent $\alpha=1.0$ results in a significant performance improvement, as shown in \Cref{fig:illustration}. Note that the \textbf{number indices are randomly shuffled}, so the power law governs only the occurrence frequencies without any inherent ordering of the operands. 

\paragraph{State tracking.} To test the generality of the phenomenon, we also experiment with some algorithmic task --- the \textit{state tracking task} proposed in \citet{merrill2023parallelism} based on the symmetry group $S_5$. State tracking modeling has been long established in both the theoretical and empirical literature in language modeling and sequential reasoning. 
In this task, the model is required to compose the sequence of input group elements $g_1,g_2,...,g_k\in S_5$. The target is to output $g_1\circ g_2 \circ \cdots\circ g_k$ without chain-of-thought, where $\circ$ is the group operation.
However, even though there exists an $O(\log k)$-layer transformer that can compose $k$ group elements, the task has been proved to be challenging to learn under \textit{uniform} training distribution without intermediate supervision for all architectures both theoretically \citep{wang2025learning} and empirically \citep{li2024chain}. 

Similar to the arithmetic tasks, our experiments in \Cref{fig:illustration} show that a transformer can actually learn the task almost perfectly under the \textbf{power-law} distribution ($\alpha = 1.5$) without any curriculum \citep{wang2025learning} or intermediate thinking trace \citep{li2024chain}, while the uniform training distribution cannot. Note that the rank of different group elements $g_i\in S_5$ is also given \textbf{randomly} across experiments, so the learning speed-up is not due to the learning order from easier functions to harder functions.

The observations from these two experiments provide surprising evidence that instead of hurting performance due to the long-tail effect, an asymmetric power-law distribution is actually good for the learning of compositional tasks. 

\section{A Minimalist Example of Composition}
Towards understanding why only a switch of training distribution helps in implicit compositional reasoning tasks, we aim to find the simplest modeling for skill composition and reveal why uniform distribution may face challenges in training. In this section, we introduce our theoretical setting, together with a lower bound showing that gradient-based training will provably require large amount of data or compute under uniform distribution. 

\paragraph{Notations.} For any $n\in\mathbb{N}$, $[n]=\{1,2,...,n\}.$ Bold lowercase letters represent vectors (e.g. $\e$). Normal lowercase letters are scalars. Bold uppercase letters represent matrices (e.g. $\X$). $\norm{\bu}_2$ denotes the $\ell_2$-norm of a vector $\bu.$  For any index $i \in [d]$, let $\e_{i} \in \mathbb{R}^d$ denote the $d$-dimensional one-hot vector with a 1 in the $i$th coordinate. We use $\Tilde{O},\Tilde{\Omega}$ to hide $\polylog (d)$ factors,
and we use $f\lesssim g$ (or $f=O(g)$, $g=\Omega(f)$) when $f \leq Cg$ for an absolute constant $C>0$.
\subsection{$k$-multiplicative Composition}
Analyzing transformers trained on real-world compositional reasoning tasks could be challenging due to technical difficulties and many potential entangled factors.  
Therefore, we propose a minimalist task as the abstraction of multi-hop skill composition to gain theoretical insights for a better understanding of this phenomenon. Similar to the \textit{state tracking} composition tasks in \citet{merrill2023parallelism}, we consider a sequence to sequence task that requires to compose a sequence of input functions as skills. We assume that $d$ fixed skills $s_1,...,s_d$ are used in the task, and we fix the number $k$ as the \textit{hop number} for each input sequence.  

\paragraph{Setting.} We consider the following\textit{\textbf{ $k$-multiplicative composition}} task as the composition of a sequence of $k$ `scalar skills' independently sampled from those fixed $d$ available skills. Each skill $s_i$ represents a hidden scalar $w_i^*\in\{-1,+1\}$, and the composition operation of skills is multiplication. Formally, the length-$k$ input sequence $\X=(\x_1,...,\x_k)$ is a sequence of skill vectors sampled from $\{\e_i: i\in [d]\}$, which are one-hot vectors representing different $d$ skills. The ground-truth label $y$ is given by the composition of the hidden scalars behind the input skills $s_i$: 
$y:=\prod_{i=1}^k (\x_i^\top\w^*),$ 
where $\w^*:=(w_1^*,...,w_d^*)\in \R^d$ is the hidden vector, which is the concatenation of the hidden scalars $w^*_i$ of each skill $i$. 
 
The goal of the task is to uncover the hidden scalar $w^*_i$, given the training samples $(\X,y)$ sampled from a certain distribution.
Specifically, we define the target function class

$$\mathcal{F}=\{f_{\w}(\,\cdot\,):\w\in\{\pm1\}^d\}, f_{\w}(\X)=\prod_{i=1}^k(\x_i^\top \w).$$
The ground-truth label $y=f_{\w^*}(\X)$ is given by the ground-truth hidden vector $\w^*$. 
Given any learner model $f_\theta (\X)$, the objective we are required to optimize is the MSE loss $\hat{\mathcal{L}}(\w)=\frac{1}{n}\sum_{i=1}^n\ell(\w;\X^{(i)}) $, where the per sample loss is 
$$\ell(\w;\X^{(i)})=\frac12\big(f_{\theta}(\X^{(i)})-y\big)^2$$ 
for each sample $\X^{(i)} =(\x_1^{(i)},...,\x_k^{(i)})$.
The training distribution of the sequence is defined as follows: each input skill $\x_j^{(i)}\overset{i.i.d.}{\sim} \mathcal{D}_{\text{train}}$ is independently sampled from a certain fixed distribution $\mathcal{D}_{\text{train}}$. In this work, $\mathcal{D}_{\text{train}}$ can be uniform over the skills $\unif(\{\e_1,\e_2,...,\e_d\})$ or a certain power law $\Pr[\x_i=j]=p_j$ with $p_j\propto j^{-a}$ where $a>0, j\in [d]$.

\paragraph{Remark.} The task can be seen as a generalization of the parity task. Instead of directly computing the product of input scalars $\{-1,+1\}$ like parity task itself, the model has to learn the hidden knowledge $w_i^*$ behind the input skill. The model itself can also be seen as a recurrent neural network (RNN) with a scalar hidden state $h_i\in\R$ and parameter $\w\in\R^d$, with the input sequence $(\x_1,\x_2,...,\x_k)$ and the update rule $h_{i+1} = g_{\w}(\x_i,h_i)=(\x_i^\top \w)h_i, h_1=1$. This data model is also similar to the sequence single index model proposed in \citet{arnaboldi2025asymptotics} and the PolyNet in \citet{barak2022hidden}.

\subsection{Uniform distribution fails: lower bound}
Can gradient-based algorithm learn the task efficiently? Similar to many algorithmic tasks like $k$-sparse parity or $k$-fold composition \citep{szorenyi2009characterizing, wang2025learning}, we show that any learner must either use a large amount of training data or runtime when the training distribution is uniform.

Formally, we prove a correlational statistical query (CSQ) lower bound \citep{szorenyi2009characterizing} for learning $\mathcal{F}$. Online stochastic gradient descent (SGD) on the square loss is included in the CSQ learner class. 
The discussion and the proof is in \Cref{appendix:sq_lower_bound}.
Our lower bound against the CSQ learner is given as follows:

\begin{theorem}[Informal]\label{thm:sqlb}
Let the input distribution be uniform. There exists a constant $\epsilon=\Omega(1)$, s.t. any model trained with gradient descent using $q$ gradient queries requires a tolerance $\tau^2\le \qty(\frac{\log(dq)}{d})^{k/2}$ to achieve loss $\mathcal{L}(\w)\le \epsilon$, which requires $n\gtrsim d^{k/2}$ samples when $q\lesssim d^{k/2}$.
\end{theorem}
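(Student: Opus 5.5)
We follow the standard CSQ lower-bound template \citep{szorenyi2009characterizing}. The plan is to exhibit a large subfamily of $\mathcal{F}$ that is nearly pairwise orthogonal under the uniform input distribution. Then any CSQ learner with tolerance $\tau$ and $q$ queries can distinguish only a few members of it.

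\textbf{Step 1: correlations.} Under the uniform distribution, $\x_j=\e_{i_j}$ with $i_j$ i.i.d.\ uniform on $[d]$. For $\w,\w'\in\{\pm1\}^d$, independence across positions gives
\[
\E[f_{\w}(\X)f_{\w'}(\X)]=\prod_{j=1}^k \E_{i}[w_i w'_i]=\Big(\tfrac{1}{d}\langle \w,\w'\rangle\Big)^k .
\]
In particular $\E[f_{\w}^2]=1$, and the correlation between two targets is the $k$-th power of their normalized inner product.

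\textbf{Step 2: packing.} Draw $M$ vectors $\w^{(1)},\dots,\w^{(M)}$ i.i.d.\ uniformly from $\{\pm1\}^d$. By Hoeffding and a union bound, with high probability $|\langle\w^{(a)},\w^{(b)}\rangle|/d\le \sqrt{2\log(M^2)/d}$ for all $a\ne b$. Consequently every pairwise correlation is at most $(C\log M/d)^{k/2}$ in absolute value. We will take $M\asymp dq$ or a polynomial in it, so that this bound becomes $\big(\log(dq)/d\big)^{k/2}$.

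\textbf{Step 3: CSQ argument.} Fix any query $\phi$ with $\E[\phi^2]\le1$ and set $c_a=\E[\phi f_{\w^{(a)}}]$. We bound the number of indices $a$ with $|c_a|>\tau$ using the Gram matrix $G$ of $\{f_{\w^{(a)}}\}$. Take any subset $S$ of such indices. By Cauchy--Schwarz,
\[
\sum_{a\in S}|c_a|\le \|\textstyle\sum_{a\in S}\pm f_{\w^{(a)}}\|
\le \sqrt{|S|+|S|^2\gamma},\qquad \gamma:=(C\log M/d)^{k/2}.
\]
Since the left side exceeds $|S|\tau$, we get $|S|\lesssim 1/\tau^2$ whenever $\tau^2\gtrsim\gamma$. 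Now let the oracle answer every query with $0$, which is a valid response for all targets except those with $|c_a|>\tau$. Each query then rules out at most $O(1/\tau^2)$ candidates, so $q$ queries rule out at most $O(q/\tau^2)$.

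Choose $M\gg q/\tau^2$, which is polynomial in $dq$ when $\tau^2\approx\gamma$. Many candidates then remain consistent with the transcript. Any two surviving targets satisfy $\E[(f_{\w^{(a)}}-f_{\w^{(b)}})^2]\ge 2-2\gamma$, so a single output $h$ has loss at least roughly $1/2-\gamma$ on one of them. This is larger than a constant $\epsilon$. Therefore achieving loss $\le\epsilon$ forces $\tau^2\le\gamma\approx(\log(dq)/d)^{k/2}$.

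\textbf{Step 4: samples.} Gradient descent implements each query, namely a gradient coordinate of the square loss, which is correlational up to a label-independent term, by an empirical average of $n$ samples. This average has precision $\tau\sim1/\sqrt n$ up to log factors. A union bound over $q\lesssim d^{k/2}$ queries costs only a log factor. Hence $n\gtrsim 1/\tau^2\gtrsim (d/\log(dq))^{k/2}$, which is $d^{k/2}$ up to polylog factors.

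\textbf{Main obstacle.} The delicate step is fixing $M$ consistently. It must be large enough to exceed $q/\tau^2$, yet small enough that the concentration bound in Step 2 still gives $\gamma\approx(\log(dq)/d)^{k/2}$. Taking $M=(dq)^{c}$ works because only $\log M$ enters $\gamma$. A secondary issue is justifying that GD on the square loss is a CSQ learner: the $f_\theta^2$ part of the gradient does not depend on the labels and can be computed exactly.
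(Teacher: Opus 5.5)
Your proposal is correct and takes the same route as the paper: the identity $\langle f_{\w},f_{\w'}\rangle=(\langle\w,\w'\rangle/d)^k$, a Hoeffding-plus-union-bound packing on $\{\pm1\}^d$, a CSQ counting argument, and the heuristic $\tau\approx 1/\sqrt{n}$. The differences are minor: you prove the CSQ step inline (Cauchy--Schwarz on the Gram matrix plus a zero-answering oracle) where the paper cites Lemma 5 of Damian et al.\ (2022); your choice $M=(dq)^{c}$ states the packing size explicitly where the paper only asserts a packing of size about $q$; and with the paper's $\tfrac12$ normalization the residual loss is $(1-\gamma)/4$ rather than $1/2-\gamma$, which does not affect the conclusion.
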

The theorem implies a learner must use either enormous compute or sample size of $\tilde{\Omega}(d^{k/2})$, which is unacceptable when the number of skills $d$ is large and multiple hops of composition are required. Therefore, training under uniform distribution suffers from the computational gap and fails on such compositional task.

That said, the existing lower bound restricted the training \textbf{distribution}: the proof only holds when the distribution is uniform. The key insight behind such computational hardness lies in the symmetry of the function class, causing it hard to distinguish from one function from another within the function class. Then the natural question arises: {\it does the asymmetry in power law help to break this lower bound?}

\section{A Theory for Why Power Law Helps}
In this section, we show that power-law distribution indeed helps in the training on the minimalist setting. We prove that with online SGD, the model learns the ground-truth skill vector $\w^*$ with much smaller sample size. Based on the theory insights, we summarize three stages in the learning process of the compositional tasks, and try to mechanistically verify the theoretical insights with experiments on the state tracking task. 
\subsection{Power-law distribution enables composition}
\label{subsec:upper_bound_proof}
We first show a positive optimization result on the power-law distribution. In contrast to uniform distribution, it is possible that online SGD can learn the target function under the power-law distribution, with prior knowledge of the composition structure. Specifically, we simply consider the learners that take a similar form:
$$f_{\w}(X)=\prod_{i=1}^k (\x_i^\top \w),\text{ where } \w\in\R^d,$$
and we optimize on the parameter $\w$. The goal is to recover the ground truth $\w^*\in\{-1,1\}^d.$ 

The following theorem shows that if we pick the power-law distribution as the training distribution $\mathcal{D}_{\text{train}}$, online SGD can learn the ground truth with much less samples $\tilde{O}(d^{2\alpha})$
compared to the lower bound of using uniform distribution.

\begin{theorem}[Gradient descent learns under power law] Let the input distribution be Zipf law with $p_j \propto j^{-\alpha}$ with $\alpha > 1$. Suppose that the target error is $\varepsilon>0$, and $\w(0)\sim \mathcal{N}(0_d,r^2\I_d), r=\Theta(1), k=\Theta(1)$ and is even$, \eta \le O\qty(\frac{1}{k^2\|\p\|_2})$ and $B\ge \tilde{O}\qty(\frac{d^\alpha\log\frac{1}{\delta}}{{p_{\min}\varepsilon}})$. Then with probability $1-\delta$ the model can learn the task by minibatch gradient descent with $\Tilde{O}(\frac{d^{2\alpha}}{\eta\varepsilon})$ samples in $t\le \tilde{O}(\frac{d^{\alpha}}{\eta}\log\frac1\varepsilon)$ time, with error $\min\{\norm{\w+\w^*}_\infty, \norm{\w-\w^*}_\infty\}\le \varepsilon$ and the population loss $\E[\mathcal{L}(t)]\le O\qty(d^{-\alpha}\varepsilon^2)$.
\end{theorem}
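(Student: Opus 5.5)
Throughout, write $u_j = w_j w_j^*$; since $w^*_j\in\{\pm1\}$, the loss depends on $\w$ only through $\bu$. Define $m := \sum_j p_j u_j = \E[\x^\top \w\,\x^\top\w^*]$. Because the $k$ positions are i.i.d., the population loss factorizes:
\begin{align*}
\mathcal{L}(\w) = \tfrac12\Big[\big(\textstyle\sum_j p_j w_j^2\big)^k - 2 m^k + 1\Big].
\end{align*}
The gradient has coordinates
\begin{align*}
\partial_{w_j}\mathcal{L} = k\,p_j\Big( Q^{k-1} w_j - m^{k-1} w_j^*\Big), \qquad Q := \textstyle\sum_j p_j w_j^2 .
\end{align*}
Multiplying by $w_j^*$, each $u_j$ therefore evolves as
\begin{align*}
\dot u_j = -\eta k p_j\big(Q^{k-1}u_j - m^{k-1}\big).
\end{align*}
This is a coupled but essentially two-scalar system in $(Q,m)$. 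Since $k$ is even, $m^{k-1}$ carries the sign of $m$. The dynamics therefore pull every $u_j$ toward $\mathrm{sign}(m)\,|m|^{k-1}/Q^{k-1}$. If the fixed point has $u_j\equiv c$, then $Q=c^2$ and $m=c$, which forces $c^{2k-1}=c^{k-1}$, i.e.\ $c=\pm1$. That gives $\w=\pm\w^*$, which explains the $\min$ in the error.

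The plan is to analyze the population (full-batch) dynamics in three phases and then control minibatch noise.

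\emph{Phase 1 (escape, where the asymmetry helps).} At initialization $m(0)=\sum_j p_j u_j(0)$ with $u_j(0)\sim\mathcal{N}(0,r^2)$. Hence $|m(0)|\approx r\|\p\|_2$, and $\|\p\|_2=\Theta(1)$ for $\alpha>1$ because the mass concentrates on the top few skills. This is the key difference from the uniform case, where $\|\p\|_2=d^{-1/2}$ and the signal $m^{k-1}\sim d^{-(k-1)/2}$ matches the CSQ lower bound of \Cref{thm:sqlb}. With $m(0)=\Omega(1)$ with constant probability (and $r=\Theta(1)$), the drive $m^{k-1}$ is $\Omega(1)$. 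I would show that $|m|$ stays bounded below, by arguing that the head coordinates $u_1,\dots,u_{O(1)}$ move toward $\mathrm{sign}(m)\cdot(\cdot)$ at rate $\Theta(\eta p_1)$. This gives a self-reinforcing sign, and within $\tilde O(1/\eta)$ steps the head coordinates reach $\Theta(1)$ with a common sign.

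\emph{Phase 2 (head converges).} With $m$ and $Q$ of order one, I would show that $(Q,m)\to(1,\pm1)$, via a Lyapunov function such as $\sum_j p_j(u_j-c)^2$ together with the scalar ODE for $m$.

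\emph{Phase 3 (tail learning).} Once $Q^{k-1}\approx m^{k-1}\approx1$, each coordinate obeys approximately linear contraction $\dot u_j\approx -\eta k p_j(u_j\mp1)$. Coordinate $j$ then converges to accuracy $\varepsilon$ in time $\frac{1}{\eta p_j}\log\frac1\varepsilon\lesssim \frac{d^\alpha}{\eta}\log\frac1\varepsilon$, since $p_{\min}\asymp d^{-\alpha}$ up to a constant for $\alpha>1$. This matches the stated time bound. Tail errors perturb $Q$ and $m$ only by $O(\sum_j p_j|u_j\mp1|)$, which I would handle by induction on a potential like $\max_j|u_j\mp1|$. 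The population loss then equals a quadratic form weighted by $p$, giving $O(p\text{-weighted }\varepsilon^2)\lesssim d^{-\alpha}\varepsilon^2$ via the second-order expansion near the minimizer.

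\emph{Stochastic control.} Minibatch gradients for coordinate $j$ have mean $kp_j(\cdot)$. Their fluctuations scale like $\sqrt{p_j/B}$, since coordinate $j$ appears in roughly $Bp_j k$ samples and $|f|$ is bounded once $\|\w\|_\infty=O(1)$. To keep relative noise below $\varepsilon$ on the rarest coordinate over $T=\tilde O(d^\alpha/\eta)$ steps, a Bernstein bound plus a union bound over $d$ coordinates and $T$ steps requires $B\gtrsim \frac{d^\alpha\log(1/\delta)}{p_{\min}\varepsilon}$ up to logs. The total sample count $BT$ is then $\tilde O(d^{2\alpha}/(\eta\varepsilon))$ as stated, after adjusting which powers of $p_{\min}$ are absorbed. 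The step size $\eta\lesssim 1/(k^2\|\p\|_2)$ ensures discretization stability, because the Hessian on the head scales with $k^2p_1$.

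The main obstacle will be Phase 1 together with keeping $\|\w\|_\infty$ bounded throughout. One must rule out tail coordinates of the wrong sign, or initial large norms making $Q^{k-1}$ dominate and shrink everything toward zero, which is a saddle at $\w=0$ of order $k$. I would first try to prove monotonicity of $|m|$ once $|m(0)|\ge c\,r\|\p\|_2$, using that $\dot m=\eta k\big(\sum_j p_j^2\big)m^{k-1}-\eta kQ^{k-1}\sum_j p_j^2u_j$. If needed I would condition on the head coordinates' initialization, which occurs with constant probability, and boost to probability $1-\delta$ using the fact that $\delta$ enters only logarithmically, or accept the sign ambiguity.
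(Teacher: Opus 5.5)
You set things up the same way the paper does: the reparametrization $\bu=\w\odot\w^*$, the factorized loss $\frac12(Q^k-2m^k+1)$, the stationary points $\bm{0},\pm\w^*$, and $|m(0)|=\Theta(r\|\p\|_2)=\Theta(1)$ as the source of the separation. \textbf{The gap is the step you flag as the main obstacle, which is where the whole argument lives.} Along the entire trajectory you need a lower bound on $|m(t)|$ and an upper bound on $Q(t)$; the latter is also what bounds the Hessian and justifies $\eta$. Monotonicity of $|m|$ is not something you can expect globally: your own $\dot m$ contains the competing term $-\eta kQ^{k-1}\sum_jp_j^2u_j$, and near the minimizer $m$ may approach $\pm1$ from above. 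The head/tail phase decomposition with an induction on $\max_j|u_j\mp1|$ is only sketched. (The three-stage picture is the paper's main-text heuristic, not its proof.)

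\textbf{The missing idea is to use the loss itself as the Lyapunov function.} Take $r$ a small enough constant that $Q(0)\le|m(0)|$; this is possible since $Q(0)\approx r^2$ while $|m(0)|\approx r\|\p\|_2$. Then $\mathcal{L}(\w(0))\le\frac12(1-m(0)^k)<\frac12=\mathcal{L}(\bm{0})$. Since $\mathcal{L}\ge\frac12(1-2m^k)$ and the loss is nonincreasing at stable step sizes, $|m(t)|\ge2^{-1/k}|m(0)|$ for all $t$. The same sandwich gives $Q^k\le2m^k$, which with $m^2\le Q$ yields $|m|\le2^{1/k}$ and $Q\le2^{2/k}$, hence a Hessian bound of order $k^2\|\p\|_2$.

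The paper then proves a global PL inequality $\|\nabla\mathcal{L}\|_2^2\ge2kp_{\min}m^{2k-2}\mathcal{L}$, using $\D^2\succeq p_{\min}\D$ and an elementary inequality in $a=m^k$, $b=Q/m^2\ge1$. This gives linear convergence at rate $\eta kp_{\min}|m(0)|^{2k-2}$, i.e.\ the $\tilde O(\frac{d^\alpha}{\eta}\log\frac1\varepsilon)$ time bound, in one shot. There are no head-versus-tail phases and no tracking of the signs of individual tail coordinates. The lower bound on $|m|$ also gives $\|\w\|_\infty\le R$: since $|m/Q|^{k-1}\le|m|^{-(k-1)}$, the population gradient is restoring on any coordinate above $R-1$.

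Your endpoint and finite-sample bookkeeping also do not close.
\begin{itemize}
\item \emph{Loss bound.} From $\|\bu\mp\mathbf{1}\|_\infty\le\varepsilon$ the second-order expansion only gives $\mathcal{L}=O(\varepsilon^2)$, not $O(d^{-\alpha}\varepsilon^2)$. The paper argues in the opposite direction: it drives the loss to $\epsilon_1=O(p_{\min}\varepsilon^2)$ and then uses $\sum_jp_j(u_j-1)^2=(Q-m^2)+(m-1)^2=O(\epsilon_1)$ to obtain the $\ell_\infty$ bound.
\item \emph{Noise control.} Your per-coordinate noise bound uses an $O(1)$ residual, so it does not shrink as the iterate converges. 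If you want a relative guarantee on every coordinate, as your union bound over $d$ coordinates suggests, it fails on head coordinates that have already equilibrated. Their population gradient is near zero, while samples pairing them with unlearned tail skills still contribute noise.
\item \emph{Sample count.} The batch size you quote times $T=\tilde O(d^\alpha/\eta)$ is $\tilde O(d^{3\alpha}/(\eta\varepsilon))$, so ``adjusting which powers of $p_{\min}$ are absorbed'' hides a real mismatch.
\end{itemize}

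Writing $L_t=\mathcal{L}(\w(t))$, the paper instead controls the whole noise vector, asking only for $\|\xi_t\|\le\frac18\|\nabla L_t\|$. It does this with a vector Bernstein bound whose variance term is $O(L_t/B)$, because the residual $f-1$ has second moment $2L_t$. Combined with PL, this needs only $B\gtrsim\frac{L_t}{\|\nabla L_t\|^2}+\frac{1}{\|\nabla L_t\|}\lesssim\frac{1}{p_{\min}}+\frac{1}{\sqrt{p_{\min}\epsilon_1}}=\tilde O(d^\alpha/\varepsilon)$. The resulting descent $L_{t+1}\le L_t-\frac\eta3\|\nabla L_t\|^2$ then gives $BT=\tilde O(d^{2\alpha}/(\eta\varepsilon))$.
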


When the composition number is large compared to the power-law distribution exponent $(e.g., k\ge 4\alpha)$, we have a sample complexity or runtime improvement using power-law distribution compared to the uniform distribution. The proof is deferred to \Cref{appendix:upper_bound}.

\paragraph{Key Proof Idea.}
The proof idea is based on the gradient descent dynamics on the population loss, which is the expectation of the training loss. 
After some simplification, the first step update is close to the negative expected gradient $-\nabla_{\w} \mathcal{L}_\mathcal{D}$ at initialization:
\begin{align*}
    w_j(1)-w_j(0)&\approx-\eta\nabla_{w_j} \mathcal{L}(\w(0))\\
    &= \eta kp_j\qty(A(0)^{k-1}w^*_j - B(0)^{k-1}w_j(0)),
\end{align*}
where $A(t) = \sum_{i=1}^d p_iw_i(t)w^*_i, B(t) = \sum_{i=1}^d p_iw_i(t)^2$ are the weighted inner product and weighted norm with respect to the power law probability distribution. While with the power-law distribution, the probability $p_i$ of `head' skills that has a constant rank $i=O(1)$ is also constant. With an initialization scale of small constant $r$, $\qty|A(0)|\approx \Theta(r)\gg \qty|B(0)| \approx \Theta(r^2)$. Therefore, if we only keep the larger term $p_jA(0)^{k-1}w^*_j$, the initial gradient is approximately
\begin{align*}
    \nabla_{\w} \mathcal{L}(\w)
    \approx -k\diag(\p) A(0)^{k-1}\w^*,
\end{align*}
which is a constant large initial gradient for the head skills. That actually indicates both good initial landscape and even global loss landscape. With this initialization, we can prove a Polyak–Łojasiewicz condition \citep{karimi2016linear} s.t.
$$\norm{\nabla \mathcal{L}(\w(t))}_2^2 \gtrsim p_{\min} A(0)^{2k-2}\mathcal{L}(\w(t)).$$
This guarantees the convergence of population GD dynamics in $\tilde{O}(\frac{1}{\eta p_{\min}}\log \frac{1}{\varepsilon})$ time for $\varepsilon$ error. With the population dynamics, we can apply finite sample concentration analysis and prove the desired sample complexity requirement. 

\paragraph{Remark.} Note that the proof technique also apply for uniform distribution. However, under the uniform distribution, the probability $p_i=\frac{1}{d}$ for each skill leads to a very small initial $A(0)\approx O(\frac{1}{\sqrt{d}})$, which makes the initial gradient norm $\frac{1}{d^{\Omega(k)}}$. Thus, it takes $d^{\Omega(k)}$ time to escape from the initialization and leading to $d^{\Omega(k)}$ sample complexity. 

\begin{figure}[t]
    \centering
    \includegraphics[width=1.1\linewidth]{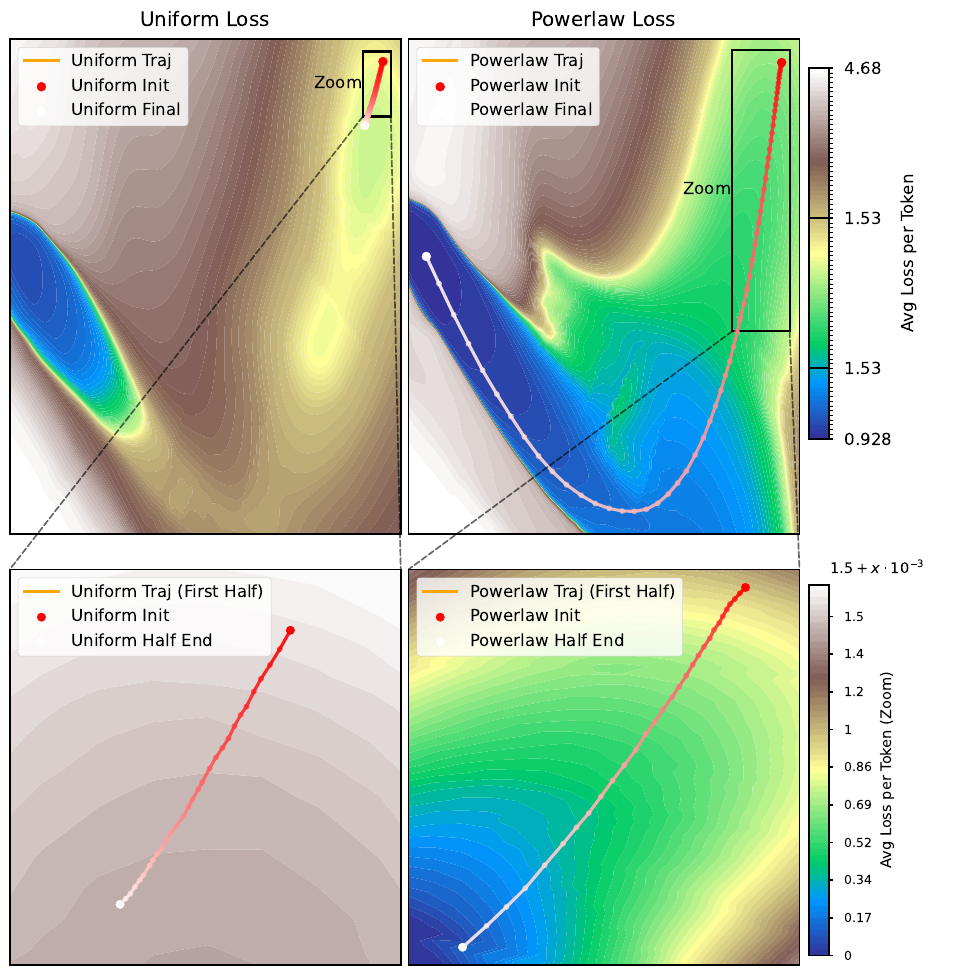}
    \caption{\textbf{Initial loss landscape comparison.} We plot the loss landscape under both uniform and power-law distribution, where the two directions are determined by PCA of both trained checkpoints along the trajectory. The training loss landscape based on power-law distribution has an apparently steeper slope as the descent direction, while training on uniform distribution fails to escape from the initial flat region.}
    \label{fig:escape_saddle}
    \vspace{-0.2cm}
\end{figure}
\subsection{Mechanistic understanding of the learning stages under power law}
\begin{figure*}[t]
    \centering
    \includegraphics[width=1\linewidth]{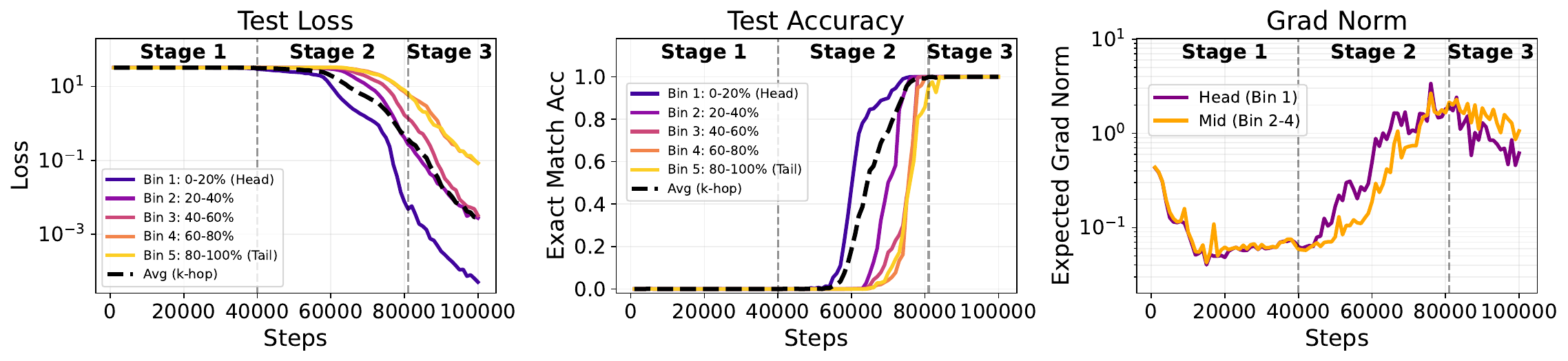}
    \caption{\textbf{Training dynamics under power law loss on $S_5$.} \textbf{Left:} Test loss in total and on subset with samples composed from different group of permutations (ordered by rank). \textbf{Mid.} Test accuracy of each group. \textbf{Right.} The gradient norm on samples that requires tail skills. When the head skills are learned after stage 1, the gradient norm increases and speeds up the learning of tail skills.}
    \label{fig:stage2_and_3}
\end{figure*}
Taking one step further at the theoretical training dynamics, we can observe some stage-wise phenomenon in the training process. First, the power-law distribution improves the initial landscape and leads to faster escape from the flat initial region. After escaping the flat region, the head skills
(e.g. $w^*_i$ with a constant skill index $i=O(1)$) are learned first at a faster rate, which in turn accelerates the learning process of the tail skills (e.g. $w^*_j$ with a large skill index $j=\Omega(d)$). In the final stage, all the skill hidden scalars are learned, but got slowed down by the low long-tail probability of sampling the tail skills. We further verify the stage-wise dynamics characterization with experiments and visualization on the $S_5$ state tracking task with the hop number $k=4$ to confirm the generality of the theoretical insights.\footnote{In this subsection, the order of the permutations used in the power law follows the lexicographical order. This is different from other experiments in this paper, where the order of skills are randomly ranked. We will discuss the order in \Cref{subsec:ablate_s5}.}

\paragraph{Stage I: Power law enables escaping from flat region.}
As mentioned in \Cref{subsec:upper_bound_proof}, the initial population gradient of the task is much larger when the training distribution follows power law compared to a uniform distribution. The larger gradient signal indicates a improved loss landscape via changing the training distribution, where a clearer descent direction to the lower loss region should exists. In contrast, the initial region of the loss of uniform distribution should be far flatter with a much smaller slope.

To verify the theoretical prediction, we take the training trajectories of the $S_5$ state tracking composition task, both power law and uniform distribution. We analyze the top-2 principal components of the difference between consecutive logged checkpoints $\theta_t-\theta_{t-\Delta t}$, and plot the loss landscape together with the training trajectory in \Cref{fig:escape_saddle}. Since the region near initialization is very flat, we zoom in the the initial region and draw denser contours near initialization for better visualization. The experiments verified our prediction: the power-law distribution induced a much better initial loss landscape, while training loss with uniform distribution is much flatter and harder to optimize by gradient methods.

\paragraph{Stage II: Head skills help the tail.} 
Though the initial gradient signal is large enough to escape the flat region, the hidden scalars behind the skills are not learned simultaneously. Actually, the head skills are learned first, which further accelerates the training of the tail. Now we consider the initialization scale as a small constant $r\ll 1$. Recall the expected gradient update (negative population gradient):
\begin{align*}
    w_j(t+1)-w_j(t)
    = \eta kp_j\qty(A(t)^{k-1}w^*_j - B(t)^{k-1}w_j(t))
\end{align*}
According to the gradient formula,
the head skills with $p_i=\Theta(1)$ will grow to a large constant at first from the initialization scale $r$. That will increase the value of the weighted similarity $A(t)$ from $O(r)$ to a large constant $O(1)$, which significantly increases the signal term $kp_jA(t)^{k-1}w^*_j$ in the gradient. So after the initial escape from the flat region, the power law induces an implicit curriculum enables fast learning of high-frequency skills, which helps the training on the scarce long-tail skills in return.

Experiments on $S_5$ also confirm the acceleration effect after the first stage, as shown in \Cref{fig:stage2_and_3}.  We separate the group elements (i.e. permutations) in the order of rank into five bins, with rank 0\%--20\%, 20\%--40\%, etc. After the test accuracy of samples where all $k=4$ elements are in the first bin exceeds a threshold (0.1\%), we consider the training enters stage 2. To check if the learned skills help to accelerate the training of the tail skills, we measure the gradient norm of samples with only one input permutation in the tail bin, while others are sampled from other bins of input permutations. We compare the following two possible sample sets to compute the expected gradient: (1) all three other permutations are in bin 1, where all the head permutations lie in; (2) the other three permutations are sampled from bin 2 to 4. As shown in \Cref{fig:stage2_and_3} (Right), with head input permutations of Bin 1 in the input sequence, the expected gradient norm of such samples is much larger  than the gradient norm of the second group in stage 2. That confirms that once the head skills are learned, they can indeed make the learning of the tail skills much faster in the compositional tasks.

\paragraph{Stage III: Long-tail convergence.} When all the skills are learned from scratch and get non-trivial accuracy, the training eventually enters the convergence phase. In this stage, our original intuition finally comes in: the long-tail effect of the rare skills causes slow final convergence of the training loss, since they appear at a much lower frequency. In the theory setting, skills with rank $\gtrsim  \Omega(d)$ all have small probability $O(\frac{1}{d^\alpha})$ of being sampled, which indeed slows down the final convergence. Similar phenomenon is observed in the state tracking task.
The test loss on the composition tasks on the tail permutation (Bin 4, 5) grows much slower than the head (Bin 1). Despite the final training inefficiency, the first two stages already ensure the successful learning of compositional reasoning tasks.

\begin{figure*}[t]
    \centering
    \includegraphics[width=0.95\linewidth]{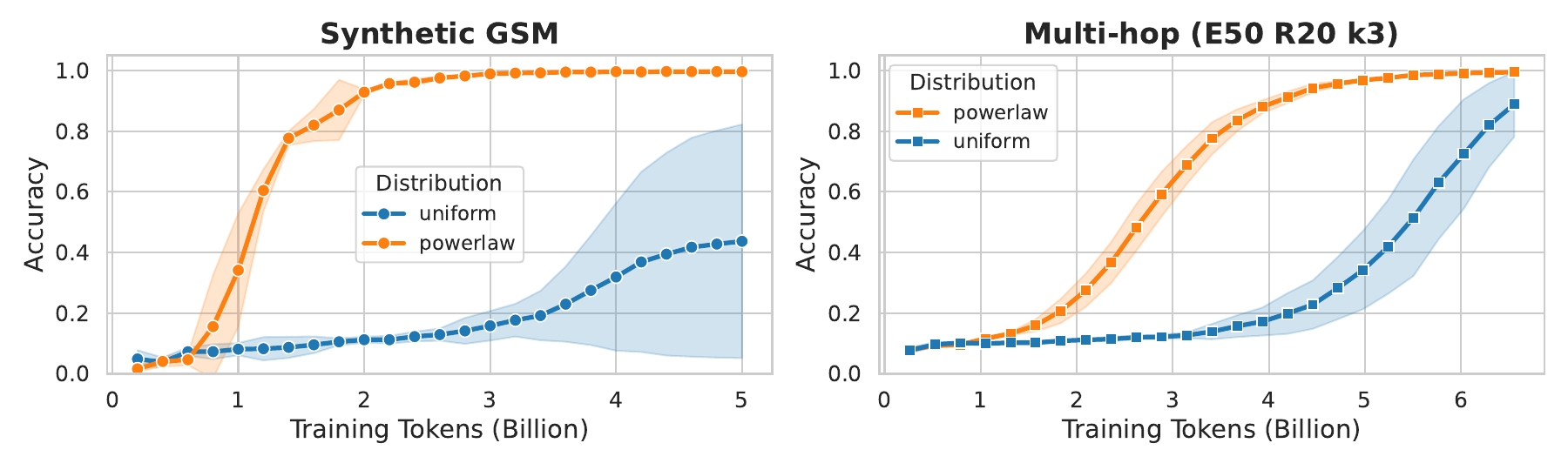}
    \caption{\textbf{Left.} Test accuracy on synthetic iGSM data. The operations are restricted within 2-8. All arithmetic calculation are done with modulo $p=211$. \textbf{Right.} A multi-hop task with $|E|=50$ individuals, with each person has $|R|=20$ relations, and with hop $k=3$.}
    \label{fig:igsm_and_multi_hop}
\end{figure*}

\subsection{Ablation Studies}
\label{subsec:ablate_s5}
\paragraph{Effect of the exponent $\alpha$.} We first ablate on the exponent $\alpha$ to see its effect on the training speed up. According to the theory, we need $\alpha > 1$ as a sufficient condition for efficient convergence on $k$-multiplicative composition task. Here, we explore the generality and necessity of different $\alpha>0$ to test the general effectiveness of the power-law distribution. The experiment details and discussions are in \Cref{appendix:exponent}. The takeaways are (1) \textit{large enough $\alpha$ is necessary} for efficient learning for hard composition tasks (2) though \textbf{larger} $\alpha$ leads to \textbf{faster training in the head}, the tail skills' learning will be slowed down due to smaller sampling probability. The result echoes our theory and indicates that there is a trade-off on the exponent $\alpha$.

\paragraph{The granularity of asymmetry.} 
Our results show that power-law distribution is a sufficient condition for successful training on compositional reasoning tasks. Although the analysis does not rule out other asymmetric distributions that enable LLMs to acquire composition capabilities, we conjecture that \textit{better `granularity of asymmetry' may lead to better training loss landscape, which further accelerates training.} Power law is an example of fine-grained asymmetry. We also tried several different, more coarse-grained power-law distributions in \Cref{appendix:granularity}. To be specific, we divide the $|S_5|=120$ permutations into $m=\{5,10,20,40,60\}$ bins in lexicographical order, assign the sum probability for different bins with the power law, but keep the individual skill probability in each bin uniform. The larger $m$ is, the distribution is more fine-grained and closer to original power law. 
The initial experiments show that more fine-grained the distribution is, the faster the model learns. We leave a precise study on the effect of granularity to future work. 

\paragraph{The order of skills.} Note that the skills are similar but not entirely equivalent: there are some `easy' or more `fundamental' skills, such as the identity permutation in $S_5$. The order of different skills in the power law sampling process may lead to different results. For example, putting easier skills at the beginning will significantly increase the skill frequency, which may induce an implicit easy-to-hard curriculum and further improve the performance.  In \Cref{appendix:order_of_s5}, we show that (1) \textit{the order matters}: default lexicographical order of the numbers or permutations learn much faster with the same exponent $\alpha$; (2) \textit{power law significantly helps optimization} under a \textbf{random} order of permutations, or even a reversed lexicographical order. In summary, the asymmetric power law still accounts for the improvement that makes the state tracking composition task learnable, while the advantage can be strengthened by a designed/structured order of skills. The experiments across multiple different order of permutations rules out the possibility that the benefits come from a special ordering and verifies the effectiveness of the data asymmetry itself.

\paragraph{Compatibility with explicit curriculum.} We further show that the power-law distribution is naturally compatible with certain curriculum learning and may lead to a potentially better training strategy. In \Cref{appendix:curriculum}, we experiment with an explicit curriculum based on the number of hops $k=1,2,3,4$ following \citet{wang2025learning}, which creates an easy-to-hard supervision over the task complexity. Besides the uniform distribution baseline, we also try a power law distribution experiment to test the conjecture. The experiments show that even with a curriculum, training with uniform distribution still exhibits noticable loss plateaus during training. By contrast, the power law training distribution further enhances the training and substantially reduces the plateaus, leading to faster training by improving the loss landscape along the training trajectory. 

\section{Experiments}
To further test our understanding, we conduct experiments on several natural language reasoning tasks to test the advantage of power-law distribution. We considered two natural language synthetic tasks, Multi-hop Question Answering \citep{yao2025language} and Synthetic Grade School Math problems \citep{ye2024physics,zhou2025gsm}.
In all experiments, only the training distribution is altered from uniform distribution to a power-law distribution. The test sets are all sampled from the uniform distribution of skills. Without specification, we pick the exponent $\alpha=1$ for the power law. The order of different skills is chosen randomly, which is used in the power law sampling process.

\begin{figure*}
    \centering
    \includegraphics[width=0.41\linewidth]{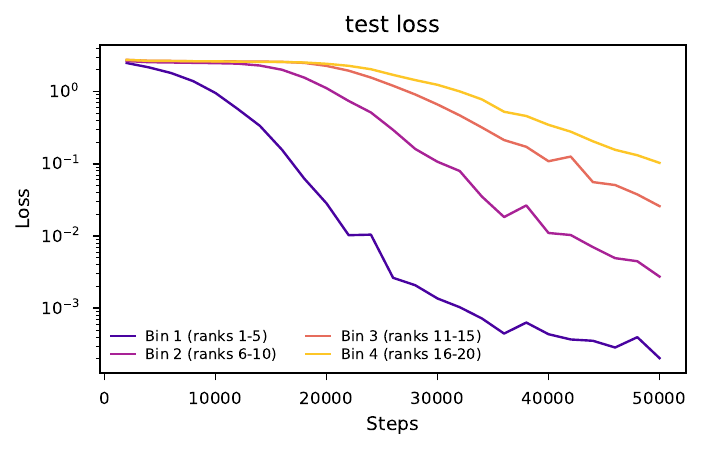}
    \includegraphics[width=0.57\linewidth]{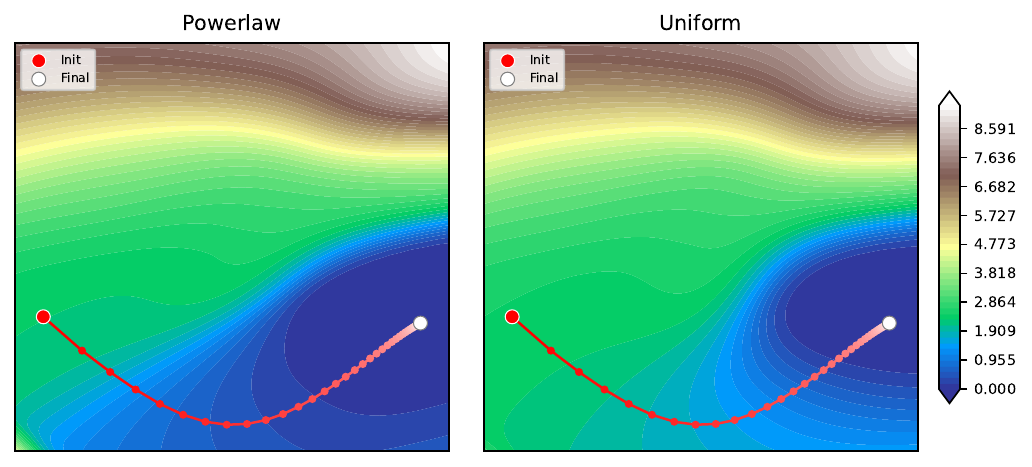}
    \caption{\textbf{Left.} Test loss of different samples on subset with samples composed from different group of permutations (ordered by rank). \textbf{Right.} Loss landscape comparison for multi-hop QA tasks. Power law loss landscape  still has a steeper slope, which indicates the generality of the benefit of the power law distribution for training.}
    \label{fig:multi-hop_qa-visualization}
    \vspace{-0.2cm}
\end{figure*}

\paragraph{Multi-hop QA.}  We first consider a natural language reasoning task proposed in \citet{yao2025language} as a testbed of implicit reasoning capabilities of language models \citep{yao2025language, wang2024grokked, ye2025transformers}. The task is based on synthetic facts on relations $\mathcal{R}$ (e.g. teacher, instructor, etc.) between $|\mathcal{E}|$ different individuals (Alice, Bob, etc.). As a concrete example, the facts are like \textit{The instructor of Alice is Bob} and \textit{The teacher of Bob is Carol}. The target is to answer multi-hop queries like \textit{`Who is the teacher of the instructor of Alice?'}, where each relation in the question is considered as a single \textit{hop}. 

We can interpret the relations as a dependency graph: each individual as a node, and each relation as a colored edge. During training, we first fix an underlying dependency graph. The training data is a mixture of the profile fact data (1-hop) and the query questions ($k$-hop). We train a small GPT-like model with online sampled data. The skills we considered are the \textit{relation}s $r\in \mathcal{R}$. With a randomly assigned order, we construct QA pairs composing $k$ skills sampled under the power-law distribution.

\citet{yao2025language} find that transformer-based language models require training data that grows exponentially in $k$ to learn implicit $k$-hop reasoning without curriculum, where each \textit{hop} of the query is sampled uniformly. Our experiments in \Cref{fig:igsm_and_multi_hop} (right) show that with the power-law distribution, the models learn much faster across different numbers of entities $|E|$ and hop number $k$ with fewer samples, exhibiting the empirical advantage of the power-law distribution in implicit multi-task reasoning tasks. Moreover, we plot the test loss on subset with samples composed from different groups of permutations (ordered by the probability assigned by the power law) and the loss landscape with results similar to the experiments in the $S_5$ experiments, which further generalizes our mechanistic findings in the previous synthetic setting. Additional experiments and details are deferred to \Cref{appendix:multi-hop_qa}. 

\paragraph{Synthetic Grade School Math Problems.} We finally test our conjecture on synthetic grade-school level math problems in natural language. Since grade school math problems often involve multiple variables with some simple dependencies, each problem can be seen as a composition of basic arithmetic operations on the dependency graph. Following \citet{ye2024physics} and \citet{zhou2025gsm}, we consider a layered structure of categories and items for problem generation, and map the graph to natural language through synthetic templates. For simplicity, we restrict the dependency graph to 8 operations. We consider each number as a skill for power-law distribution sampling. We consider both arithmetic computation with and without modular operation with a prime number $p$ in each problem. \Cref{appendix:gsm} has more details on the data generation process.

As our experiments show in \Cref{fig:igsm_and_multi_hop} (left), the model learns much faster with the underlying power-law distribution compared with uniform sampling on the numbers. The advantage of the power-law training  generally holds true whenever the arithmetic has modulo $p$ or not. We also considered a synthetic template introducing some calculation combining two consecutive steps to mimic real world thinking trace. As shown in \Cref{appendix:gsm}, the model trained under the power-law distribution always outperforms their counterparts trained with uniform distribution with or without such multi-hop structure, showing the robustness of the improvement brought by the power-law distribution.

\section{Related Works}
\textbf{Compositional reasoning in LLMs.} Numerous tasks in natural language involves complex composition of atomic skills \citep{arora2023theory, zhao2024can, chen2023skill, liu2025physics, ortiz2023task} and involves multi-hop reasoning \citep{zhong2023mquake, zhang2024enhancing, ju2024investigating} where most of the composition structure is implicit. Recent works have shown that language models struggle in implicit reasoning without the intermediate thinking process or chain-of-thought (CoT) \citep{yang2024large, allen2023physics, press-etal-2023-measuring, kassner2020pretrained, yao2025language}. \citet{allen2023physics} showed that language models struggle to solve many simplest multi-hop knowledge manipulation tasks. \citet{yao2025language} also confirmed the inherent hardness in multi-hop QA tasks, implying exponentially many QA-pair data are required without CoT. 

\textbf{Hardness of learning composition.}
Several lines of literature tried to understand the difficulties in compositional reasoning capabilities of language models and the possible solutions to this issue \citep{wen2024sparse,kim2024transformers,huang2025transformers, huang2025transformers2,yao2025language, wang2024grokked, ye2025transformers, wang2025learning}. From the theoretical side, many symbolic synthetic tasks \citep{li2024chain, merrill2023expresssive, liu2023transformers, merrill2023parallelism, sanford2023representational, wang2025learning, peng2024on, chen2024theoretical, renlearning} are proposed to model compositional tasks, aiming to understanding difficulties both from the expressiveness \citep{peng2024on, chen2024theoretical} and the learning perspective \citep{wen2024sparse,kim2024transformers,wang2024grokked}. Specifically, lower bounds on compositional reasoning tasks under \textit{uniform distribution}, such as parity \citep{wen2024sparse, kim2024transformers} and $k$-hop state tracking task in \citet{li2024chain, wang2025learning, huang2025transformers}, show that either exponentially many data or training steps are required to learn the task without any intermediate supervision or CoT. Our work reveals that the underlying distributional assumption (i.e. uniform or isotropic distribution) is essential in those lower bounds. We show that breaking the symmetry of the training distribution (e.g. using a power-law distribution) actually enables the model to learn the hard functions without intermediate supervision.

Another series of efforts tried to approach the problem via controlled experiments and mechanistic understanding \citep{yao2025language, wang2024grokked, ye2025transformers, biran2024hopping, li2024understanding,yao2025cake,kassner2020pretrained}.
\citet{wang2024grokked} and \citet{ye2025transformers} showed that transformer-based language models can only compositionally generalize to multi-hop queries, but cannot naturally compose atomic facts nor generalize to out-of-distribution data that did not appear in the multi-hop training data. \citet{yao2025cake} have found that the atomic knowledge is stored in different layers when the knowledge is used in different hops of composition, leading to the failure of generalization in composition. Architectural explorations on parameter sharing \citep{wang2024grokked, zhu2025scaling} can mitigate some of the issues but they cannot completely solve the training difficulties.

\textbf{Asymmetric data breaks hardness.}
Before LLMs, many papers have theoretically investigated the learning dynamics of shallow neural networks on synthetic tasks like single/multi-index models, parity, and other boolean functions, mostly under the assumption of isotropic and uniform distribution \citep{damian2022neural, barak2022hidden}. Under such data distribution assumption, intermediate supervision like curriculum \citep{abbe2021staircase, abbe2022merged, abbe2023provable} are necessary to learn the target function efficiently. 

Some recent works have moved beyond the standard isotropic/uniform distribution and showed potential improvement for more efficient learning \citep{daniely2020learning, cornacchia2023mathematical, mousavi2023gradient, cornacchia2025low}. For the learning parity task, \citet{daniely2020learning, cornacchia2023mathematical} exhibited that a biased distribution can enable more efficient learning than uniform inputs. For learning single index models, \citet{mousavi2023gradient} demonstrated that a Gaussian training distribution with a spike-covariance structure can lead to improved sample complexity guarantee independent of the information-exponent\footnote{A hardness measure of target functions~\citep{pmlr-v75-dudeja18a}, which is similar to hop number.}. \citet{cornacchia2025low} further showed that by introducing a symmetry-breaking random perturbation to the data distribution, Gaussian single-index model becomes efficiently learnable. Our results can be seen as a generalization to the compositional tasks with a natural power law data distribution. The results potentially provide guidance for better training recipes for natural language reasoning tasks.
\citet{medvedev2025shift, medvedev2026positive} further found that distribution shift can be beneficial to test performance due to mismatched training proportions.

\textbf{Skill composition in LLMs.} Recent works have attempted to understand the skill composition capabilities of LLMs \citep{didolkar2024metacognitive, arora2023theory, yu2023skill,chen2023skill,zhao2024can,michaud2023quantization}. \citet{michaud2023quantization} hypothesize the power-law distribution of the skills or quanta to explain the power law loss scaling curve. \citet{arora2023theory} established theoretical foundations for the emergence of skills composition. \citet{yu2023skill} and \citet{zhao2024can} benchmarked the skill-composition ability of the LLMs and investigated methods to elicit the capability by supervised finetuning. \citet{didolkar2024metacognitive} further showed the evidence that LLMs have metacognitive knowledge for skills of LLMs.

\section{Conclusion and Future Work}
In this paper, we provide an alternative perspective on the criteria of selecting effective data distribution on reasoning tasks when training language models. On many synthetic compositional reasoning tasks like state tracking, arithmetic, multi-hop QA, and grade-school math problems, we empirically show that power law significantly accelerates training compared to using a uniform distribution. Our results take the first steps towards understanding why the asymmetry of power law enables learning of complex compositional tasks via theoretical and empirical mechanistic approach.

There are some limitations of this work and future directions. Most of the experiment settings in our paper are based on the pretraining setting on algorithmic or synthetic natural language tasks. We look forward to future explorations on improving real-world skill composition capabilities as in \citet{yu2023skill}  by reweighting the data distributions to power law, or composing new skills (e.g. agentic skills/tool calls) missing in the natural language data by synthesizing data with certain distributions. 

\section*{Acknowledgements}
We thank Nathan Srebro and Zhiyuan Li at the Toyota Technological Institute at Chicago (TTIC), for their early discussions and feedback to this project. ZW thank Zixin Wen at CMU for valuable suggestions in experiment design. 
JDL acknowledges support of the NSF CCF 2002272, NSF IIS 2107304, NSF CIF 2212262, ONR Young Investigator Award, and NSF CAREER Award 2144994.

\section*{Impact Statement}
This paper presents work whose goal is to advance the field
of Machine Learning. There are many potential societal
consequences of our work, none which we feel must be
specifically highlighted here.

\bibliographystyle{icml2026}

\newpage
\appendix
\onecolumn
\section{Omitted proofs}
\subsection{Theory Settings}
To understand why a non-uniform data distribution helps learning compositional tasks, we aimed to find the simplistic setting where models the compositions of different skills. Inspired by the single-index model and parity tasks, we consider the following toy task with `composing' $k$-scalar skills/functions.

\paragraph{Setting} Fix skill number $d\in\mathbb{N}$ and integer `composition' degree $k\ge2$. Let $I_1,\dots,I_k \overset{iid}{\sim} \{p_i\}_{i=1}^d$ on $[d]$. Let $\x_1,\dots,\x_k\in \{\e_1,...,\e_d\}$ be one-hot vectors with $\x_t=\e_{I_t}\in\R^d$. The training distribution is either uniform $\text{Unif}([S])$ or Zipf($\alpha$) distribution
\[
\Pr[I_i=j]=p_j,\qquad p_j=\frac{j^{-a}}{H_{d,a}},\quad
H_{d,a}:=\sum_{t=1}^d t^{-a},\quad a> 1.
\]
Here $\x_i$ are the `name' of the functions and define the diagonal matrix $\D=\diag(p_1,\dots,p_d)$ as the frequency that each function is sampled. We aim to show the difference between the training distributions.

\paragraph{Model.}
For $\w\in\R^d$, we define the function composition model as
\[
f(\w,\X)=\prod_{i=1}^k (\w^\top \x_i),\qquad
y=f(\w^\star,\X), \X = (\x_1,...,\x_k)
\]
where $\w^\star\in\R^d$ is fixed. Here, we see $f$ as the function composition as
$$g_{\w}(\x_k)\circ g_{\w}(\x_{k-1}) \circ \cdots \circ g_{\w}(\x_1)$$
where $g_{\w}(\x_i)(x) = (\w^\top \x_i)x$ with $x$ as the input scalar.

\paragraph{Training objective.} We consider square loss as the training objective. The loss on a single sample $(\X^{(i)},y^{(i)})$ is
\[
\ell(\w;\X^{(i)})=\frac12\big(f(\w,\X^{(i)})-y\big)^2, \X^{(i)} =\qty(\x_1^{(i)},...,\x_k^{(i)}).
\]
The population and empirical loss are
\[
\mathcal{L}(\w)=\frac12\E_{\X}\big[(f(\w,\X)-f(\w^\star,\X))^2\big], \qquad \hat{\mathcal{L}}(\w) = \frac{1}{n}\sum_{i=1}^n(f(\w,\X^{(i)})-f(\w^\star,\X^{(i)}))^2.
\]

\subsection{SQ lower bound under uniform inputs}
\label{appendix:sq_lower_bound}
In this section we can show that when the distribution is uniform, we need $d^{\Omega(k)}$ samples or exponential runtime $t\ge 2^{\Omega(d)}$ using a CSQ lower bound argument. 

We consider the base function class $\mathcal{F}=\{f(\w,\cdot): \w \in \{\pm 1\}^{d}\}$ with $\w$ on the unit hypercube. We define the inner product for each two functions $\w_1, \w_2$ as
$$\langle f_{\w_1},f_{\w_2}\rangle=\E_X[f(\w_1,\X)f(\w_2,\X)] = \E_{\X}\prod_{i=1}^k (\w_{1}^\top \x_i \x_i^\top \w_{2}) = \qty(\frac{\w_1^\top \w_2}{d})^k$$
which satisfies $\E_{\X}[f(\w_1,\X)^2]=1$. Since we considered a square loss here, the gradient is
$$\nabla_{\w} \ell(\X) = (f(\w^*,\X)-f(\w,\X))\nabla_{\w} f(\w,\X)$$
where the query satisfies the correlational query form $q(\x,y)=yf(\x)$.

In order to prove the correlational statistical query lower bound, we first construct a function class $\mathcal{F}_k$ such that the inner product query provide little information about the target function. We can prove the following lemma by concentration.
\begin{lemma}
There exists an absolute constant $c$ such that for any $\varepsilon>0$, there exists a subset $S$ of 
$ce^{\frac{1}{4}\varepsilon^{2}d}$ vectors on the hypercube $\{-1,1\}^d$, such that for any $\w_1,\w_2\in S$ with $\w_1\neq \w_2$, we have
\[
\qty| \frac{\w_1^\top \w_2}{d}| \le \varepsilon.
\]
\end{lemma}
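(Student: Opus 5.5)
We prove the lemma with the probabilistic method: sample $N$ independent uniformly random vectors on the hypercube, bound the chance that any pair has a large normalized inner product, and choose $N$ so that this chance is below one.

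\textbf{Tail bound for one pair.} Draw $\w_1,\dots,\w_N$ independently and uniformly from $\{-1,1\}^d$. Fix a pair $a\neq b$. The inner product $\w_a^\top\w_b=\sum_{j=1}^d w_{a,j}w_{b,j}$ is a sum of $d$ i.i.d.\ Rademacher variables, since the product of two independent uniform signs is again a uniform sign. Hoeffding's inequality gives
\[
\Pr\qty[\qty|\frac{\w_a^\top \w_b}{d}|>\varepsilon]\le 2e^{-\varepsilon^2 d/2}.
\]

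\textbf{Union bound over pairs.} There are fewer than $N^2/2$ pairs, so the probability that some pair violates the bound is less than $N^2 e^{-\varepsilon^2 d/2}$.

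\textbf{Choice of $N$.} Take $N=\lfloor c\,e^{\varepsilon^2 d/4}\rfloor$ with $c<1$, for instance $c=1/2$. Then $N^2e^{-\varepsilon^2 d/2}\le c^2<1$. Hence, with positive probability, every pair satisfies $|\w_a^\top\w_b|/d\le\varepsilon$. On that event the $\w_a$ are pairwise distinct: two equal vectors would have normalized inner product $1>\varepsilon$ whenever $\varepsilon<1$. So the $N$ vectors form a valid set $S$.

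\textbf{Edge cases.} If $\varepsilon\ge 1$, the condition holds for every pair, because $|\w_1^\top\w_2|\le d$ always. In that case we take $S=\{-1,1\}^d$. The size requirement is then met when $ce^{\varepsilon^2 d/4}\le 2^d$, which holds for $\varepsilon$ below an absolute constant. For larger $\varepsilon$ the stated count can exceed $2^d$, so the lemma should be read for $\varepsilon\in(0,1)$, which is the only range used later. When $ce^{\varepsilon^2d/4}<2$ the claim is trivial, since a single vector suffices.

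\textbf{Main obstacle.} There is essentially none. The only care needed is matching the exponent: the union bound costs a factor of two in the exponent, which is why the lemma has $1/4$ where Hoeffding gives $1/2$.
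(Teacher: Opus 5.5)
Your proposal is correct and follows the paper's argument: Hoeffding's inequality for the sum of $d$ independent Rademacher products, a union bound over the $O(N^2)$ pairs, and a choice of $N$ of order $e^{\varepsilon^2 d/4}$ so that the failure probability falls below one. Your two additional remarks are also correct and fill small gaps the paper's proof leaves open: the sampled vectors are distinct on the good event when $\varepsilon<1$, and the literal statement cannot hold for large $\varepsilon$ because the required count would exceed $2^d$.
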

\begin{proof}
    Consider two random vectors $\w_1,\w_2$ sampling from the hypercube. By Hoeffding's Inequality, we have
    \begin{align*}
        \P\qty[\qty|\langle\w_1, \w_2\rangle|\ge d\varepsilon] = \P\qty[\qty|\sum_{i=1}^d w_{1i}w_{2i}|\ge d\varepsilon] \le 2\exp{-\frac{1}{2}d\varepsilon^2}.
    \end{align*}
    By union bound, $N$ random rademacher vectors satisfy that for each pair of vectors has inner product $\le d\varepsilon$ with probability
    $$\P\qty[\qty|\langle\w_i, \w_j\rangle|\ge d\varepsilon, \forall i,j\in [N]] \le \sum_{i<j}\P\qty[\qty|\langle\w_i, \w_j\rangle|\ge d\varepsilon] \le 2N^2 \exp{-\frac{1}{2}d\varepsilon^2}.$$
    We can pick $N = O(\exp(\frac{1}{4}d\epsilon^2))$ and the probability is less than 1, which means there exists such subset with the inner product of each pair of vectors satisfying $\qty| \frac{\w_1^\top \w_2}{d}| \le \varepsilon.$
\end{proof}

Therefore, for any number of queries $q$, we can find $q$ unit vectors s.t. their pair-wise inner product are upper bounded by $d^{-1/2}\sqrt{\log q}$. Given the definition of the inner product, when $q=\poly(d)$ we have
$$\lvert\langle f_{\w_1},f_{\w_2}\rangle\rvert =(\w_1^\top \w_2)\lesssim d^{-k/2}.$$
Following standard CSQ arguments (Lemma 5 in \citet{damian2022neural}), we directly have the theorem below. This is a restatement of \Cref{thm:sqlb} in the main paper.

\begin{theorem}[CSQ lower bound for uniform]\label{appendix_thm:sqlb}
Let the input distribution be uniform $p_j=1/d$ and $k\ge2$. There exists a function class $\mathcal{F}_k$ and a constant $\epsilon = \Omega(1)$ such that any correlational statistical learner using $q$ queries requires a tolerance $\tau^2\le \qty(\frac{\log(dq)}{d})^{k/2}$ to achieve loss $\mathcal{L}(\w)\le \epsilon.$
\end{theorem}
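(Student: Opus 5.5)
The plan follows the standard CSQ argument: build a large family of targets in $\mathcal{F}$ that are almost pairwise orthogonal under the uniform input distribution, then invoke the generic CSQ lower bound (Lemma 5 of \citet{damian2022neural}). That lemma says that if a class contains $m$ functions with unit norm and pairwise correlations at most $\gamma$, then any CSQ learner with $q$ queries and tolerance $\tau$ fails to reach small loss unless $q \gtrsim m\,\tau^2/\gamma$ or $\tau^2 \lesssim \gamma$.

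\textbf{Constructing the near-orthogonal family.} We will use the preceding lemma with $\varepsilon = \sqrt{4\log(cdq)/d}$, which yields a set $S\subset\{\pm1\}^d$ of size $m \ge ce^{\varepsilon^2 d/4} \gtrsim dq$ with $|\w_1^\top\w_2|/d \le \varepsilon$ for all distinct $\w_1,\w_2\in S$. Set $\mathcal{F}_k = \{f_{\w}:\w\in S\}$.

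\textbf{Computing the correlations.} Under uniform inputs the $\x_i$ are i.i.d. and $\E[\x_i\x_i^\top] = \I/d$, so $\langle f_{\w_1},f_{\w_2}\rangle = (\w_1^\top\w_2/d)^k$. This gives $\|f_{\w}\|^2 = 1$ and pairwise correlations at most $\gamma := \varepsilon^k \lesssim (\log(dq)/d)^{k/2}$.

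\textbf{Applying the CSQ lemma.} Take $\epsilon$ to be a constant below the loss of the zero predictor, which is $1/2$. We then argue that the adversary can answer every query $\phi$ with $\E[\phi\cdot 0]=0$, and that each query is inconsistent with at most $\gamma/\tau^2$ of the candidate targets. This follows from a Cauchy--Schwarz / Gram-matrix bound on $\sum_{\w\in S}\langle \phi,f_{\w}\rangle^2$. Hence if $q\,\gamma/\tau^2 < m$, some target in $S$ remains consistent with every answer, and the learner cannot achieve loss below $\epsilon$ on it. Since $m \gtrsim dq$, the conclusion is that the learner needs $\tau^2 \lesssim \gamma \approx (\log(dq)/d)^{k/2}$.

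\textbf{Sample complexity.} Simulating a query of tolerance $\tau$ by empirical averages takes $n \approx 1/\tau^2$ samples. This gives $n \gtrsim d^{k/2}$ up to log factors whenever $q \lesssim d^{k/2}$.

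\textbf{Main obstacle.} The delicate step is the Gram-matrix counting bound on how many targets a single query can rule out. Because the pairwise correlations are nonzero (though small), we must check that the Gram matrix of $\mathcal{F}_k$ has operator norm $\le 1 + m\gamma$, and choose the parameters so that $m\gamma$ is controlled. If $m\gamma$ is too large, we will simply shrink $S$ to size about $1/\gamma$; this still exceeds $q$ in the regime $q\lesssim d^{k/2}$.

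A secondary point is the constant-loss threshold. It requires $\|f_{\w}-f_{\w'}\|^2 = 2 - 2\langle f_{\w},f_{\w'}\rangle \ge 2 - 2\gamma$. This ensures that no single output can have small loss against many targets in $S$ simultaneously, which justifies taking $\epsilon = \Omega(1)$.
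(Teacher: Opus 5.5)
Your route is the same as the paper's: a Hoeffding packing of $\{\pm1\}^d$, the identity $\langle f_{\w_1},f_{\w_2}\rangle=(\w_1^\top\w_2/d)^k$, then Lemma~5 of Damian et al., which the paper uses as a black box. The problem is that when you unpack that lemma the counting is wrong, and with your parameters the argument does not reach the stated tolerance for $k\ge 3$.

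A single query cannot be inconsistent with ``at most $\gamma/\tau^2$'' targets. In the relevant regime $\tau^2>\gamma$ that number is below $1$, yet the bounded query $\phi=f_{\w}$ with $\w\in S$ rules out $f_{\w}$ itself. The correct count applies Gershgorin only to the Gram matrix of the ruled-out set $T=\{\w\in S:|\langle\phi,f_{\w}\rangle|>\tau\}$. With $s=|T|$ and $\|\phi\|\le 1$,
\[
s\tau^2<\sum_{\w\in T}\langle\phi,f_{\w}\rangle^2\le 1+(s-1)\gamma ,
\]
so $s<1/(\tau^2-\gamma)$. This is the form $q\ge m(\tau^2-\gamma)/2$ of the lemma, not $q\gtrsim m\tau^2/\gamma$.

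Consequently, at least two targets survive all-zero answers whenever $\tau^2>\gamma+O(q/m)$. With your $m\approx dq$ you only get $\tau^2\lesssim \gamma+1/d$, and for $k\ge 3$ the $1/d$ term dominates $(\log(dq)/d)^{k/2}$. The spurious factor $1/\gamma$ in your version of the lemma is what made $m\gtrsim dq$ look sufficient.

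The fix is to take the family much larger, $m\gg q/\gamma$. The packing lemma permits this because its size is exponential in $\varepsilon^2 d$. Choose $\varepsilon^2=8k\log(dq)/d$ (valid while $8k\log(dq)\le d$). Then $m\ge c\,(dq)^{2k}$ while $\gamma=\varepsilon^k=O_k\bigl((\log(dq)/d)^{k/2}\bigr)$, and success forces $\tau^2<2\gamma$.

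Your fallback in the ``main obstacle'' goes the wrong way. Bounding the Gram matrix of the whole family by $1+m\gamma$ and shrinking $S$ to size about $1/\gamma$ yields at best $\tau^2\lesssim q\gamma$. That loses a factor of $q$ and makes the bound vacuous at $q\approx d^{k/2}$. Once the Gram bound is applied only on $T$, no control of $m\gamma$ is needed.

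Two smaller corrections:
\begin{itemize}
\item You need at least two surviving targets, not one, since the learner outputs the same hypothesis for all of them.
\item With the paper's $\tfrac12$-normalized loss, the separation $\|f_{\w}-f_{\w'}\|^2\ge 2-2\gamma$ and the triangle inequality give the threshold $\epsilon<(1-\gamma)/4$, not ``below the zero-predictor loss $1/2$''.
\end{itemize}
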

Using the standard $\tau\approx \frac{1}{\sqrt n}$ concentration heuristic, where $n$ is the sample complexity, the theorem implies that either runtime is exponential in $d$ or sample size $n$ must be at least $\widetilde\Omega\qty(d^{k/2})$, which is unacceptable when the number of skills $d$ is large and multiple hops of composition are required. Therefore, training under uniform distribution suffers from the computational gap and fails on such compositional task.

\subsection{Power law helps on sample complexity}
\label{appendix:upper_bound}
The previous section has shown that uniform distribution hinders training when the compositional structure exists in the task by proving the statistical query lower bound. However, the lower bound only holds when the data distribution is uniform or symmetric. A power-law distribution is slightly different: the frequent skills occur with constant probability while tail skills are sampled much more infrequently. For simplicity, we consider $k$ is an even number with $k\ge 2$. 
The following theorem shows that gradient descent actually takes advantage of this asymmetry, which significantly improves the sample complexity.
\begin{theorem}[Gradient Descent learns to compose under power law training distribution] Let the input distribution be Zipf law with $p_j \propto j^{-\alpha}$ with $\alpha >1$. Suppose that the target error is $\varepsilon>0$, and $\w(0)\sim \mathcal{N}(0_d,r^2\I_d), r=\Theta(1), k=\Theta(1)$ is even$, \eta = O\qty(\frac{1}{k^2\|\p\|_2})$ and $B\ge \tilde{O}\qty(\frac{\log\frac{1}{\delta}}{{p_{\min}\varepsilon}})$. Then with probability $1-\delta$ the model can learn the task by minibatch gradient descent with $\Tilde{O}(\frac{d^{2\alpha}}{\eta\varepsilon})$ samples in $t\le \tilde{O}(\frac{d^{\alpha}}{\eta}\log\frac1\varepsilon)$ time, with error $\min\{\norm{\w+\w^*}_\infty, \norm{\w-\w^*}_\infty\}\le \varepsilon$.
\end{theorem}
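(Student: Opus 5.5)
We compare minibatch gradient descent with the population gradient flow: first we analyze the population dynamics, then we control the stochastic error by concentration. The population loss has a closed form. Since the indices $I_1,\dots,I_k$ are i.i.d., $\E[\prod_i a_{I_i}]=(\sum_j p_j a_j)^k$, and therefore
\[
\mathcal{L}(\w)=\tfrac12\qty(B^k-2A^k+C^k),
\]
where $A=\sum_j p_j w_j w^*_j$, $B=\sum_j p_j w_j^2$, and $C=\sum_j p_j (w_j^*)^2=1$. Differentiating gives
\[
-\nabla_{w_j}\mathcal{L}=k p_j\qty(A^{k-1}w_j^*-B^{k-1}w_j),
\]
which is the formula quoted in the main text. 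Because $k$ is even, $\mathcal{L}$ is invariant under $\w\mapsto-\w$. So we may assume $A(0)>0$ (replacing $\w^*$ by $-\w^*$ otherwise) and aim to prove $\w\to\w^*$, which explains the $\min\{\norm{\w\pm\w^*}_\infty\}$ form of the conclusion.

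\textbf{Step 1 (initialization).} For $\alpha>1$ the head weights $p_1,p_2=\Theta(1)$ and $\norm{\p}_2=\Theta(1)$. Hence $A(0)\sim\mathcal N(0,r^2\norm{\p}_2^2)$, and by Gaussian anti-concentration $|A(0)|\ge c\,r$ with constant probability (or with probability $1-\delta$ after a $\log\frac1\delta$ loss). Similarly $B(0)\le O(r^2)$ by concentration of the weighted chi-square.

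\textbf{Step 2 (invariant region and PL).} Work in the reweighted coordinates $u_j=w_j w_j^*$. Each coordinate then follows
\[
\dot u_j=k p_j\qty(A^{k-1}-B^{k-1}u_j),
\]
up to discretization. We show that the region $\{A\ge A(0)/2,\ B\le O(1)\}$ is forward invariant. Since $A=\sum p_j u_j$, we get $\dot A=k\sum_j p_j^2(A^{k-1}-B^{k-1}u_j)$. By Cauchy--Schwarz $A^2\le B$, so whenever $B$ is small the drift $A^{k-1}$ dominates. Coordinates with $u_j>1$ contract, because the fixed point is $u_j=A^{k-1}/B^{k-1}$, which tends to $1$ as $A,B\to1$. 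The step size condition $\eta\le 1/(k^2\norm{\p}_2)$ keeps the discrete steps stable. Inside this region we lower bound
\[
\norm{\nabla\mathcal{L}}_2^2=k^2\sum_j p_j^2\qty(A^{k-1}-B^{k-1}u_j)^2\gtrsim p_{\min}\,A^{2k-2}\,\mathcal{L}.
\]
To prove this, write $\mathcal{L}$ as $\frac12\E[(f_{\w}-f_{\w^*})^2]$ and relate it to the weighted deviation $\sum_j p_j(w_j-w_j^*)^2$ using a telescoping bound on products of $k$ factors. This PL inequality yields linear convergence of $\mathcal{L}$ in $\tilde O\qty(\frac{1}{\eta p_{\min}}\log\frac1\varepsilon)=\tilde O\qty(\frac{d^\alpha}{\eta}\log\frac1\varepsilon)$ steps. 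Converting to $\ell_\infty$ error uses $|w_j-w_j^*|^2\le p_j^{-1}\sum_i p_i(w_i-w_i^*)^2$. This costs a factor $p_{\min}^{-1}=d^\alpha$, which we absorb by running to loss $d^{-\alpha}\varepsilon^2$; only the logarithm in the time bound changes.

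\textbf{Step 3 (finite samples).} Write the minibatch gradient as the population gradient plus a zero-mean error. The per-sample gradient entry for $w_j$ is bounded by $k\,\mathbf 1[j\in\X]\cdot O(1)$ inside the invariant region, and its variance is $O(p_j)$. A Bernstein bound with batch size
\[
B\ge\tilde O\qty(\frac{\log(1/\delta)}{p_{\min}\varepsilon})
\]
makes the relative error in each coordinate at most an $\varepsilon$-fraction of the drift $kp_j(\cdot)$ near the target. A union bound over $d$ coordinates and $T$ steps keeps the iterate in the invariant region and preserves the contraction. The total sample count is then $T\cdot B=\tilde O\qty(\frac{d^{2\alpha}}{\eta\varepsilon})$.

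\textbf{Main obstacle.} The hardest step is the global invariance and PL argument of Step 2. Early on, $B^{k-1}u_j$ may compete with $A^{k-1}$ for tail coordinates, and $A$ could in principle decrease. I would handle this by splitting the trajectory into two phases. In the first phase only the head coordinates grow, and I would show by comparison with a one-dimensional ODE in $A$ that $A$ increases monotonically to $\Theta(1)$. In the second phase, with $A,B=\Theta(1)$ and $A^2\le B$, every coordinate contracts toward $1$ at rate $\gtrsim p_j$. This second phase is exactly where the $p_{\min}$ factor enters.
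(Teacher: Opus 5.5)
Your skeleton matches the paper's: the closed form $\mathcal{L}=\frac12(B^k-2A^k+1)$, $|A(0)|=\Theta(1)$ from $\|\p\|_2=\Theta(1)$, the PL bound $\|\nabla\mathcal{L}\|_2^2\gtrsim p_{\min}A^{2k-2}\mathcal{L}$, and stopping at loss $p_{\min}\varepsilon^2$. However, Step 3 does not give the stated batch size. You bound the per-sample variance of coordinate $j$ by $O(p_j)$, independently of the current loss. With $B=\tilde O(1/(p_{\min}\varepsilon))$ this leaves noise of order $\sqrt{p_{\min}/B}\approx p_{\min}\sqrt{\varepsilon}$ on the tail coordinate $j=d$. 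The drift on that coordinate, once it is $\varepsilon$-close to its target, is only about $kp_{\min}\varepsilon$, so the noise dominates by a factor of order $\varepsilon^{-1/2}$. More fundamentally, a loss-independent noise level can never be a small fraction of a drift that vanishes at the optimum. A per-step comparison of your type would need $B\gtrsim 1/(p_{\min}\varepsilon^2)$, i.e.\ $\tilde O(d^{2\alpha}/(\eta\varepsilon^2))$ samples.

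The missing idea is that the noise is multiplicative. There is no label noise, so each per-sample gradient is $(f(\w,\X)-f(\w^*,\X))\nabla f(\w,\X)$, and $\E\|\xi_{t,i}\|^2\le 2k^2R^{2k-2}\mathcal{L}(\w(t))$ whenever $\|\w(t)\|_\infty\le R$. Combined with PL, vector Bernstein then gives $\|\xi_t\|\le\frac18\|\nabla\mathcal{L}(\w(t))\|$ with $B=\tilde O(1/p_{\min}+1/\sqrt{p_{\min}\mathcal{L}(\w(t))})$. This yields $\mathcal{L}_{t+1}\le\mathcal{L}_t-\frac{\eta}{3}\|\nabla\mathcal{L}_t\|^2$ along the SGD path, and at $\mathcal{L}=p_{\min}\varepsilon^2$ it is exactly $B=\tilde O(1/(p_{\min}\varepsilon))$. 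Relatedly, your $O(1)$ bound on per-sample gradients needs $\|\w(t)\|_\infty=O(1)$. The condition $B(t)=O(1)$ does not provide this for tail coordinates; it only gives $|w_j|\le\sqrt{B/p_j}$. The paper proves $\|\w(t)\|_\infty\le R$ inside its induction, using that once a coordinate exceeds $R-1$ in magnitude the population gradient pushes it back, since $A^{k-1}/B^{k-1}\le|A|^{1-k}$.

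Your \emph{main obstacle} in Step 2 has a short resolution that makes the two-phase ODE comparison unnecessary. The monotone growth of $A$ that approach relies on is also not established. Instead, use the loss as a Lyapunov function. Since $k$ is even, $2\mathcal{L}=B^k-2A^k+1\ge 1-2A^k$. Choose the constant $r$ small enough that $B(0)\le|A(0)|$, which is possible because $B(0)=\Theta(r^2)$ while $|A(0)|=\Theta(r)$. Then $2\mathcal{L}(0)\le 1-A(0)^k$, strictly below the saddle value $\mathcal{L}(\mathbf 0)=\frac12$. Hence, as long as the population loss is nonincreasing, $A(t)^k\ge\frac12A(0)^k$ for all $t$, and using $A^2\le B$ also $A(t)^{2k}\le B(t)^k\le 2A(t)^k\le 4$. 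These bounds feed the PL inequality, the smoothness bound behind your step size, and the $\ell_\infty$ control. Monotonicity along the SGD path is in turn supplied by the relative-noise bound above, so everything closes in a single induction. Finally, Gaussian anti-concentration is linear in the radius: $\P(|A(0)|<t\,r\|\p\|_2)\le\sqrt{2/\pi}\,t$. So pushing the initialization event to probability $1-\delta$ costs a factor $\delta$ in $|A(0)|$, i.e.\ a $\delta^{-(2k-2)}$ factor in the iteration count, not a $\log\frac1\delta$ factor. The paper itself only obtains this event with constant probability.
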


\begin{proof}
    The proof structure has two parts.
    We first prove the faster convergence when considering gradient descent on population loss, which characterizes the expected trajectory under the training distribution. Then, we do the finite sample analysis to track the sample complexity with online sampled minibatch SGD, which shows that the SGD trajectory closely tracks the expected population trajectory based on concentration. In this phase, we show the convergence using a PL-condition like inequality, which still holds even with gradient noise.

    First, we assume $\w =\mathbf{1}_d$ without loss of generality. This is because the gradient descent dynamics on $\w$ can be converted to an equivalent weight vector $\bu:=\w\odot \w^*$. Notice that $y^{(i)} = \prod_{t=1}^k {\w^*}^\top \x^{(i)}_{t} \in \{-1,+1\}$. Thus the loss function for each sample becomes
    $$\hat{\mathcal{L}}(\w) = \frac{1}{2}\sum_{i=1}^n \qty(\prod_{t=1}^k\w^\top \x^{(i)}_{t}-\prod_{t=1}^k {\w^*}^\top \x^{(i)}_{t})^2 = \frac{1}{2} \sum_{i=1}^n \qty(\prod_{t=1}^k \bu^\top \x_t^{(i)}-1)^2$$
    while the per-sample gradient is
    $$\nabla\ell(\w) = \qty(\prod_{t=1}^k \bu^\top \x_t^{(i)}-1) \nabla_{\w}\prod_{t=1}^k \bu^\top \x_t^{(i)} = \diag(\w^*)\nabla_u \ell(\w)$$
    Therefore, based on gradient descent on $\w_t$, the update for $\bu$ is
    $$\bu_{t+1} = \w^*\odot \w_{t+1} = \bu_t - \eta \w^*\odot \nabla_w \hat{\mathcal{L}} = \bu_{t}-\eta \nabla_{\bu}\hat{\mathcal{L}}.$$
    This is equivalent to gradient descent on $\bu$ directly and the ground truth is $\bu^* = \mathbf{1}_d$.

    After the reparametrization w.l.o.g., we start to analyze the population dynamics, which is the expected trajectory of the gradient descent dynamics. We denote the population iterates as $\w(t)$ and the empirical gradient descent iterates as $\hat{\w}(t)$.
    
    We consider the two following quantities
    $$A(t) = \langle \w(t), \w^*\rangle_p = \sum_{i=1}^d p_iw_i, \qquad B(t) = \|\w(t)\|_p^2 = \sum_{i=1}^d p_iw_i^2.$$
    which are respectively the similarity between $\w$ and the ground truth and the norm under the power-law distribution.
    
    The population gradient of $\w(t)$ can be written as
    \begin{align*}
        \nabla_{\w} \mathcal{L}(\w(t)) &= \frac{1}{2}\nabla_{\w}\E_{\X} \qty[\qty(\prod_{i=1}^k\w^\top \x_i-\prod_{i=1}^k {\w^*}^\top \x_i)^2]\\ 
    &=\frac{1}{2}\nabla_{\w}\E_{\X}\qty[\qty(\prod_{i=1}^k\w^\top \x_{i})^2 - 2\qty(\prod_{i=1}^k\w^\top \x_{i})\qty(\prod_{i=1}^k {\w^*}^\top \x_{i}) + \qty(\prod_{i=1}^k {\w^*}^\top \x_{i})^2]\\
    &= \frac{1}{2}\nabla_{\w}\E_{\X}\qty[\qty(\prod_{i=1}^k\w^\top \x_{i})^2 - 2\qty(\prod_{i=1}^k\w^\top \x_{i})\qty(\prod_{i=1}^k {\w^*}^\top \x_{i})],
    \end{align*}
    where the last step is because the last term is constant $1$, whose gradient is zero.

    By the independence of all $\x_i$s in one sample, we have
    $$\qty(\prod_{i=1}^k\w^\top \x_{i})^2 = \|\w(t)\|_p^{2k} = B(t)^k,
    \qquad \qty(\prod_{i=1}^k\w^\top \x_{i})\qty(\prod_{i=1}^k {\w^*}^\top \x_{i}) = A(t)^k.$$
    The population gradient can then be simplified to 
    \begin{align*}
        \nabla_{\w} \mathcal{L}(\w(t))= \nabla_{\w} \frac{1}{2}\qty[B(t)^k - 2A(t)^k]
        = k\D\qty(B(t)^{k-1}\w(t)- A(t)^{k-1}\w^*)
    \end{align*}
    We have the population gradient descent update:
    \begin{align*}
        \w_{t+1}= \w(t)-\eta k\D\qty(B(t)^{k-1}\w(t) - A(t)^{k-1}\w^*)
    \end{align*}

    By \Cref{lem:A0_vec} and \Cref{lem:B0_vec}, we know the population landscape around initialization is ``nice" enough for the PL condition on population loss to hold (\Cref{lem:pl_ineq}),
    $$\norm{\nabla \mathcal{L}(\w(t))}_2^2 \ge 2kp_{\min} A(t)^{2k-2}\mathcal{L}(\w(t)),$$
    which guarantees the gradient descent convergence on the population loss.

    Finally, we consider the minibatch SGD trajectory and prove that it nearly follows the population dyanmics through finite sample analysis. We first define the gradient batch-noise. For each batch $B_t$ at time $t$, we have the batched gradient
$$\hat{g}_t = \frac{1}{B_t}\sum_{b_i=1}^{B_t}\nabla_{\w}\ell (\w(t),X_{b_i}).$$
whose expectation is the population gradient $\nabla \mathcal{L}(\w(t)).$ We define the batch noise and sample noise for sample $X_{b_i}$ as 
$$\xi_t = \hat{g_t}-\nabla \mathcal{L}(\w(t)), \qquad \xi_{t,i} = \nabla_{\w}\ell (\w(t),X_{b_i})-\nabla \mathcal{L}(\w(t)).$$
We have the population loss decrement formula inductively by \Cref{lem:induction} (which means $\eta\le \frac{1}{2L}$):
\begin{align*}
    \mathcal{L}(\w_{t+1}) &\le \mathcal{L} (\w_t) -\eta \langle \nabla \mathcal{L}(\w(t)), \hat{g_t}\rangle +\frac{L\eta^2}{2}\norm{\hat{g_t}}^2\\
    &=\mathcal{L}(\w(t))-\eta \norm{\nabla \mathcal{L}(\w(t))}^2-\eta \langle\nabla\mathcal{L}(\w(t)),\xi_t\rangle +\frac{L\eta^2}{2}\qty(\norm{\nabla \mathcal{L}(\w(t))}^2+2\langle \nabla \mathcal{L}(\w(t)),\xi_t\rangle+\|\xi_t\|^2)\\
    &\le \mathcal{L}(\w(t))- \frac{\eta}{2}\norm{\nabla \mathcal{L}(\w(t))}^2 +\frac{\eta}{2}\abs{\langle\nabla\mathcal{L}(\w(t)),\xi_t\rangle}+\frac{\eta}{2}\norm{\xi_t}^2.
\end{align*}
With \Cref{lem:induction}, we also have that with probability $1-\frac{\delta t}{T}$, it holds that $$\norm{\xi_t}\le \frac{1}{8}\norm{\nabla \mathcal{L}(\w(t))},\forall t\le T.$$
So we have the following if we abbreviate $\mathcal{L}(\w(t)):=L_t$:
$$L_{t+1}\le L_t - \frac{\eta}{2}\norm{\nabla L_t}^2 + \frac{\eta}{2} \cdot(\frac{1}{8}+\frac{1}{64})\norm{\nabla L_t}^2\le L_t - \frac{\eta}{3}\norm{\nabla L_t}^2.$$
Given the PL-condition holds by the induction hypothesis \Cref{lem:induction}, this will still lead to exponential decay of the population loss regardless of the gradient noise. By \Cref{lem:induction} we know with $T\ge \tilde{O}(\frac{1}{p_{\min}})$ the population loss $L_T\le \epsilon$. By \Cref{lem:population_to_gt}, we pick $\epsilon= O(p_{\min}\varepsilon^2)$ s.t. $\norm{\w(T)-\w^*}_\infty\le \varepsilon$, and the batch size $B\le \tilde{O}\qty(\frac{1}{\sqrt{p_{\min}\epsilon}})=\tilde{O}(\frac{d^\alpha}{\varepsilon})$. This finishes the proof.
\end{proof}

\subsubsection{Population dynamics to convergence}
We prove the population dynamics converges to the ground truth vector $\w^*$ under the power-law distribution. The power-law distribution grants the initial similarity $A(0)=\Omega(1)$ with high probability, which boosts the initial alignment. With the initial alignments, we can show an improved landscape on the loss function with PL-style inequality. With those results, we can prove that GD pushes the iterates $\w(t)$ towards the ground truth.

Before the dynamics argument, we first prove that a landscape property of this task: there are only three possible stationary points for the loss function.
\begin{lemma}\label{lem:stationary_pts}
    If the population gradient $\nabla_{\w} \mathcal{L}(\w(t)) = \bm{0}_d$, we have $\w = \bm{0}_d$ or $\w = \pm \w^*$.  
\end{lemma}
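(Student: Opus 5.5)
Throughout we work in the reparametrized coordinates already fixed in the proof of the theorem above, so that $\w^*=\mathbf{1}_d$. We start from the closed form of the population gradient derived there,
\[
\nabla_{\w}\mathcal{L}(\w)=k\D\qty(B^{k-1}\w-A^{k-1}\w^*),\qquad A=\sum_i p_iw_iw_i^*,\quad B=\sum_i p_iw_i^2 .
\]
Since $\D=\diag(p_1,\dots,p_d)$ has strictly positive diagonal entries (every $p_j>0$ under both the uniform and the Zipf law) and $k\ge 2$, the condition $\nabla\mathcal{L}(\w)=\bm{0}_d$ is equivalent to the vector identity
\[
B^{k-1}\w=A^{k-1}\w^*.
\]
The plan is to split into cases according to whether $B=0$.

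\textbf{Case $B=0$.} Because $B=\sum_ip_iw_i^2$ with all $p_i>0$, $B=0$ forces $\w=\bm{0}_d$, which is one of the claimed stationary points. (Conversely, at $\w=\bm{0}_d$ we have $A=B=0$, so the gradient does vanish.)

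\textbf{Case $B>0$.} Then $\w=(A/B)^{k-1}\w^*=c\,\w^*$ with the scalar $c:=(A/B)^{k-1}$. Plug this back into the definitions of $A$ and $B$. Since $(w_i^*)^2=1$ and $\sum_ip_i=1$, we get $A=c$ and $B=c^2$. Because $B>0$ we have $c\neq0$, and the stationarity identity becomes $c^{2(k-1)}\cdot c=c^{k-1}$, i.e. $c^{k}=1$.

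For the real solutions of $c^k=1$: if $k$ is even (the standing assumption in the theorem), $c=\pm1$, giving $\w=\pm\w^*$. If $k$ is odd, $c=1$ only, so the set of stationary points is contained in $\{\bm{0}_d,\pm\w^*\}$ in either case. Finally we check directly that $\w=\pm\w^*$ are indeed stationary: $A=\pm1$, $B=1$, and $A^{k-1}=1$ for even $k$. This also shows the dichotomy $\min\{\norm{\w-\w^*}_\infty,\norm{\w+\w^*}_\infty\}$ in the theorem is natural.

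There is no real obstacle here. The only points needing care are that $\D$ is invertible (all $p_j>0$) and the substitution step giving $A=c$, $B=c^2$, which uses $\|\w^*\|_p^2=1$ for $\w^*\in\{\pm1\}^d$.
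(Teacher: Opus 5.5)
Your proof is correct and follows the paper's argument: write $\w=\lambda\w^*$ from $B^{k-1}\w=A^{k-1}\w^*$, then substitute $A=\lambda$, $B=\lambda^2$ to get $\lambda^k=1$; your separate $B=0$ case also justifies the division by $B$ that the paper leaves implicit. One small slip in your optional converse check: at $\w=-\w^*$ with $k$ even, $A^{k-1}=(-1)^{k-1}=-1$ rather than $1$, and it is exactly this sign that makes $B^{k-1}\w=A^{k-1}\w^*$ hold there.
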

\begin{proof}
    If $\nabla_{\w} \mathcal{L}(\w(t)) = \bm{0}_d$, we have $\qty(B^{k-1}\w- A^{k-1}\w^*) = \bm{0}_d.$ Therefore we know $\w = \qty(\frac{A}{B})^{k-1}\w^* := \lambda \w^*.$ If $\lambda = 0$, the equality holds so $0$ is one solution.

    Now we consider $\lambda \neq 0.$
     Since $B = \left\|\w\right\|_p^2$ and $ A = \langle\w,\w^*\rangle_p$, we can calculate the equation and have $\lambda = \frac{\lambda^{k-1}}{\lambda^{2k-2}}$. We have $\lambda ^k = 1$, so $\lambda = \pm 1$.
\end{proof}

This guarantees that gradient descent either drives $\w(t)$ to the only saddle point $0$, or $\w(t)$ converges to the minimizer. We further prove that the power-law distribution guarantees that one will escape from the saddle point fast enough with random initialization, given stable learning rate.

\paragraph{Stage 1: initialization}
First, we use simple Gaussian concentration we have the following constant initial alignment between $\w$ and $\w^*$, which is the source of the separation between power law and the uniform distribution. Basically, power law or imbalanced training distribution significantly improved the training landscape.
\begin{lemma}[Gaussian lower bound for $|A(0)|$]\label{lem:A0_vec}
Assume $\w(0)\sim \mathcal{N}(0_d,r^2\I_d)$.
Then $A(0)=\sum_{i=1}^d p_i w_{i}(0)$ is Gaussian with
$\Var(A(0))=r^2\sum_{i=1}^d p_i^2$.
For any $\delta\in(0,1)$ and some constant $c_1,c_2$, with probability at least $0.999$,  we have
\[
c_1r\sqrt{\sum_{i=1}^d p_i^2}\le|A(0)|\le c_2r\sqrt{\sum_{i=1}^d p_i^2}.
\]
\end{lemma}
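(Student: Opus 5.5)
The plan is to exploit the fact that $A(0)$ is a fixed linear functional of the Gaussian vector $\w(0)$, so its law is exactly one-dimensional Gaussian. By the reparametrization $\bu=\w\odot\w^*$ made in the proof of the theorem, we may take $\w^*=\mathbf{1}_d$. Then $A(0)=\sum_{i=1}^d p_i w_i(0)$. Since the coordinates $w_i(0)\overset{iid}{\sim}\mathcal N(0,r^2)$ (and the reparametrized coordinates $w^*_iw_i(0)$ have the same law by symmetry of the Gaussian), $A(0)$ is a linear combination of independent centered Gaussians. Hence $A(0)\sim\mathcal N(0,\sigma^2)$ with $\sigma^2=r^2\sum_{i}p_i^2=r^2\|\p\|_2^2$. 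This establishes the variance identity directly.

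For the two-sided bound, I write $A(0)=\sigma Z$ with $Z\sim\mathcal N(0,1)$, which reduces everything to constants for a standard normal. For the upper bound I use the Gaussian tail $\P[|Z|>c_2]\le 2e^{-c_2^2/2}$ and choose $c_2$ so that this is at most $5\times10^{-4}$; for instance $c_2=4$ suffices. For the lower bound I use anti-concentration from the bounded density: $\P[|Z|<c_1]\le 2c_1/\sqrt{2\pi}$, and I choose $c_1$ so that this is at most $5\times10^{-4}$, say $c_1=6\times10^{-4}$. A union bound then gives $c_1\sigma\le|A(0)|\le c_2\sigma$ with probability at least $0.999$, which is the claim. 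The parameter $\delta$ plays no role here; if a $1-\delta$ version is wanted, I would take $c_1\asymp\delta$ and $c_2\asymp\sqrt{\log(1/\delta)}$.

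There is no real obstacle. The only point needing care is noting that the rewriting under $\w^*\in\{\pm1\}^d$ preserves Gaussianity, since sign flips preserve $\mathcal N(0,r^2\I_d)$.

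I would also remark, for later use, that under Zipf with $\alpha>1$ one has $\|\p\|_2\ge p_1=1/H_{d,\alpha}=\Theta(1)$. So $|A(0)|=\Theta(r)$, in contrast to $\Theta(r/\sqrt d)$ under the uniform distribution.
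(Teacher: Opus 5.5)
Your proof is correct and follows the paper's argument essentially step for step: write $A(0)=\sigma Z$ with $\sigma=r\|\p\|_2$, bound the upper tail with $\P(|Z|>t)\le 2e^{-t^2/2}$ and the lower tail with the density bound $\P(|Z|<t)\le\sqrt{2/\pi}\,t$, then union bound two failure events of $5\times10^{-4}$ each. One small numerical slip: $2e^{-8}\approx 6.7\times10^{-4}$ exceeds $5\times10^{-4}$, so $c_2=4$ does not follow from the tail bound you cite (that bound needs $c_2\ge\sqrt{2\log 4000}\approx 4.07$, which is the paper's choice); $c_2=4$ is still valid if you use the exact tail $\P(|Z|>4)\approx 6.3\times10^{-5}$.
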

\begin{proof}
We have $\w(0)$ initialized as Gaussian, we have
$
A(0)=\sum_{i=1}^d p_i w_{i}(0). $
Since $A(0)$ is a linear functional of $\w(0)$, it follows that $A(0)$ is Gaussian.
Moreover,
\[
\E[A(0)]=p^\top \E[\w(0)]=0,
\]
and 
\(
\Var(A(0))=\Var(p^\top \w(0))= r^2 \|p\|_2^2
= r^2\sum_{i=1}^d p_i^2.
\)
Define $\sigma:=r\|p\|_2$. Then $Z:=A(0)/\sigma\sim\mathcal{N}(0,1)$.
We now choose explicit absolute constants $c_1,c_2$ so that
\[
\P\big(c_1 \le |Z| \le c_2\big)\ge 0.999.
\]
Let $\delta_1=\delta_2=5\times 10^{-4}$.
For the upper tail, for any $t\ge 0$,
\[
\P(|Z|\ge t)=2\Pr(Z\ge t)=2\bigl(1-\Phi(t)\bigr)
\le 2 e^{-t^2/2},
\]
where the last inequality is a standard Gaussian tail bound. If we set
\[
c_2 := \sqrt{2\log\Big(\frac{2}{\delta_2}\Big)}
=\sqrt{2\log(4000)},
\] then
\(\Pr(|Z|>c_2)\le \delta_2.\)
For the lower tail, for any $t\ge 0$,
\[
\Pr(|Z|\le t)=\int_{-t}^{t}\frac{1}{\sqrt{2\pi}}e^{-x^2/2}\,dx
\le \int_{-t}^{t}\frac{1}{\sqrt{2\pi}}\,dx
=\sqrt{\frac{2}{\pi}}\,t.
\]
We set
\(
c_1 := \delta_1\sqrt{\frac{\pi}{2}},
\)
and $\P(|Z|<c_1)\le \delta_1$. By union bound, we have
\[
\P\big(c_1 \le |Z| \le c_2\big)
\ge 1-\Pr(|Z|<c_1)-\Pr(|Z|>c_2)
\ge 1-\delta_1-\delta_2
=0.999.
\]
Multiplying the event $c_1 \le |Z| \le c_2$ by $\sigma=r\|p\|_2$ yields that w.p. at least $0.999$,
$$c_1\, r\|p\|_2 \le |A(0)| \le c_2\, r\|p\|_2,$$
which proves the claim.
\end{proof}
Therefore, we have that the initial alignment is $\Theta(1)$ since $\|p\|_2=\Theta(1)$. 
Similarly, we can prove the concentration for $B(0)$.
\begin{lemma}[Gaussian lower bound for $|B(0)|$]\label{lem:B0_vec}
Assume $\w(0)\sim \mathcal{N}(0_d,r^2\I_d)$ and
$B(0)=\sum_{i=1}^d p_i w_i^2(0)$. 
Then there exists an absolute constant $C>0$ such that with probability at least $0.999$,
\[
\qty|B(0)-r^2|\le C r^2\|p\|_2.
\]
\end{lemma}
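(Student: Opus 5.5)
We need $|B(0)-r^2|\le Cr^2\|p\|_2$ with probability $0.999$. I plan to use a second-moment (Chebyshev) argument, since $B(0)$ is a weighted sum of independent chi-squared variables.

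First I would center $B(0)$. Write $w_i(0)=r g_i$ with $g_i\overset{iid}{\sim}\mathcal{N}(0,1)$, so that $B(0)=r^2\sum_{i=1}^d p_i g_i^2$. Since $\sum_i p_i=1$ and $\E[g_i^2]=1$, we get $\E[B(0)]=r^2$. Hence
\[
B(0)-r^2=r^2\sum_{i=1}^d p_i(g_i^2-1).
\]

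Next I would compute the variance. The terms are independent and $\Var(g_i^2)=2$, so
\[
\Var(B(0))=2r^4\sum_{i=1}^d p_i^2=2r^4\|p\|_2^2.
\]
Chebyshev's inequality then gives
\[
\P\big(|B(0)-r^2|>Cr^2\|p\|_2\big)\le \frac{2}{C^2}.
\]
Choosing $C=\sqrt{2000}$ makes this failure probability at most $10^{-3}$, which proves the claim with an absolute constant.

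If a high-probability version is wanted instead, I would apply the Laurent--Massart (Hanson--Wright) bound for weighted chi-squared sums:
\[
\P\Big(\Big|\sum_i p_i(g_i^2-1)\Big|\ge 2\|p\|_2\sqrt{x}+2\|p\|_\infty x\Big)\le 2e^{-x}.
\]
Using $\|p\|_\infty\le\|p\|_2$ and taking $x=\log(2000)$, the deviation is again bounded by $C\|p\|_2$.

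No step here is a real obstacle; the only care needed is using $\sum_i p_i=1$ to get the mean exactly $r^2$. I would also remark why this matters for the main argument: under Zipf with $\alpha>1$ we have $\|p\|_2=\Theta(1)$, so the lemma gives $B(0)=\Theta(r^2)$ (for $C\|p\|_2<1$, otherwise the upper bound $B(0)\le r^2(1+C\|p\|_2)$) rather than concentration to $r^2$. Combined with \Cref{lem:A0_vec}, which gives $|A(0)|=\Theta(r)$, this shows $|A(0)|\gg B(0)$ for small constant $r$, as used in the dynamics analysis.
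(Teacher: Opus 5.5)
Your proof is correct, but your main route differs from the paper's. The paper writes $B(0)=r^2 g^\top \diag(p)\,g$ and applies the Hanson--Wright inequality with $A=\diag(p)$. It bounds both $\|A\|_F=\|p\|_2$ and the operator norm $\|\diag(p)\|_{\mathrm{op}}=\|p\|_\infty$ by $\|p\|_2$, then sets $t=C\|p\|_2$ with $C$ a large constant. You instead use only the exact variance $\Var(B(0))=2r^4\|p\|_2^2$ together with Chebyshev, which gives the explicit constant $C=\sqrt{2000}$.

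For the constant confidence level $0.999$ in the statement, your second-moment argument is fully sufficient. It is also more elementary, and it needs only finite fourth moments of the initialization rather than Gaussian or sub-Gaussian tails.

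What the paper's concentration inequality buys is a better dependence on the failure probability. To make the event hold with probability $1-\delta'$, Chebyshev needs $C\asymp \delta'^{-1/2}$, whereas Hanson--Wright needs only $C=O(\log(1/\delta'))$. This matters only if you want the initialization event to hold with probability $1-\delta$ rather than $0.999$. Your optional Laurent--Massart paragraph is exactly the chi-squared specialization of the paper's argument, and it makes explicit the step $\|p\|_\infty\le\|p\|_2$ that the paper uses silently, so you effectively have both routes.

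Your closing remark is also accurate. With $\|p\|_2=\Theta(1)$ and $C$ a large constant, the lower half of the bound is vacuous. Only the upper bound $B(0)\le r^2(1+C\|p\|_2)$ is used downstream, to get $|A(0)|>B(0)$ for a small constant $r$.
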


\begin{proof}
Write $w(0)=rg$ with $g\sim \mathcal{N}(0,I_d)$. Then
\[
B(0)=r^2 g^\top \diag(p)\, g.
\]
Since $\sum_i p_i=1$, we have
$\mathbb{E}[B(0)] = r^2 \Tr(\diag(p)) = r^2.$
By the Hanson-Wright inequality for Gaussian quadratic forms \citep{rudelson2013hanson},
\[
\Pr\qty(\qty|g^\top A g-\Tr(A)|\ge t)
\le
2\exp\qty(-c\min\qty{\frac{t^2}{\|A\|_F^2},\frac{t}{\|A\|_{2}}})
\]
for any symmetric matrix $A$. Applying this with $A=\diag(p)$ yields
\[
\Pr\qty(\qty|\sum_{i=1}^d p_i g_i^2 -1|\ge t)
\le
2\exp\qty(-c\min\qty{\frac{t^2}{\|p\|_2^2},\frac{t}{\|p\|_2}}).
\]
Equivalently,
\[
\Pr\big(|B(0)-r^2|\ge r^2 t\big)
\le
2\exp\Big(-c\min\qty{\frac{t^2}{\|p\|_2^2},\frac{t}{\|p\|_2}}\Big).
\]

Now set $t=C\|p\|_2$, the exponent is bounded by
$-c\min\{C^2,C\}.$
Choosing $C$ sufficiently large makes the right-hand side at most $0.001$. Therefore, with probability at least $0.999$, $|B(0)-r^2|\le C r^2\|p\|_2.$
This completes the proof.
\end{proof}

After the analysis of the initialization scale, we know $\qty|A(t)|>B(t)$ w.h.p. when $r\le \frac{c_1\|p\|_2}{C\|p\|_2+1}=\Theta(1)$. We define the nice initialization event as $\mathcal{E}_0.$ The population loss value is (since $k$ is even)
$$\mathcal{L}(\w(0)) = \frac{1}{2}\qty(B^k(0)-2A(0)^k +1) \le \frac{1}{2}(1-A(0)^k).$$

\paragraph{Stable learning rate analysis.} If gradient descent is in the stable regime ($\eta \le \frac{1}{2\norm{\nabla ^2\mathcal{L}(\w)}_2}$ for all time), we have descent lemma showing that $\mathcal{L}(\w(t))$ is non-increasing through time $t$. Then the monotonicity somehow implies the lower bound for $|A(t)|$: 
$$ \frac{1}{2}\qty(1-2A(t)^k)\le \frac{1}{2}\qty(B^k(t)-2A(t)^k +1) = \mathcal{L}(\w(t)) \le \mathcal{L}(\w(0))\le \frac{1}{2}(1-A(0)^k).$$
Therefore $|A(t)|\ge 2^{-1/k}|A(0)|, B^k(t)\le 2A(t)^k$ as long as GD trajectory is stable.

On the other hand, we can upper bound smoothness constant $L$ to estimate the max possible learning rate along the training trajectory. Given the population gradient
$$\nabla_{\w} \mathcal{L}(\w(t))
        = k\D\qty(B(t)^{k-1}\w(t)- A(t)^{k-1}\w^*),$$
we can calculate the Hessian matrix
\begin{align*}
    \nabla^2 \mathcal{L}(\w(t)) = kB(t)^{k-1}\D + 2k(k-1)B(t)^{k-2} \D\w(t)(\D\w(t))^\top - k(k-1)A(t)^{k-2}\p\p^\top.
\end{align*}
To upper bound the operator norm of the Hessian, we use
$$\norm{\D}_{op} = p_{\max},\ \norm{\D\w(t)(\D\w(t))^\top}_{op} = \sum_{i=1}^d p_i^2w_i(t)^2 \le p_{\max}B(t),\ \norm{\p\p^\top}_{op}\le \|\p\|_2^2.$$
Therefore, we have (using $B^k(t)\le 2|A(t)|^k$)
\begin{align*}
   \norm{\nabla^2 \mathcal{L}(\w(t))}_{op} &\le k B(t)^{k-1}p_{\max} + 2k(k-1)B(t)^{k-1}p_{\max} + k(k-1)A(t)^{k-2}\|\p\|_2^2\\
   &\le 2k(2k-1)p_{\max}|A(t)|^{k-1} + k(k-1)\|\p\|_2^2 |A(t)|^{k-2}.
\end{align*}
Since we know $B(t) = \qty(\sum_{i=1}^dp_iw_i(t)^2)\cdot \qty(\sum_{i=1}^d p_i)\ge A(t)^2$ by Cauchy, we can further upper bound 
$$A(t)^{2k}\le 2A(t)^k, |A(t)|\le 2^{1/k}, |B(t)|\le 2^{2/k}.$$
That gives $\norm{\nabla^2 \mathcal{L}(\w(t))}_{op}\le 3k^2\|\p\|_2$ which is a constant upper bound. We pick $\eta \le \frac{1}{10k^2 \|\p\|_2}$ and that will guarantee stable training since the first iterate (as induction base case, the upper bound hold trivially at $t=0$). 
\paragraph{Stage 2: convergence} With the lower bounds of $A(t)$, we can use the Polyak–Łojasiewicz condition on the population loss to calculate the time for convergence to the global minima (either $\w^*$ or $-\w^*$). 

\begin{lemma}[{PL inequalities}]\label{lem:pl_ineq}
When $r\le\frac{c_1\|\p\|}{C\|\p\|+1}, \eta \le \frac{1}{10k^2\|\p\|_2}$, we have 
$$\norm{\nabla \mathcal{L}(\w(t))}_2^2 \ge 2kp_{\min} A(t)^{2k-2}\mathcal{L}(\w(t)).$$
\end{lemma}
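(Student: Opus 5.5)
We will work in the reparametrized coordinates from the proof of the theorem, where $\w^*=\mathbf{1}_d$, $A=\sum_i p_iw_i$ and $B=\sum_i p_iw_i^2$. Since $k$ is even we have $f(-\w,\cdot)=f(\w,\cdot)$, so the loss is invariant under $\w\mapsto-\w$, and we may assume $A(t)>0$. This sign is preserved along the trajectory, because the stable-learning-rate analysis above gives $|A(t)|\ge 2^{-1/k}|A(0)|>0$ and the iterates move by small steps.

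\textbf{Step 1 (lower bound on the gradient).} Start from the gradient formula $\nabla\mathcal{L}=k\D(B^{k-1}\w-A^{k-1}\mathbf{1})$. Since $p_i^2\ge p_{\min}p_i$, we get
\[
\norm{\nabla\mathcal{L}}_2^2=k^2\sum_i p_i^2(B^{k-1}w_i-A^{k-1})^2\ge k^2p_{\min}\sum_i p_i(B^{k-1}w_i-A^{k-1})^2 .
\]
Next decompose $\w=A\mathbf{1}+\bm v$ with $\sum_ip_iv_i=0$, and set $V=\sum_ip_iv_i^2$, so that $B=A^2+V$. The $p$-weighted sum then splits exactly into a radial part and an orthogonal part:
\[
\sum_i p_i(B^{k-1}w_i-A^{k-1})^2=(AB^{k-1}-A^{k-1})^2+B^{2k-2}V .
\]

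\textbf{Step 2 (upper bound on the loss).} Write the loss in matching form:
\[
\mathcal{L}=\tfrac12(B^k-2A^k+1)=\tfrac12(B^k-A^{2k})+\tfrac12(A^k-1)^2 .
\]
For the first term, the mean value theorem gives $B^k-A^{2k}\le kB^{k-1}V$. Combined with $A^{2k-2}\le B^{k-1}$ (Cauchy--Schwarz), this yields
\[
kA^{2k-2}\cdot\tfrac12(B^k-A^{2k})\le \tfrac{k^2}{2}B^{2k-2}V .
\]
So the variance part of the loss is controlled by the orthogonal part of the gradient.

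It remains to control $\tfrac{k}{2}A^{2k-2}(A^k-1)^2$ by $k^2\big[(AB^{k-1}-A^{k-1})^2+\tfrac12B^{2k-2}V\big]$. When $V=0$ this is immediate: $AB^{k-1}-A^{k-1}=A^{k-1}(A^k-1)$, so the two sides agree up to the factor $k\ge 2$.

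\textbf{Step 3 (the main obstacle: $V>0$).} For $V>0$ we have $AB^{k-1}-A^{k-1}=A^{k-1}(A^k-1)+A(B^{k-1}-A^{2k-2})$, where $0\le B^{k-1}-A^{2k-2}\le (k-1)B^{k-2}V$. I would split into cases.
\begin{itemize}
\item If $A^k\ge1$, both summands are nonnegative, so $(AB^{k-1}-A^{k-1})^2\ge A^{2k-2}(A^k-1)^2$ and we are done.
\item If $A^k<1$, the two summands have opposite signs and may cancel. Use $(x+y)^2\ge \tfrac12x^2-y^2$ to get
\[
(AB^{k-1}-A^{k-1})^2\ge \tfrac12A^{2k-2}(A^k-1)^2-A^2(k-1)^2B^{2k-4}V^2 .
\]
The negative error term must then be absorbed by the spare $B^{2k-2}V$ term. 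This requires $A^2(k-1)^2V\lesssim B^2$, which can fail if $V$ is large relative to $B$.
\end{itemize}
To close this case I would invoke the trajectory invariants from the stable-learning-rate paragraph: $B^k\le2A^k$, $|A|\le2^{1/k}$, and $A$ bounded below. These bound $V/B$ and the ratio $B/A^2$ by constants. If constants are lost along the way, I would adjust the cross-term weight, using $(x+y)^2\ge(1-\theta)x^2-(1/\theta-1)y^2$ with $\theta$ tuned in $k$, and fall back on the $k^2$ versus $k$ slack to recover the stated constant $2k$.
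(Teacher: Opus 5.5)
Your Steps 1--2 are correct: the radial/orthogonal split reproduces the paper's quantity $B^{2k-1}-2A^kB^{k-1}+A^{2k-2}$, and the case $A^k\ge 1$ closes. But the case $A^k<1$, $V>0$, which is the only hard one, is left open, and the route you sketch for it does not work as stated.

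First, your accounting is off by a factor of $2$. The statement requires
$k^2\bigl[(AB^{k-1}-A^{k-1})^2+B^{2k-2}V\bigr]\ge 2kA^{2k-2}\mathcal{L}=kA^{2k-2}(B^k-A^{2k})+kA^{2k-2}(A^k-1)^2$.
So your mean-value bound on the variance part uses up all of $k^2B^{2k-2}V$, not half of it. With $b=B/A^2\ge 1$, the true leftover is $kA^{4k-2}\bigl[kb^{2k-2}(b-1)-(b^k-1)\bigr]$, which is only of order $(b-1)^2$. There is no spare $\tfrac{k^2}{2}B^{2k-2}V$.

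Second, this leftover is genuinely needed. At points with $B^{k-1}=A^{k-2}$, which have $V>0$ for every $0<A<1$, the radial term vanishes and everything must be paid from the leftover. Even with the optimal weight $\theta=1-1/k$, absorbing the cross term requires
$\tfrac{k}{k-1}(b^{k-1}-1)^2\le kb^{2k-2}(b-1)-(b^k-1)$ for all $b\ge 1$.
You do not prove this. If you replace $b^{k-1}-1$ by your mean-value bound $(k-1)b^{k-2}(b-1)$, the resulting sufficient condition is false: at $k=6$, $b=2$ it reads $7680\le 6081$.

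Third, the trajectory invariants cannot rescue this case. The lemma is a pointwise algebraic fact; the paper's proof never uses the hypotheses on $r$ or $\eta$, only $\lambda_{\min}(\D)=p_{\min}$ and $B\ge A^2$. Moreover, the bad configurations $B^{k-1}=A^{k-2}$ satisfy $B^k\le 2A^k$ whenever $A\ge 2^{-(k-1)/k}$, so the invariants do not exclude them.

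The missing idea is the paper's two-variable reduction, which avoids the split altogether.
\begin{enumerate}
\item Divide by $A^{2k-2}$ and set $a=A^k$, $b=B/A^2\ge 1$. The claim $k(B^{2k-1}-2A^kB^{k-1}+A^{2k-2})\ge 2A^{2k-2}\mathcal{L}$ becomes
$(kb^{k-1}-1)(b^ka^2-2a)+k-1\ge 0$.
\item Add and subtract $b^{-k}$ inside the second factor. This turns it into the square $(b^{k/2}a-b^{-k/2})^2\ge 0$, multiplied by $kb^{k-1}-1\ge k-1\ge 0$.
\item What remains is $k-1-k/b+b^{-k}$. It vanishes at $b=1$ and has derivative $kb^{-2}(1-b^{1-k})\ge 0$ for $b\ge 1$, so it is nonnegative.
\end{enumerate}
This handles every real $a$ at once (either sign, $A^k$ above or below $1$), with no case split and no trajectory information. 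If you want to keep your decomposition, you must prove the one-variable inequality above with the exact $b^{k-1}-1$; that inequality is equivalent to the lemma, since the worst-case $a=1/(1+\tfrac{k}{k-1}(b^{k-1}-1))$ is attainable.
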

\begin{proof}
    Consider the L.H.S. Since $\lambda_{\min}({D}) =p_{\min}$, we have:
    \begin{align*}
        \norm{\nabla \mathcal{L}(\w(t))}_2^2 &= k^2 \qty(B(t)^{k-1}\w(t)-A(t)^{k-1}\w^*)^\top \D^2 \qty(B(t)^{k-1}\w(t)-A(t)^{k-1}\w^*)\\
        &\ge k^2 p_{\min} \qty(B(t)^{k-1}\w(t)-A(t)^{k-1}\w^*)^\top \D \qty(B(t)^{k-1}\w(t)-A(t)^{k-1}\w^*)\\
        &=k^2 p_{\min} \qty(B(t)^{2k-1}-2A(t)^{k}B(t)^{k-1}+A(t)^{2k-2})
    \end{align*}
    We next prove that 
    $$k\qty(B(t)^{2k-1}-2A(t)^{k}B(t)^{k-1}+A(t)^{2k-2})\ge 2A^{2k-2}\mathcal{L}(\w(t)).$$
    We know that $|A(t)|^2\le B(t)$ and  $$2\mathcal{L}(\w(t)) = B(t)^k-2A(t)^k+1 = A(t)^{2k} \qty(\frac{B}{A^2})^k-2A^k+1.$$
    Let $A^k=a, \frac{B}{A^2}=b$. Then the both side can be rewritten into
    $$k\qty(B(t)^{2k-1}-2A(t)^{k}B(t)^{k-1}+A(t)^{2k-2})/A^{2k-2} = ka^2b^{2k-1}-2kab^{k-1}+k,$$
    $$2\mathcal{L}(\w(t)) =  a^2b^k-2a+1.$$
    We need to prove that
    $(kb^{k-1}-1)(b^ka^2-2a)+k-1\ge 0.$ Since $b\ge 1$, we know 
    \begin{align*}
        (kb^{k-1}-1)(b^ka^2-2a)+k-1 &= (kb^{k-1}-1)(b^ka^2-2a + \frac{1}{b^k})+k-1-\frac{1}{b^k}(kb^{k-1}-1)\\
        &\ge k-1-\frac{1}{b^k}(kb^{k-1}-1) = \frac{kb^k-kb^{k-1}+1}{b^{k}}-1\ge 0
    \end{align*}
    when $b \ge 1$ since the function is increasing after $b\ge 1$. Therefore we finish the proof.
\end{proof}

With the PL condition we can easily prove the convergence of the loss. By descent lemma (since we picked $\eta\le \frac{1}{|\lambda_{\max}(\nabla^2\mathcal{L})|}$), we have
\begin{align*}
    \mathcal{L}(\w(t+1)) \le \mathcal{L}(\w(t))-\frac{\eta}{2}\norm{\nabla \mathcal{L}(\w(t))}_2^2\le \qty(1-\frac{\eta kp_{\min}A(t)^{2k-2}}{2})\mathcal{L}(\w(t)).
\end{align*}
Since $|A(t)|\ge 2^{-1/k}|A(0)|$, we have
$$\frac{\eta kp_{\min}A(t)^{2k-2}}{2}\ge \frac{\eta k p_{\min}2^{-2+2/k}|A(0)|^{2k-2}}{2}$$
The loss converges to $\epsilon_1$ within $t \le O(\frac{1}{\eta k p_{\min}}\log\frac{1}{\epsilon_1})$ where $\epsilon_1$ will be determined later.

Finally, $\w(t)$ will be very close to the ground truth if the population loss is very small. 
\begin{lemma}\label{lem:population_to_gt}
    Assume the `good' initialization holds. When $\mathcal{L}(\w)\le \epsilon_1 \le \frac{1}{8}$, we have $\min\{\|\w+\w^*\|_{\infty},\|\w-\w^*\|_{\infty}\}\le O(\sqrt{\frac{\epsilon_1}{p_{\min}}})$.
\end{lemma}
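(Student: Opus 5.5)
The plan is to avoid per-coordinate event arguments and work directly with the closed form of the population loss derived above. After the reparametrization $\bu=\w\odot\w^*$ we may take $\w^*=\mathbf{1}_d$. Then
$$2\mathcal{L}(\w)=B^k-2A^k+1,\qquad A=\sum_i p_iw_i,\quad B=\sum_i p_iw_i^2 .$$
The key step is a bias--variance split of $B$. Writing
$$V:=\sum_{i=1}^d p_i(w_i-A)^2\ge 0,$$
we have $B=A^2+V$ because $\sum_i p_i=1$. The plan has three steps: get a lower bound on the loss separating $A$ from $V$, deduce that $A\approx\pm1$, and then deduce that $V$ is small.

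\textbf{Step 1: lower bound on the loss.} Since $k\ge 2$, $A^2\ge 0$ and $V\ge0$, the binomial expansion gives $(A^2+V)^k\ge A^{2k}+kA^{2k-2}V$. Substituting,
$$2\mathcal{L}(\w)\ge (A^k-1)^2+kA^{2k-2}V .$$
Both summands are nonnegative, so $\mathcal{L}\le\epsilon_1$ yields two separate bounds:
\begin{itemize}
\item $|A^k-1|\le\sqrt{2\epsilon_1}$;
\item $kA^{2k-2}V\le 2\epsilon_1$.
\end{itemize}

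\textbf{Step 2: $A$ is close to $\pm1$.} Since $\epsilon_1\le 1/8$, we get $\sqrt{2\epsilon_1}\le 1/2$, so $A^k\ge 1/2$. Because $k$ is even, this gives $|A|\ge 2^{-1/k}\ge 1/2$, hence $A^{2k-2}\ge 2^{-2}$ up to constants. Also, $|A|\le (3/2)^{1/k}$. Factoring
$$A^k-1=(|A|-1)\sum_{j=0}^{k-1}|A|^j,$$
where the sum is at least $1$ once $|A|\ge 1/2$ (or simply at least a constant depending only on $k=\Theta(1)$), we obtain
$$\bigl||A|-1\bigr|\le O(\sqrt{\epsilon_1}).$$
Let $s=\mathrm{sign}(A)\in\{\pm1\}$.

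\textbf{Step 3: each coordinate is close to $A$.} Combining $A^{2k-2}\gtrsim 1$ with the second bound from Step 1 gives $V\le O(\epsilon_1/k)=O(\epsilon_1)$. Each summand of $V$ is bounded by $V$, so for every $j$,
$$p_{\min}(w_j-A)^2\le p_j(w_j-A)^2\le V,$$
which gives $|w_j-A|\le O\bigl(\sqrt{\epsilon_1/p_{\min}}\bigr)$.

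\textbf{Conclusion.} By the triangle inequality and $p_{\min}\le 1$,
$$|w_j-s|\le |w_j-A|+\bigl||A|-1\bigr|\le O\bigl(\sqrt{\epsilon_1/p_{\min}}\bigr)\quad\text{for all } j.$$
Undoing the reparametrization, this says $\min\{\|\w-\w^*\|_\infty,\|\w+\w^*\|_\infty\}\le O(\sqrt{\epsilon_1/p_{\min}})$. Note that this argument does not actually use the good-initialization event; that event is only needed earlier, to ensure the loss reaches $\epsilon_1$.

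The main obstacle is Step 1. A naive approach would look at samples in which coordinate $j$ appears, but such events carry only probability of order $p_j^k$, which would produce a much worse dependence on $p_{\min}$. The bias--variance decomposition $B=A^2+V$, together with the binomial lower bound, avoids this: it converts the scalar loss into a weighted variance, and the variance controls every coordinate at the cost of only a $1/p_{\min}$ factor. The remaining care is in Step 2, where we need $\epsilon_1\le 1/8$ to keep $|A|$ bounded away from $0$, so that the factor $A^{2k-2}$ multiplying $V$ does not degenerate.
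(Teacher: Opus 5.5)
Your proof is correct and follows essentially the same route as the paper. Your $V$ is exactly $B-A^2$, and your bound $2\mathcal{L}\ge (A^k-1)^2+kA^{2k-2}V$ is the paper's split $2\mathcal{L}=(A^k-1)^2+(B^k-A^{2k})$ combined with $B^k-A^{2k}\ge kA^{2k-2}(B-A^2)$. After that, both arguments read off each coordinate at a $1/p_{\min}$ cost.

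The differences are minor refinements. You lower-bound $|A|$ directly from $\epsilon_1\le 1/8$ instead of through the initialization bound $|A(t)|\ge 2^{-1/k}|A(0)|$. You also handle the sign via $|A|$ and $\mathrm{sign}(A)$ rather than assuming $A>0$ throughout. This makes the lemma hold for any $\w$ with small loss, and it removes the need to argue that $A(t)$ keeps its sign along the trajectory.
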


\begin{proof}
    W.l.o.g. we assume $A(0)>0$, which guarantees that $A(t)>0$ for all $t$.
    
    We have $\mathcal{L}(\w(0))\ge \mathcal{L}(\w(t)) \ge \frac{1}{2}\qty(1-2A(t)^k+B(t)^k)\ge \frac{1}{2}\qty(1-A(t)^k)^2$ because $B(t) \ge A(t)^2$. That means
    $\qty(A(t)^k-1)^2\le 2\epsilon_1. $ We also have $A(t)^k \le 2$ and  $|A(t)|\ge 2^{-1/k}|A(0)|$, so
    \begin{align*}
        \sqrt{2\epsilon_1} \ge \abs{A^k(t) - 1} = \abs{(A(t)-1)\sum_{i=0}^{k-1}A(t)^i}\ge  |A(t)-1|.
    \end{align*}
    That leads to $(A(t)-1)^2 \le 2{\epsilon_1}.$

    We can similarly upper bound $B-A^2:$ by $A\le 2^{1/k}, B\ge A^2$ and $B\le 2^{2/k},$ we have 
    \begin{align*}
        2\mathcal{L}(\w(t))\ge B^k - A^{2k} = (B-A^2)\sum_{j=0}^{k-1}B^{k-1-j}A^{2j}\ge \sum_{j=0}^{k-1} A^{2k-2}(B-A^2)\ge k A(0)^{2k-2} (B-A^2).
    \end{align*}
    So $B-A^2\le \frac{2\epsilon_1}{kA(0)^{2k-2}}.$
    Combine both terms, we have
    \begin{align*}
        (B-A^2)+(A-1)^2 = \sum_{i=1}^d p_i(w_i-w^*_i)^2 \le O(\epsilon_1).
    \end{align*}
    which gives $\norm{\w-\w^*}_\infty \le O\qty(\sqrt{\frac{\epsilon_1}{p_{\min}}}).$ The proof is exactly the same when $A(0)<0$.
\end{proof}
\subsubsection{Finite sample analysis}
Finally, we apply minibatch stochastic gradient descent and prove that the descent trajectory follows the population dynamics. During the finite sample analysis, we need to inductively prove that the batch noise each step is bounded for all time, and the PL-condition always holds on the population dynamics. In this way, we still ensure the population loss decrement by the minibatch gradient descent updates.

Recall the definition of the gradient batch-noise. For each batch $B_t$ at time $t$, we have the batched gradient
$$\hat{g}_t = \frac{1}{B_t}\sum_{b_i=1}^{B_t}\nabla_{\w}\ell (\w(t),X_{b_i}).$$
whose expectation is the population gradient $\nabla \mathcal{L}(\w(t)).$ Since the batch noise and sample noise for the sample $X_{b_i}$ are 
$$\xi_t = \hat{g_t}-\nabla \mathcal{L}(\w(t)), \xi_{t,i} = \nabla_{\w}\ell (\w(t),X_{b_i})-\nabla \mathcal{L}(\w(t)).$$
Assume that the smoothness constant has the upper bound $L$ (which we will upper bound inductively). Then we have the population loss decrement formula (we pick $\eta \le \frac{1}{2L}$):
\begin{align*}
    \mathcal{L}(\w_{t+1}) &\le \mathcal{L} (\w_t) -\eta \langle \nabla \mathcal{L}(\w(t)), \hat{g_t}\rangle +\frac{L\eta^2}{2}\norm{\hat{g_t}}\\
    &=\mathcal{L}(\w(t))-\eta \norm{\nabla \mathcal{L}(\w(t))}^2-\eta \langle\nabla\mathcal{L}(\w(t)),\xi_t\rangle +\frac{L\eta^2}{2}\qty(\norm{\nabla \mathcal{L}(\w(t))}^2+2\langle \nabla \mathcal{L}(\w(t)),\xi_t\rangle+\|\xi_t\|^2)\\
    &\le \mathcal{L}(\w(t))- \frac{\eta}{2}\norm{\nabla \mathcal{L}(\w(t))}^2 +\frac{\eta}{2}\abs{\langle\nabla\mathcal{L}(\w(t)),\xi_t\rangle}+\frac{\eta}{2}\norm{\xi_t}^2.
\end{align*}
Therefore, we just need to control the failure probability that $\norm{\xi_t}$ exceeds a certain threshold. We note that if $$\norm{\xi_t}\le \frac{1}{8}\norm{\nabla \mathcal{L}(\w(t))},$$
we will have (recall that we abbreviate $\mathcal{L}(\w(t))=L_t$):
$$L_{t+1}\le L_t - \frac{\eta}{2}\norm{\nabla L_t}^2 + \frac{\eta}{2} \cdot(\frac{1}{8}+\frac{1}{64})\norm{\nabla L_t}^2\le L_t - \frac{\eta}{3}\norm{\nabla L_t}^2.$$
And if the PL-condition holds, that will still lead to exponential decay of the population loss regardless of the gradient noise.
Next, we need to prove that the events $\norm{\xi_t}\le \frac{1}{8}\norm{\nabla \mathcal{L}(\w(t))}$ happen with very high probability inductively for all $t$, together with the boundedness conditions it requires.

We now write down the concentration inequalities that bounded the norm of the gradient noise given the uniform upper bound of the parameter $\norm{\w(t)}_\infty$.
\begin{lemma}\label{lem:concentration}
    If $\|\w(t)\|_\infty\le R$ for all $t$, there exist some absolute constant $c_3,c_4$ s.t. for all $t>0$:
    $$\P\qty[\norm{\xi_t}\ge s]\le 2\exp\qty(-\frac{Bs^2}{c_3k^2R^{2k-2}L_t+c_4kR^{2k-1}s}).$$
    In particular, when $s = \frac{1}{8}\norm{\nabla L_t}$ and $B\ge O\qty(k^2\max\{R^{2k-2},R^{2k-1}\}\qty(\frac{L_t}{s^2}+\frac{1}{s})\log\frac{2}{\delta_t})$, we have with probability $1-\delta_t$, $\norm{\xi_t}\le s.$
\end{lemma}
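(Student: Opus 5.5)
We apply a vector Bernstein inequality to the i.i.d.\ sum $\xi_t=\frac1B\sum_{i=1}^B \xi_{t,i}$. We condition on the current iterate $\w(t)$; the minibatch is fresh, so the $\xi_{t,i}$ are conditionally i.i.d.\ and mean zero. Two ingredients are needed: an almost-sure bound $\norm{\xi_{t,i}}_2\le M$ and a variance bound $\E\norm{\xi_{t,i}}_2^2\le \sigma^2$. The vector Bernstein inequality (e.g.\ for sums in a Hilbert space) then gives
\[
\P\qty[\norm{\xi_t}\ge s]\le 2\exp\qty(-\frac{Bs^2}{c\,\sigma^2+c' M s}),
\]
where the factor $2$ is absorbed in the usual way. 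So the work reduces to identifying $\sigma^2\lesssim k^2R^{2k-2}L_t$ and $M\lesssim kR^{2k-1}$.

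\textbf{Per-sample gradient.} After the reparametrization $\w^*=\mathbf 1_d$, a sample $\X=(\e_{I_1},\dots,\e_{I_k})$ has residual $r(\X)=\prod_{j} w_{I_j}-1$. Its gradient is $\nabla\ell=r(\X)\sum_{j=1}^k\big(\prod_{l\ne j}w_{I_l}\big)\e_{I_j}$. Because $\|\w\|_\infty\le R$, the Jacobian factor has norm at most $kR^{k-1}$, and $|r(\X)|\le R^k+1$. Hence $\norm{\nabla\ell}\lesssim kR^{2k-1}$ when $R\ge1$; when $R<1$ we use $\max\{R^{2k-2},R^{2k-1}\}$ as in the statement. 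The population gradient is an average of such vectors, so it obeys the same bound, and therefore $M\le 2\sup\norm{\nabla\ell}\lesssim kR^{2k-1}$.

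\textbf{Variance.} Bounding the second moment by $\E\norm{\nabla\ell}^2$ gives
\[
\E\norm{\nabla\ell}^2\le k^2R^{2k-2}\,\E[r(\X)^2]=2k^2R^{2k-2}L_t,
\]
since $\E[r^2]=2\mathcal L(\w(t))$ by definition of the population loss. This yields $\sigma^2\lesssim k^2R^{2k-2}L_t$. The main point to verify is that the variance is controlled by the loss itself, rather than by a constant. This is what makes the required batch size scale like $L_t/s^2$, and it follows directly because the Jacobian factor is uniformly bounded.

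\textbf{Batch-size consequence.} We set the exponent to at least $\log(2/\delta_t)$ and solve for $B$:
\[
B\ge (c_3k^2R^{2k-2}L_t/s^2+c_4kR^{2k-1}/s)\log(2/\delta_t).
\]
This is implied by the stated $O(k^2\max\{R^{2k-2},R^{2k-1}\}(L_t/s^2+1/s)\log\frac2{\delta_t})$. Choosing $s=\frac18\norm{\nabla L_t}$ then gives the claim.

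The only step needing some care is the conditioning. The hypothesis $\|\w(t)\|_\infty\le R$ is part of the induction in \Cref{lem:induction}, so the bound is applied conditionally on the filtration up to time $t$. A union bound over $t\le T$ with $\delta_t=\delta/T$ finishes the argument.
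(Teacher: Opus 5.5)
Your proposal is correct and matches the paper's proof: the same almost-sure bound $\|\xi_{t,i}\|\le \|\nabla\ell\|+\|\nabla L_t\|\lesssim kR^{2k-1}$, the same loss-controlled second-moment bound $\E\|\xi_{t,i}\|^2\le k^2R^{2k-2}\E[(f-1)^2]=2k^2R^{2k-2}L_t$, and the same appeal to a Hilbert-space vector Bernstein inequality. Your aside about $R<1$ is unnecessary, and in that regime $\max\{R^{2k-2},R^{2k-1}\}$ would not dominate the true a.s.\ bound $2kR^{k-1}$; this does not matter, since $R\ge 2$ in \Cref{lem:induction}.
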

\begin{proof}
    We first upper bound $\|\xi_{t,i}\|$ uniformly, which requires the upper bound for the sampled gradient $\nabla \ell_t(X)$:
    \begin{align*}
        \nabla \ell_t(X) = \qty(\prod_{i=1}^k (\w^\top\x_i)-\prod_{i=1}^k {\w^*}^\top\x_i)\nabla \sum_{i=1}^k \prod_{j\neq i}(\w^\top \x_j)\x_i \le 2kR^{2k-1}.
    \end{align*}
    Therefore the batch expectation and population gradient expectation can both be upper bounded by this, and we have
    \begin{align*}
        \|\xi_{t,i}\|\le \norm{\hat{g}_{t,i}}+\norm{\nabla L_t} \le 4kR^{2k-1}.
    \end{align*}

    Then we calculate the second moment of $\|\xi_{t,i}\|$. We know $\E[\norm{\xi_{t,i}}^2] \le \E_X[\norm{\hat{g}_{t,i}}^2]$. While 
    \begin{align*}
        \norm{\hat{g}_{t,i}}^2 \le (f(\w,X_{b_i})-1)^2\|\nabla f\|^2 \le (f(\w,X_{b_i})-1)^2k^2R^{2k-2}.
    \end{align*}
    So $\E[\norm{\xi_{t,i}}^2]\le 2k^2R^{2k-2}L_t$. Finally, we apply vector Bernstein inequality in Hilbert Space (\citet{martinez2024empirical}, Appendix D.3), and we finish the proof.
\end{proof}

With the concentration inequality, we can formulate our induction. We show that if all the high probability event happens, including the initialization event $\mathcal{E}_0$ and all the batch gradient noise concentration, we can show (1) the uniform upper bound for $\norm{\w}_\infty$ (2) monotonic decrement on the population loss. With the induction lemma, a final union bound finishes the proof of the main theorem.
\begin{lemma}[Induction]
\label{lem:induction}
    If $k = \Theta(1), k\ge 2, \eta \le O\qty(\frac{1}{k^2\|\p\|_2})$ and $B\ge \tilde{O}\qty(\frac{\log\frac{1}{\delta}}{\sqrt{p_{\min}\epsilon}})$, we show that with probability $1-\delta$, the following holds for all $t\ge t_0+1$ if they are satisfied with $t\le t_0$:
    \begin{enumerate}
        \item $\norm{\w(t)}_\infty\le R:=2+(\frac{A(0)}{2})^{-\frac{k-1}{k}}$. 
        \item $L_{t+1}\le \qty(1-\frac{\eta k p_{\min} \abs{A(0)}^{2k-2}}{6})L_t$ when $L_t\ge \epsilon = O(p_{\min})$.
    \end{enumerate}
    The induction base is also satisfied when $t=0$. Finally, $L_T\le \epsilon$ with $T\le \tilde{O}(\frac{1}{\eta p_{\min}}\log\frac{1}{\epsilon})$.
\end{lemma}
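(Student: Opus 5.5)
We argue by induction on $t$, working on the intersection of the good-initialization event $\mathcal{E}_0$ (from \Cref{lem:A0_vec} and \Cref{lem:B0_vec}) with the per-step concentration events of \Cref{lem:concentration}, each taken with failure probability $\delta_t=\delta/T$. A union bound over $t\le T$ then gives total failure probability at most $\delta$. At each step we carry four pieces of information:
\begin{itemize}
\item the sup-norm bound $\norm{\w(t)}_\infty\le R$ (item 1);
\item monotonicity $L_t\le L_0$;
\item the consequent bounds $|A(t)|\ge 2^{-1/k}|A(0)|$ and $B(t)^k\le 2A(t)^k$;
\item the smoothness bound $\norm{\nabla^2\mathcal{L}}_{op}\le 3k^2\|\p\|_2$.
\end{itemize}
At $t=0$ all of these follow from $\mathcal{E}_0$, since $R\ge 2$ dominates a Gaussian coordinate bound (after absorbing a $\sqrt{\log d}$ factor into $\tilde O$), and the stable-learning-rate computation applies directly.

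\textbf{Inductive step.} Suppose the hypotheses hold up to time $t$.

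First, the Hessian bound gives $L\le 3k^2\|\p\|_2$, so $\eta\le 1/(2L)$ and the descent inequality derived above applies.

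Second, we invoke \Cref{lem:concentration} with $s=\frac18\norm{\nabla L_t}$. By the PL inequality (\Cref{lem:pl_ineq}), $\norm{\nabla L_t}^2\ge 2kp_{\min}A(t)^{2k-2}L_t$, so
\[
\frac{L_t}{s^2}\lesssim \frac{1}{p_{\min}A(0)^{2k-2}},\qquad \frac1s\lesssim \frac{1}{\sqrt{p_{\min}L_t}}\le\frac{1}{\sqrt{p_{\min}\epsilon}}
\]
whenever $L_t\ge\epsilon$. Since $R,A(0),k=\Theta(1)$, the batch-size assumption $B\ge\tilde O(\log(1/\delta)/\sqrt{p_{\min}\epsilon})$ covers both terms, up to the $1/p_{\min}$ term, which we absorb into $\tilde O$, consistent with the main theorem's $B$. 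Hence $\norm{\xi_t}\le s$ with probability $1-\delta_t$, and therefore
\[
L_{t+1}\le L_t-\tfrac{\eta}{3}\norm{\nabla L_t}^2\le\Bigl(1-\tfrac{2\eta kp_{\min}A(t)^{2k-2}}{3}\Bigr)L_t.
\]
Using $A(t)^{2k-2}\ge 2^{-2}|A(0)|^{2k-2}$ yields item 2.

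Third, monotonicity $L_{t+1}\le L_t\le L_0$ restores $|A(t+1)|\ge2^{-1/k}|A(0)|$, $B(t+1)\le 2^{2/k}$, and the Hessian bound at $t+1$, by the same argument as in the stable-learning-rate paragraph.

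\textbf{Main obstacle: item 1.} The bound $B\le 2^{2/k}$ only controls coordinates with large $p_i$. Tail coordinates could in principle drift, because the noise acts on every coordinate. We control each coordinate directly from the update. Coordinate $j$ receives only samples containing $\e_j$, and its population drift is
\[
-\eta kp_j\bigl(B^{k-1}w_j-A^{k-1}\bigr).
\]
Whenever $|w_j|$ exceeds the fixed point $A^{k-1}/B^{k-1}\lesssim (A(0)/2)^{-(k-1)/k}$ (using $B\ge A^2$ and the lower bound on $A$), this drift is restoring. The batch-noise bound $\norm{\xi_t}\le\frac18\norm{\nabla L_t}$ also bounds the per-coordinate noise, since
\[
\norm{\nabla L_t}\lesssim \sqrt{\textstyle\sum_i p_i^2(\cdot)^2}.
\]
The remaining, hardest point is that the noise on coordinate $j$ must not exceed $p_j$ times the restoring force. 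The plan is to compare $|\xi_{t,j}|$ against the $j$-th gradient coordinate using a coordinate-wise Bernstein bound, with variance proportional to $p_j$, in place of the global $\ell_2$ bound. If that fails, the fallback is to accept a crude bound: summing drift increments shows any coordinate moves at most $\eta\sum_t\norm{\xi_t}_\infty$, and this is geometrically summable because $\norm{\nabla L_t}\propto\sqrt{L_t}$ decays exponentially. That keeps $|w_j|$ within an additive $O(1)$ of its initial value, giving the constant $2$ in $R$.

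Finally, iterating item 2 gives $L_T\le(1-c\eta p_{\min})^TL_0\le\epsilon$ for $T=\tilde O(\frac{1}{\eta p_{\min}}\log\frac1\epsilon)$.
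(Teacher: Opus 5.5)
Your skeleton is the paper's proof:
\begin{itemize}
\item a union bound over the per-step events of \Cref{lem:concentration};
\item the descent inequality with $\eta\le 1/(2L)$;
\item the PL inequality combined with $|A(t)|\ge 2^{-1/k}|A(0)|$ to get the factor $1/6$;
\item monotonicity of $L_t$ to restore the bounds on $A(t)$, $B(t)$ and the Hessian;
\item for item 1, the restoring drift once $|w_j|$ exceeds the fixed point $(A(t)/B(t))^{k-1}$.
\end{itemize}
The step you leave open is the one the paper glosses over. The paper concludes that ``the norm of noise is upper bounded by the population gradient'' coordinatewise from $\norm{\xi_t}\le\frac18\norm{\nabla L_t}$. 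That is a global $\ell_2$ bound dominated by the head coordinates, and it gives no control of $|\xi_{t,j}|$ against the restoring force $kp_jB(t)^{k-1}$ of a tail coordinate. Your diagnosis is right; what remains is to commit to a remedy.

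\textbf{Use your first remedy; it works with the stated batch size.}
\begin{itemize}
\item The $j$-th coordinate of a per-sample gradient vanishes unless $\e_j$ occurs among $\x_1,\dots,\x_k$ (probability at most $kp_j$), and it is bounded by $2kR^{2k-1}$. So its variance is at most $4k^3R^{4k-2}p_j$.
\item Apply scalar Bernstein alongside the global bound (which you still need for item 2), with a union bound over $j\in[d]$ and $t\le T$. This gives $|\xi_{t,j}|\le\frac12 kp_jB(t)^{k-1}$ for all $j,t$ once the batch size is $\gtrsim\log(dT/\delta)/p_{\min}$, using $B(t)\ge A(t)^2\ge 2^{-2/k}A(0)^2$.
\item Then $|w_j(t)|\ge R-1$ forces $|w_j|$ to decrease, and $|w_j(t)|<R-1$ cannot jump past $R$ in one small step.
\end{itemize}
This $1/p_{\min}$ requirement, like the $L_t/s^2$ term in your global concentration step, is not absorbed into $\tilde O$, since it is $\Theta(d^\alpha)$. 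It is covered because $\epsilon=O(p_{\min})$ gives $1/\sqrt{p_{\min}\epsilon}\gtrsim 1/p_{\min}$.

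\textbf{Discard the fallback.} The per-step contraction factor is only $1-\Theta(\eta p_{\min})$. The geometric sum therefore gives only $\eta\sum_t\norm{\nabla L_t}\lesssim\sqrt{L_0}/(p_{\min}|A(0)|^{2k-2})=O(d^{\alpha})$. Even Cauchy--Schwarz with $\eta\sum_t\norm{\nabla L_t}^2\le 3L_0$ only yields $O(\sqrt{\eta T})=\tilde O(d^{\alpha/2})$. Neither is an additive $O(1)$.

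\textbf{Two smaller corrections, both inherited from the statement.}
\begin{itemize}
\item The restoring step needs $R-2\ge\sup_t|A(t)/B(t)|^{k-1}$. The only a priori bound is $|A/B|^{k-1}\le|A(t)|^{-(k-1)}\le 2^{(k-1)/k}|A(0)|^{-(k-1)}$, which is larger than the stated $(A(0)/2)^{-(k-1)/k}$ whenever $|A(0)|<1$. Take $R:=2+(A(0)^k/2)^{-(k-1)/k}$ instead.
\item Since $\norm{\w(0)}_\infty\approx r\sqrt{2\log d}$, the base case of item 1 needs $R$ enlarged to at least this value. Through $R^{4k-2}$ in the variance this costs only polylogarithmic factors in the batch size, which is what your ``absorb into $\tilde O$'' actually amounts to.
\end{itemize}
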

\begin{proof}
    We first prove the $t=0$ case. The first requirement is satisfied when the initialization is nice, i.e. when $\mathcal{E}_0$ holds with failure probability $\delta/(T+1)$. Then we prove the loss decrement. 
    For the first step gradient, we know $\norm{\nabla L_0} \ge \Theta(1)$. By the previous concentration \Cref{lem:concentration}, $\norm{\xi_0} \le \frac{1}{8}\norm{\nabla L_0}$ holds with failure probability $\delta/(T+1)$. Thus, we have 
    $$L_{1}\le L_0 - \frac{\eta}{2}\norm{\nabla L_0}^2 + \frac{\eta}{2} \cdot(\frac{1}{8}+\frac{1}{64})\norm{\nabla L_0}^2\le L_0 - \frac{\eta}{3}\norm{\nabla L_0}^2.$$
    With the PL-condition
    $\norm{\nabla {L}_0}_2^2 \ge 2kp_{\min} A(0)^{2k-2}\mathcal{L}(\w(0)),$ we know
    the second hypothesis for base case holds.

    Now we assume for all $t'\le t$, the induction holds with failure probability $\frac{t\delta}{T+1}$. First, we upper bound the population gradient norm by the population loss (since $B(t)\ge A^2(t)$):
    \begin{align*}
        \norm{\nabla \mathcal{L}(\w(t))}_2^2 &= k^2 \qty(B(t)^{k-1}\w(t)-A(t)^{k-1}\w^*)^\top \D^2 \qty(B(t)^{k-1}\w(t)-A(t)^{k-1}\w^*)\\
        &\le k^2 p_{\max} \qty(B(t)^{k-1}\w(t)-A(t)^{k-1}\w^*)^\top \D \qty(B(t)^{k-1}\w(t)-A(t)^{k-1}\w^*)\\
        &=k^2 \qty(B(t)^{2k-1}-2A(t)^{k}B(t)^{k-1}+A(t)^{2k-2}) \\
        &= k^2A^{2k-2}\qty(B^k(\frac{B^{k-1}}{A^{2k-2}}) - 2A(t)^k(\frac{B^{k-1}}{A^{2k-2}}) + 1) \le 2k^2 B^{k-1} L_t.
    \end{align*}
    By the induction on the decreasing population loss, we still have $|A|\le 2^{1/k}, |B|\le 2^{2/k}.$ Also, $|A(t)|$ has the uniform lower bound $|A(t)|\ge \frac{1}{2^{1/k}}|A(0)|.$ 

    Now we prove that the batch size is large enough for the concentration. When we pick $s = \frac{1}{8}\norm{\nabla L_t}$, the necessary batch size is ($k=\Theta(1)$, we abbreviate all constants.)
    $$B_t= O\qty(k^2R^{2k-2}\qty(\frac{L_t}{s^2}+\frac{1}{s})\log \frac{2}{\delta_t})= O\qty(\qty(\frac{L_t}{s^2}+\frac{1}{s})\log \frac{2}{\delta_t}).$$
    Note that we have the PL-condition $\norm{\nabla \mathcal{L}(\w(t))}_2^2 \ge 2kp_{\min} A(t)^{2k-2}\mathcal{L}(\w(t))$ and $|A(t)|\ge \frac{1}{2^{1/k}}|A(0)|=\Theta(1)$. Therefore the coefficient $\qty(\frac{L_t}{s^2}+\frac{1}{s})$ has the upper bound
    $$\qty(\frac{L_t}{s^2}+\frac{1}{s})\le O\qty(\frac{L_t}{p_{\min}L_t}+\frac{1}{\sqrt{p_{\min}L_t}})\le O\qty(\max\qty{\frac{1}{p_{\min}}, \frac{1}{\sqrt{p_{\min}\epsilon}}}).$$
    That means $B\ge \tilde{O}(\frac{1}{\sqrt{p_{\min}\epsilon}})$ suffices for the concentration with $\epsilon\le O(p_{\min})$.
    
    We then prove that it will not exceed the norm upper bound given the concentration inequality on this batch holds.
    We directly calculate the gradient for each entry $w_i(t):$
    \begin{align*}
        \nabla {\mathcal{L}}(\w)_i = kp_i(B(t)^{k-1}w_i(t)-A(t)^{k-1}w^*_i), 
        \nabla \hat{\mathcal{L}}_{B_t}(\w)_i = \nabla {\mathcal{L}}(\w)_i +\xi_{t,i}
    \end{align*}
    If $w_i\le R-1$, we know one step gradient descent with learning rate $\eta$ won't exceed the limit since the batched gradient norm is upper bounded:
    $$\abs{w_{i}(t+1)}\le |w_i(t)|+\eta|\nabla \hat{\mathcal{L}}_{B_t}(\w)_i|\le R-1+2\eta k^2 B^{k-1}L_t\le R. $$
    If $w_i\ge R-1,$ the population gradient will actually point to the shrinking direction. 
    \begin{align*}
        \nabla {\mathcal{L}}(\w)_i &= kp_i(B(t)^{k-1}w_i(t)-A(t)^{k-1}w^*_i) \\
        &=kp_iB^{k-1}(w_i(t)-\frac{A(t)^{k-1}}{B(t)^{k-1}}w^*_i)\\
        &\ge kp_iB^{k-1}\qty(w_i(t)-\frac{1}{|A(t)|^{k-1}})\ge kp_iB^{k-1}.
    \end{align*}
    The last two inequalities are because $A(t)^2\le B(t)$, the upper bound for $|w_i|$ and the uniform lower bound for $|A(t)|$. With the concentration, the norm of noise is upper bounded by the population gradient, so the coordinate $w_i$ will shrink towards 0 and guarantee that $\|\w(t+1)\|_\infty\le R.$ The first argument thus holds for time $t+1$.

    Next we prove the loss decrement argument. We have the descent formula when the concentration inequalities $\norm{\xi_t}\le \frac{1}{8}\norm{\nabla L_t}$ hold: 
    $$L_{t+1}\le L_t - \frac{\eta}{2}\norm{\nabla L_t}^2 + \frac{\eta}{2} \cdot(\frac{1}{8}+\frac{1}{64})\norm{\nabla L_t}^2\le L_t - \frac{\eta}{3}\norm{\nabla L_t}^2.$$
    Use the PL-condition
    $\norm{\nabla \mathcal{L}(\w(t))}_2^2 \ge 2kp_{\min} A(t)^{2k-2}\mathcal{L}(\w(t))$ and $|A(t)|\ge \frac{1}{2^{1/k}}|A(0)|$. Therefore we have
    $$L_{t+1}\le (1-\frac{2\eta kp_{\min}2^{-2+2/k}|A(0)|^{2k-2}}{3})L_t\le (1-\frac{\eta kp_{\min}|A(0)|^{2k-2}}{6})L_t.$$
    With time $T\ge O\qty(\frac{1}{\eta k p_{\min}|A(0)|^{2k-2}}\log\frac{L_0}{\epsilon}) = \tilde{O}(\frac{1}{p_{\min}})$, the population loss $L_T\le \epsilon.$
\end{proof} 

We finally apply \Cref{lem:population_to_gt} and have $\min\{\norm{\w(T)+\w^*}_\infty,\norm{\w(T)-\w^*}_\infty\}\le O\qty(\sqrt{\frac{\epsilon}{p_{\min}}})$. We pick $\epsilon = O\qty(p_{\min}\varepsilon^2)$. For any $\varepsilon>0$, we need in total
$$N = B\cdot T = \tilde{O}\qty( \frac{1}{\sqrt{p_{\min}^2\varepsilon^2}}\cdot\frac{1}{p_{\min}})=\tilde{O}\qty(\frac{d^{2\alpha}}{\varepsilon})$$
samples to make $\min\{\norm{\w(T)+\w^*}_\infty,\norm{\w(T)-\w^*}_\infty\}\le \varepsilon.$ Thus we finish the proof.

\paragraph{Remark on the sample complexity}. The sample complexity depends on $k$ even though it does not appear on the exponent of $d$. Since $k$ is $\Theta(1)$ and $d\gg 1$, we write the sample complexity in the form of a polynomial of $d$. For simplicity and clarity, we hide the constant terms with $k$ in the $O(\cdot)$ notation. If $k$ is also taken into account, the sample complexity should be $O(c^kd^{2\alpha})$ v.s. $d^{\Omega(k)}$ where $c=\Theta(1)\ll d$ is a constant that only depends on $\alpha$. Intuitively, this is a necessary price to pay to escape a $k$-th order saddle. That means the sample complexity for the power law also increases with $k$, but the rate is much slower. Similar sample complexity separations can be found in the learning multi-index model literature like \citet{damian2022neural}.


\section{Multi-step Arithmetic}
We use a basic multi-step arithmetic task with operators $(+, -, \times)$, $4$ operators, and operands sampled uniformly from $[1,50]$. We provide the formula expression to the model and ask it to directly output the answer. The expressions follow standard mathematical precedence rules where multiplication is evaluated before addition and subtraction. We use the Qwen3 tokenizer~\citep{qwen3} to tokenize the prompts and labels.

During training, we use the AdamW optimizer with $(\beta_1, \beta_2) = (0.9, 0.999)$ and weight decay $0.01$. We train a Qwen3-0.6B model from scratch (random initialization) for $10{,}000$ steps with a per-device batch size of $128$ across $8$ gpus. We use a peak learning rate of $3 \times 10^{-4}$ with cosine decay and $500$ warmup steps. The model is trained in bfloat16 precision.

For the Zipf distribution experiments, we first randomly shuffle the integers in $[1, 50]$ to obtain a permutation $\pi$, then sample operands according to a power-law distribution where the probability of sampling the $k$-th element $\pi(k)$ is:
\begin{equation}
    p_k \propto \frac{1}{k^\alpha}, \quad k = 1, 2, \ldots, 50
\end{equation}
with Zipf exponent $\alpha = 1.0$. 

We evaluate every $500$ steps on $100{,}000$ uniformly sampled test expressions with a fixed random seed to ensure consistency. The prompt format is:
\begin{theobox*}{Multi-step Arithmetic}
\small
\vspace{0.05cm}
\textbf{Prompt:} ``User: Calculate 23 + 15 * 7 - 42 * 3.$\backslash n$Assistant:$\backslash$boxed\{''

\textbf{Label:} `` 2\}''
\end{theobox*}

where we prefill ``\texttt{\textbackslash boxed\{}'' and train the model to generate only the answer followed by ``\texttt{\}}''. Note that the Qwen3 tokenizer treats ``\texttt{\{-}'' as a single token, so we add a space before the answer to ensure consistent tokenization between training and evaluation for negative numbers.
\begin{figure}[h]
    \centering
    \includegraphics[width=0.9\linewidth]{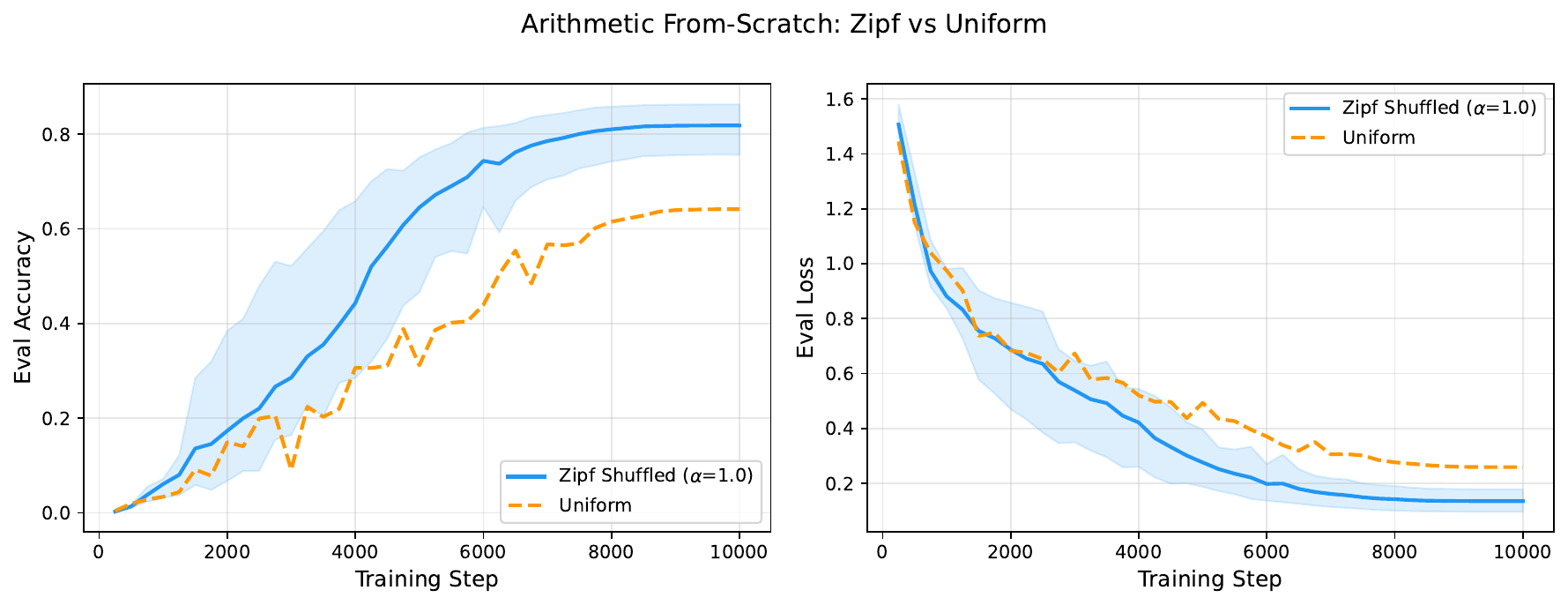}
    \caption{Results of Arithmetic from scratch (Qwen3-0.6B, 4 ops over $[1,50]$, operators $\{+,-,\times\}$). Zipf($\alpha{=}1.0$) with shuffled rank-to-value mapping vs uniform sampling, evaluated on uniformly sampled test expressions. Solid line represents the mean over 4 seeds; shaded region represents the min-max range. Both the eval set and the loss validation set are sampled from \textit{uniform} distribution. Zipf reaches ${\sim}82\%$ accuracy vs ${\sim}64\%$ for uniform training set.}
    \label{fig:arithmetic-fromscratch}
\end{figure}

\section{State tracking on $S_5$}
We use the experiment setting close to the \textbf{Permutation Composition} task in \citet{li2024chain}. We only consider the permutation group $S_5$, which is the smallest unsolvable symmetry group. For each permutation, it takes 5 tokens to represent. The target is the composed permutation. We only consider 4-hop state tracking composition task. The vocabulary size is only 5 with $\{1,2,3,4,5\}$. We directly train the embedding layer in this setting. The input sequence length is $5\times 4=20.$ The loss is only calculated on the last 5 token positions for predicting the final composition.

During training, we use the AdamW optimizer with $(\beta_1,\beta_2)=(0.9,0.999), \epsilon=10^{-8}$ and weight decay $10^{-6}$. We train an encoder transformer with 4 layers, 256 hidden dimension (random initialization) for 200k steps with a gpu of batch size of 256. All the data is generated on the fly. The peak learning rate is $2\times 10^{-4}$ with cosine decay to 0.1$\times$ of peak and 1000 warmup steps. The model is trained with fp16. By default we use lexicographical order, and $\alpha = 1.0$. The experiment in the main figure is $\alpha=1.5$ with random order, in order to ablate the effect of lexicographical order and only exhibit the effect of the asymmetry. All test dataset is using uniform distribution if without specification. 

For understanding the learning order of the different skills, we further include five different skill bins (rank 0-20\%, ..., 80\%-100\%) to measure the learning accuracy/loss. In the appendix, the average (k-hop) loss/accuracy curves are online test loss/accuracy, depending the training power law loss.
\begin{theobox*}{State Tracking ($S_5$)}
\small
\vspace{0.05cm}
\textbf{Input:} The $k$ permutations: (\textbf{1 2 3 4 5} 1 2 3 4 5 \textbf{1 3 2 4 5} 1 2 3 4 5). \\
\textbf{Output:} (1 3 2 4 5).
\end{theobox*}

\subsection{The effect of the exponent $\alpha$}
\label{appendix:exponent}
In this section, we ablate the effect of the exponent of $\alpha$. We train five different $\alpha\in \{0.5, 0.75, 1.0, 1.25, 1.5\}$ with the same initialization and fix the order of the permutations as lexicographical order. The results are as shown in \Cref{fig:fail_power_law} (failed runs with small alpha 0.5 and 0.75) and \Cref{fig:success_power_law} (success runs with larger alpha 1.0, 1.25 and 1.5). The result actually echoes our theory and indicates that there is a trade-off on the exponent $\alpha$.

The summary of the results are:
\begin{itemize}
    \item \textbf{Large enough $\alpha$ is necessary}. We find that when $\alpha$ is too small such as 0.5, the optimization still fails or gets stuck in an early phase, as shown in \Cref{fig:fail_power_law}. That corresponds to the sufficient condition that $\alpha>1$ in our theorem. Though the condition is not a necessary condition for $k$-multiplicative composition nor provably transferrable to $S_5$ composition tasks, the intuition holds that \textbf{if the head probability is not large enough}, the optimization landscape won't good enough for learning. 
    \item \textbf{Larger $\alpha$ leads to faster initial descent, but suffers more with long-tail}: As shown in \Cref{fig:success_power_law}, \textbf{larger} $\alpha$ leads to \textbf{faster training at first}. The test loss in Bin 1 quickly decreases and enable better learning of the compositional skill generally (e.g. $\alpha=1.5$ converge much faster than $\alpha =1.0$ or 1.25). However, the tail skills' learning will be slowed down due to smaller sampling probability, so the $\alpha =1.5$ one falls behind in the end.
\end{itemize}
\begin{figure}[ht]
    \centering
    \includegraphics[width=0.65\linewidth]{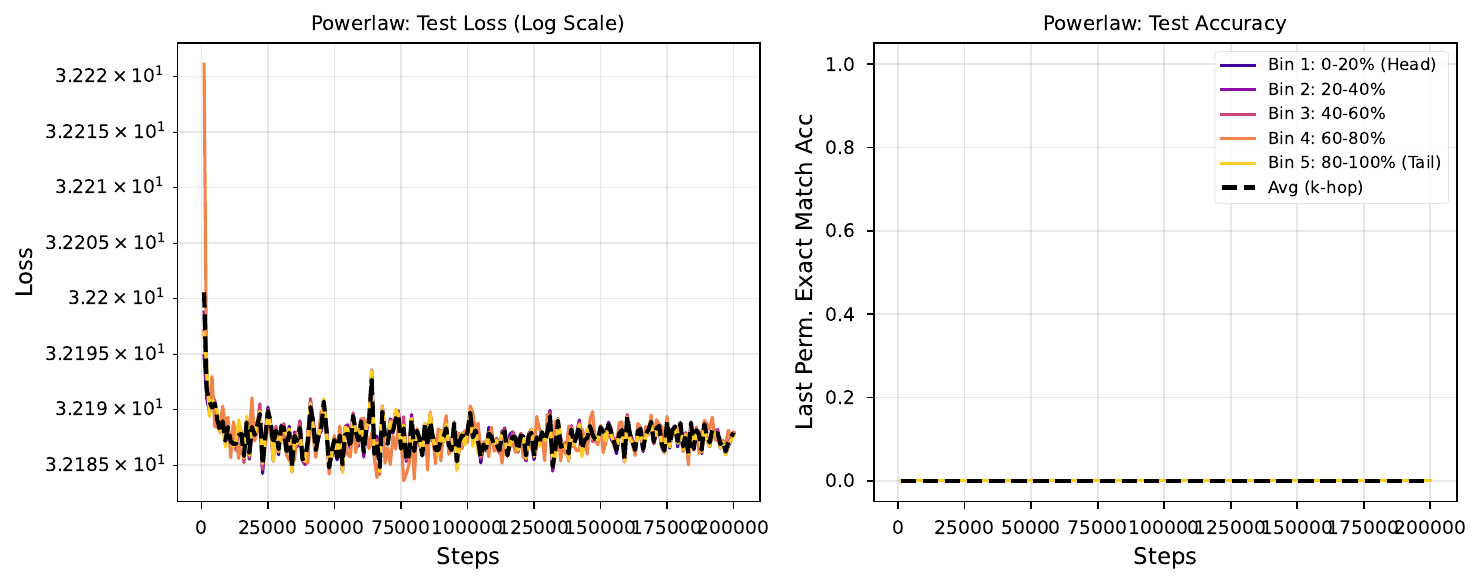}
    \includegraphics[width=0.65\linewidth]{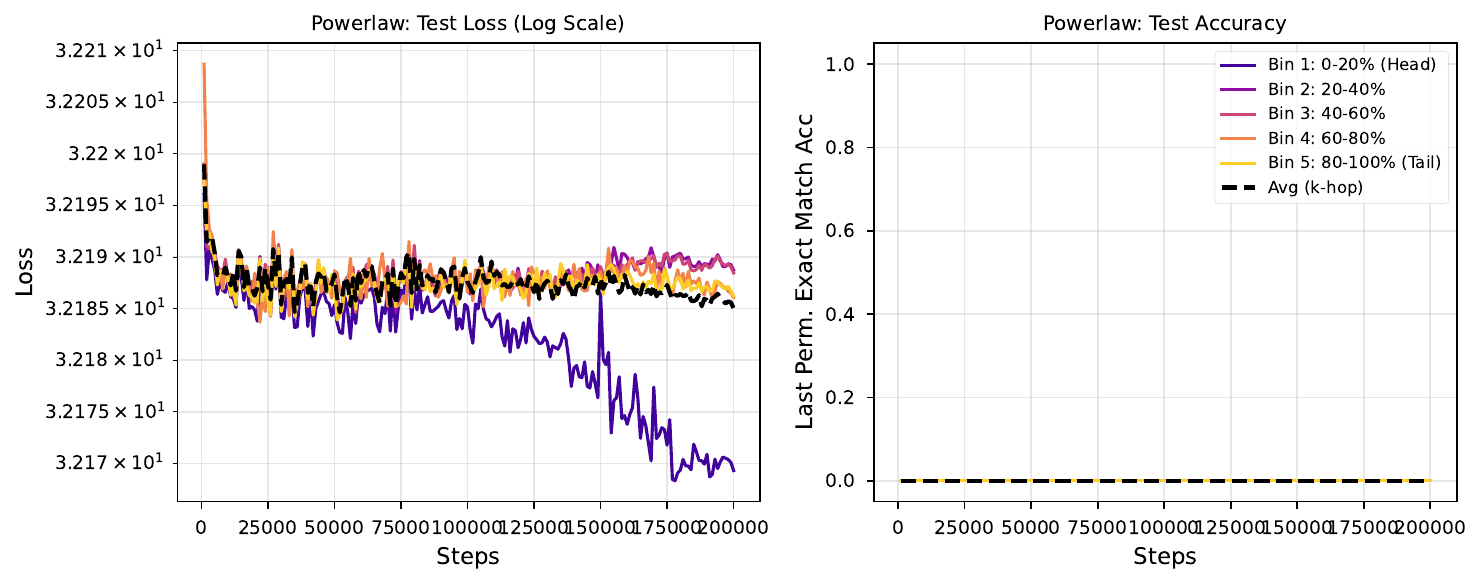}
    \caption{The loss curves and accuracy curves for $\alpha =0.5$ (Top) and $\alpha=0.75$ (Bottom). When the exponent is not large enough, the optimization is still not benign enough for successful learning of composition.}
    \label{fig:fail_power_law}
\end{figure}

\begin{figure}[ht]
    \centering
    \includegraphics[width=0.65\linewidth]{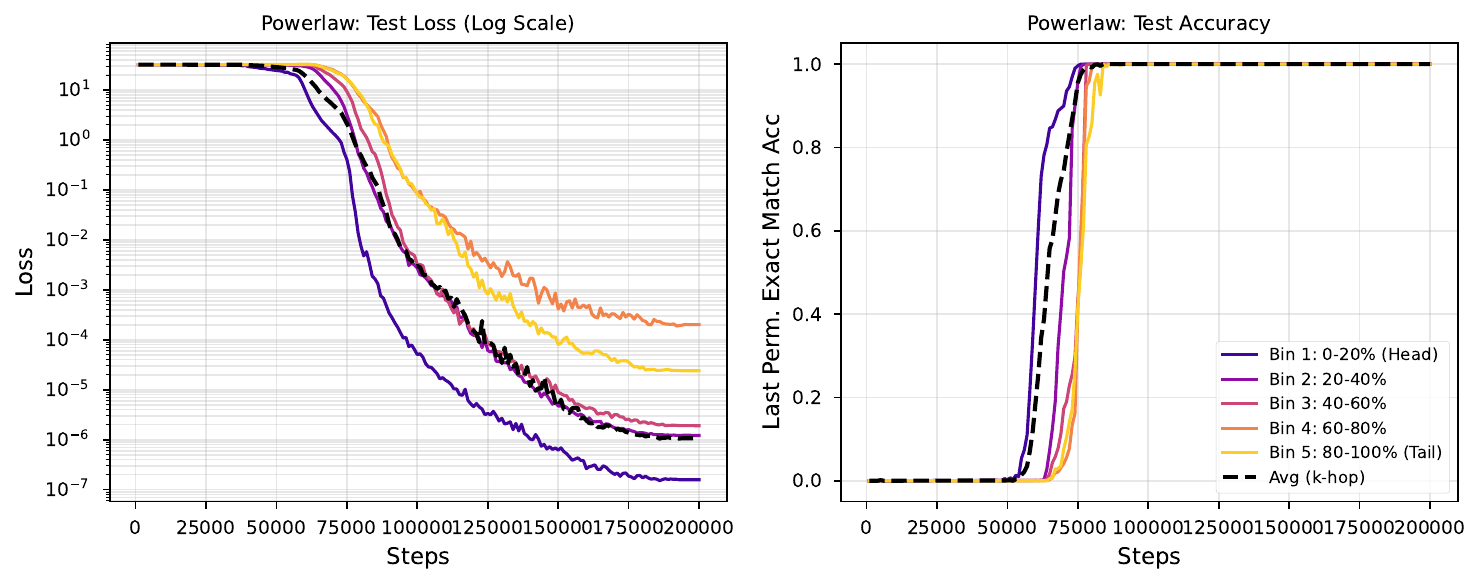}
    \includegraphics[width=0.65\linewidth]{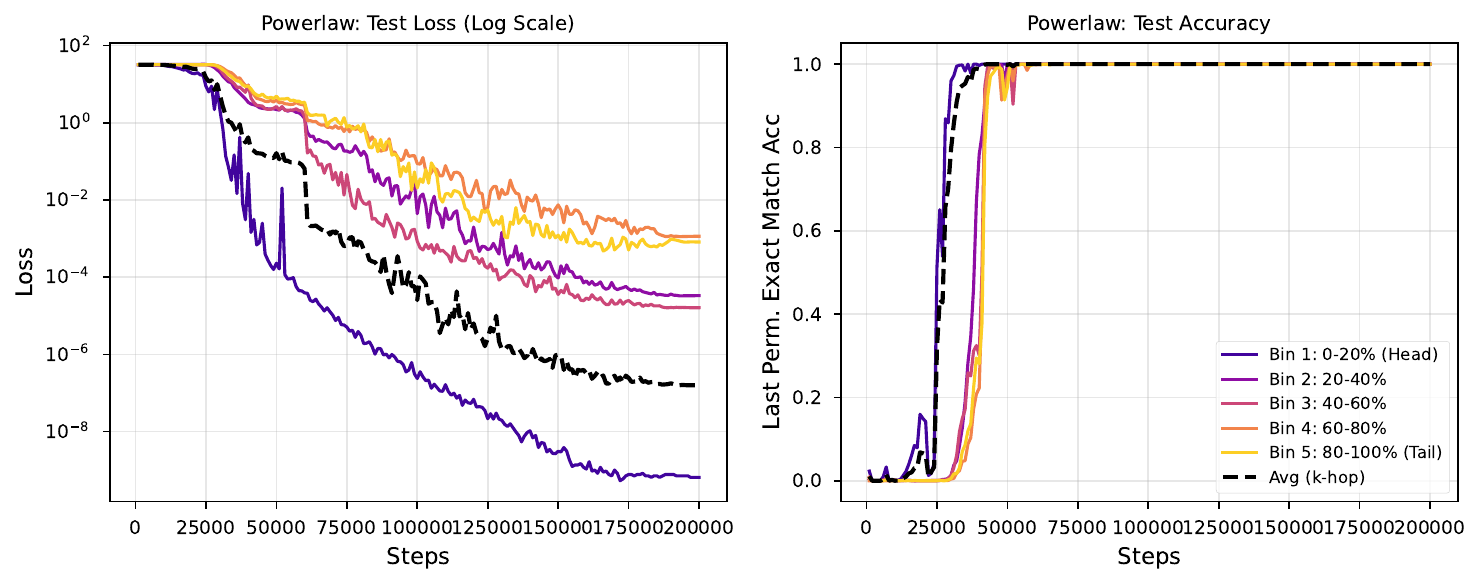}
    \includegraphics[width=0.65\linewidth]{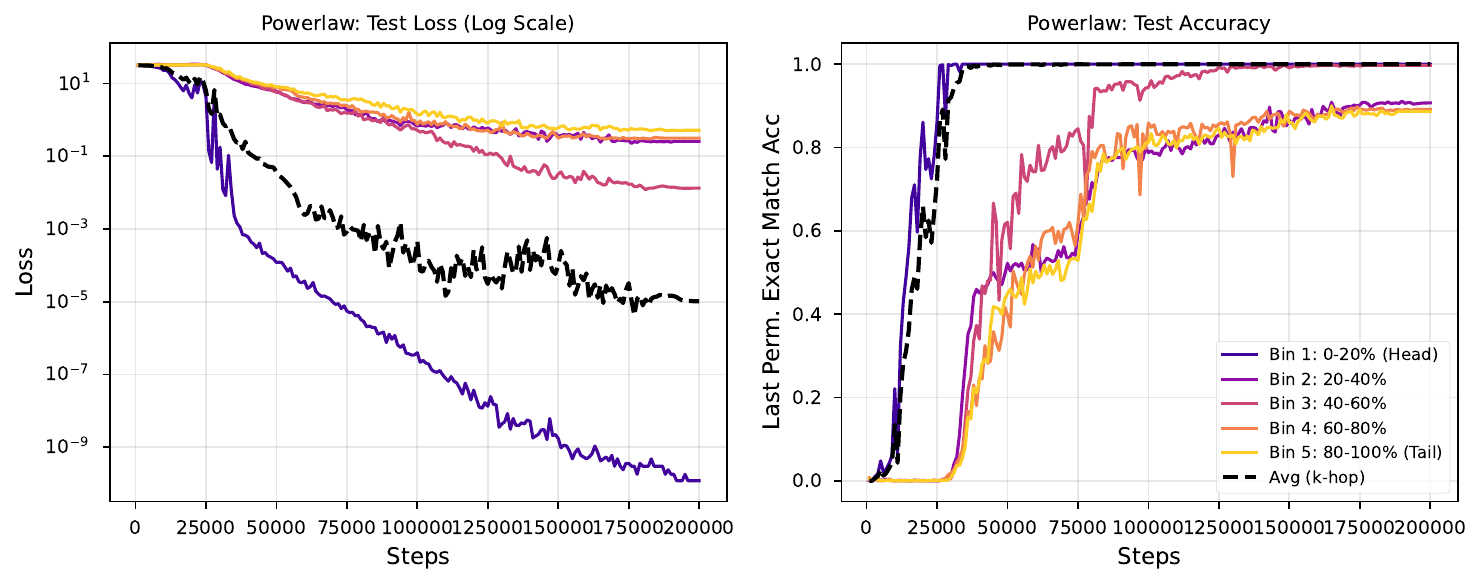}
    \caption{The loss curves and accuracy curves for $\alpha =1.0$ (Top), $\alpha=1.25$ (Mid) and $\alpha=1.5$ (Bottom).}
    \label{fig:success_power_law}
\end{figure}

We also have the similar loss landscape visualization (\Cref{fig:alphas_landscape}) as a side evidence of how different 
$\alpha$ improves the landscape.
\begin{figure}[h]
    \centering
    \includegraphics[width=0.6\linewidth]{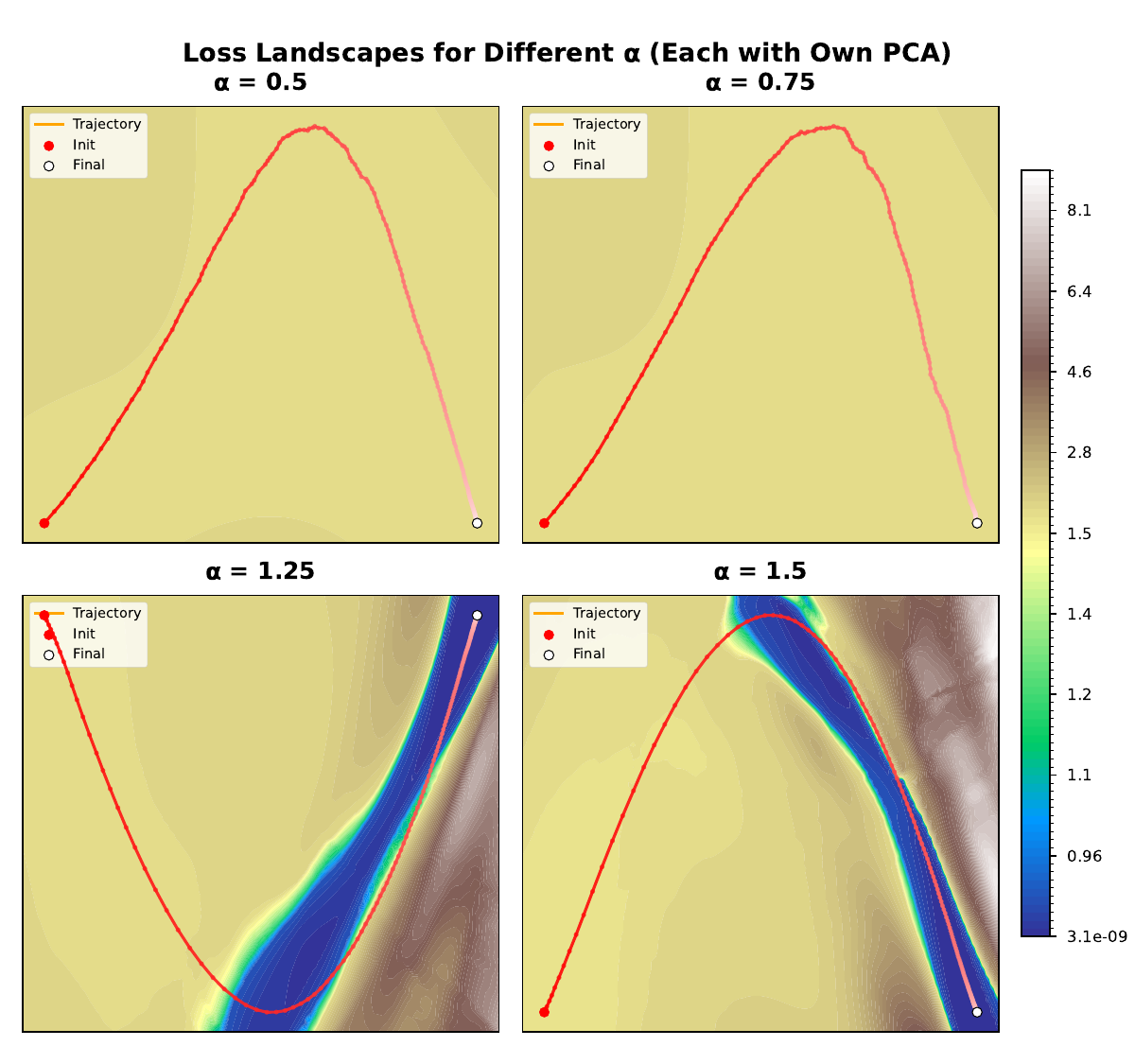}
    \caption{Landscape visualization for different $\alpha$s. Larger $\alpha$ has better initial landscapes, while the landscape for small $\alpha$s are still flat. }
    \label{fig:alphas_landscape}
\end{figure}

\subsection{The order of the operation}
\label{appendix:order_of_s5}

Because the skills are not all equally difficult, the ordering of skills in the power-law sampling procedure can affect learning dynamics. Some skills are comparatively easy or fundamental, such as $\{1,-1\}$ in arithmetic and the identity permutation in $S_5$. We show in \Cref{fig:order_of_permutation} that \textit{the order does matter}: default lexicographical order of the numbers or permutations learn much faster with the same exponent $\alpha$. For example, now we select $\alpha = 1.5$. As shown in \Cref{fig:order_of_permutation}, the lexicographical order case learns slightly faster than the random rank. We also report that while a power law distribution with $\alpha = 1.0$ works for lexicographical order enables transformers to learn composition, models cannot learn composition with only $\alpha=1$.  

However, \textit{power law still significantly helps optimization} under a \textbf{random} order of permutations with an appropriate $\alpha$. Based on this initial experiments, the asymmetric power law still accounts for the improvement that makes the state tracking composition task learnable. We also tried the reversed lexicographical experiment in \Cref{fig:order_of_permutation} and the results are similar. We conjecture that the advantage can be strengthened by a designed/structured order of skills. 

\begin{figure}[h]
    \centering
    \includegraphics[width=0.7\linewidth]{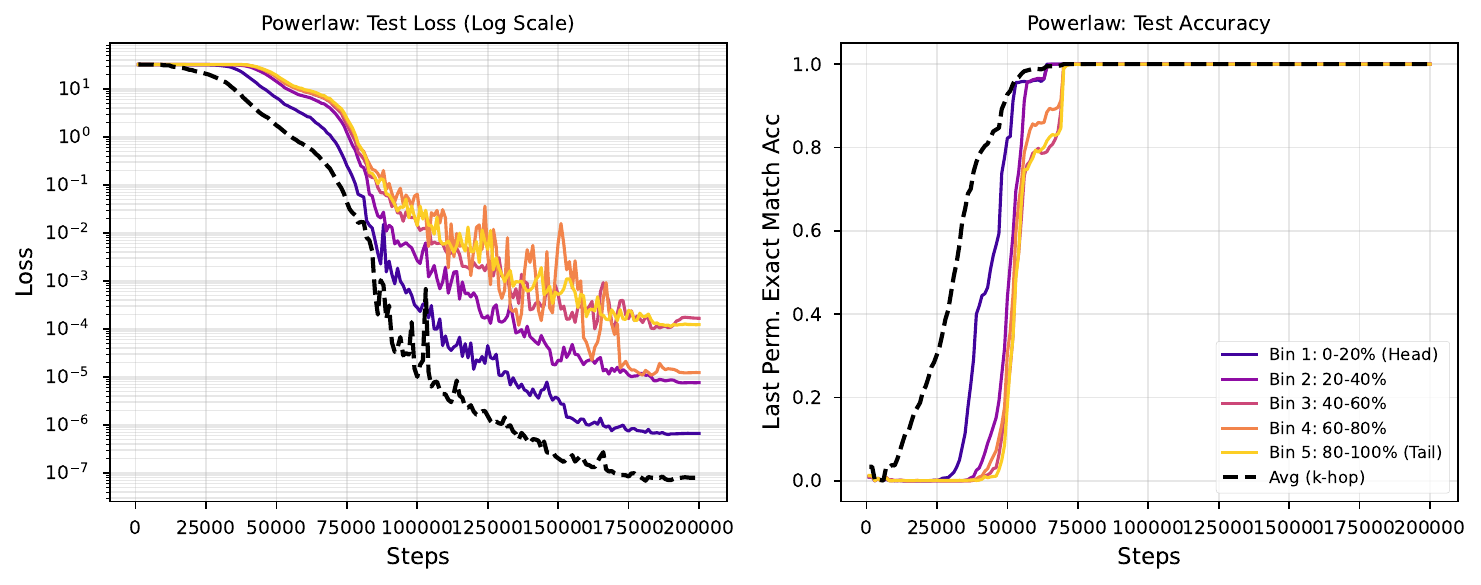}
    \includegraphics[width=0.7\linewidth]{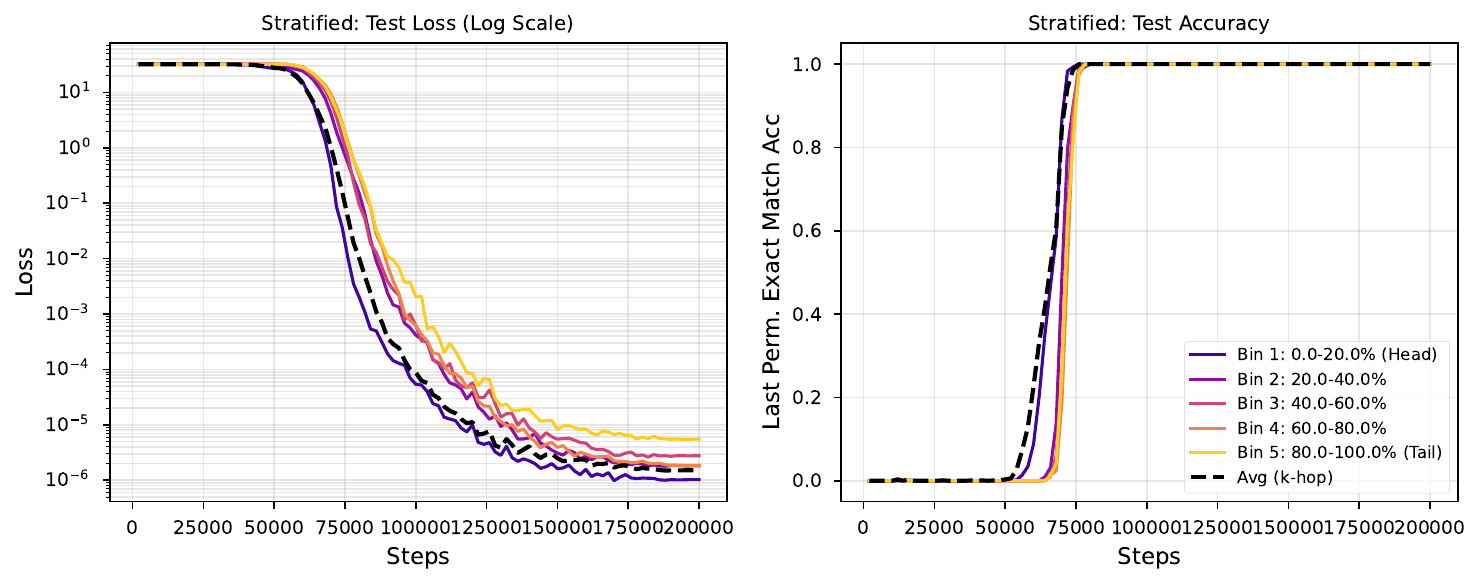}
    \includegraphics[width=0.7\linewidth]{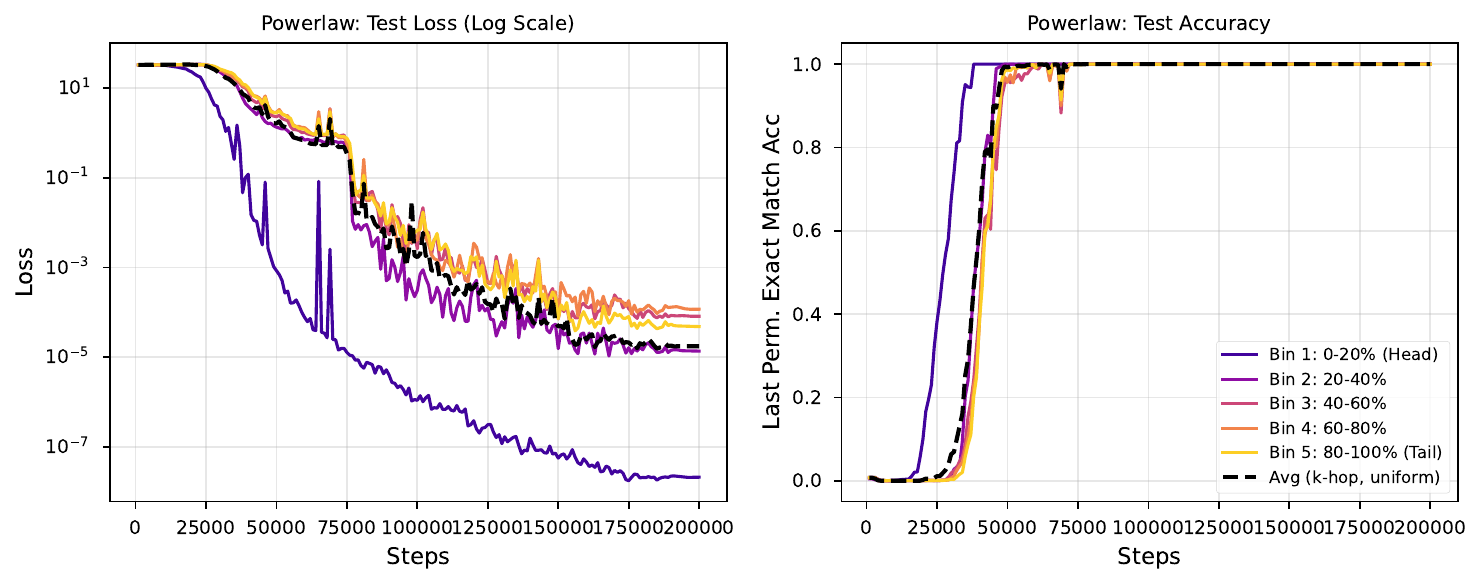}
    \caption{The order ablation experiments on $S_5$. The \textbf{top} two figures are using lexicographical order and the \textbf{middle} two are using random order, which is used in the power-law sampling. The learning process with lexicographical order learns slightly quicker than the random order. Here $\alpha =1.5$. The \textbf{bottom} one is using reverse lexicographical order. The training accuracy of each bin still shows similar performance.}
    \label{fig:order_of_permutation}
\end{figure}

\subsection{Compatibility with curriculum}
\label{appendix:curriculum}
We find that the power law distribution aligns naturally with curriculum learning and may offer a stronger training strategy. As described in \Cref{appendix:curriculum}, we adopt an explicit curriculum over hop counts $k=1,2,3,4$, following \citet{wang2025learning}, so that supervision progresses from easier to harder examples as task complexity increases. Beyond the uniform baseline, we additionally test a power-law distribution to examine this hypothesis. The experimental results indicate that, even with curriculum learning, uniform training continues to show noticeable plateaus in the loss curve. By comparison, the power law distribution further facilitates optimization, significantly mitigating these plateaus and accelerating training, likely by improving the loss landscape throughout the optimization path.
\begin{figure}[h]
    \centering
    \includegraphics[width=0.8\linewidth]{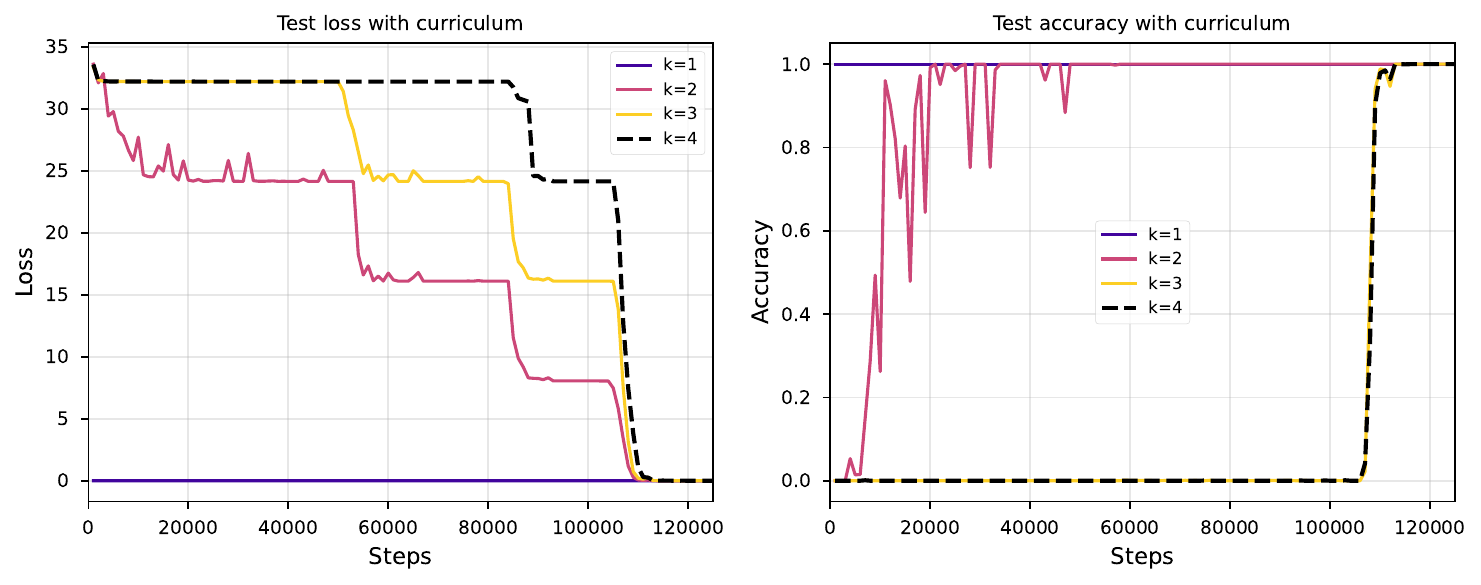}
    \includegraphics[width=0.8\linewidth]{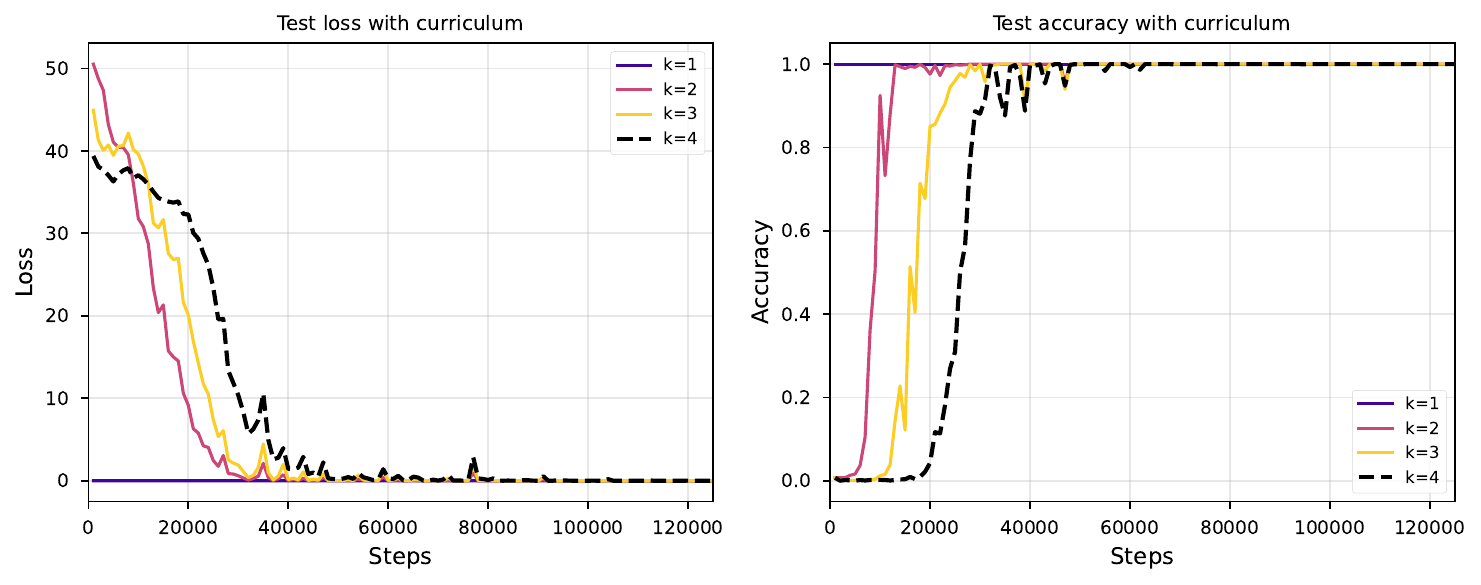}
    \caption{(\textbf{Top}) Explicit curriculum with mixture of $k=1,2,3,4$ hops under uniform distribution. The model eventually learns the group elements composition, but the loss plateaus occasionally, suggesting that curriculum alone does not fully remove the optimization difficulty. (\textbf{Bottom}) Power law + explicit curriculum further accelerates training and substantially reduces the plateaus. This indicates that power law sampling is complementary to explicit curricula rather than merely duplicating their effect.}
    \label{fig:curriculum+zipf}
    \vspace{-0.2cm}
\end{figure}

\subsection{The granularity of the asymmetry}
Our results show that power-law distribution is a sufficient condition for successful training on compositional reasoning tasks. Although the analysis does not rule out other asymmetric distributions that enable LLMs to acquire composition capabilities, we conjecture that \textit{better `granularity of asymmetry' may lead to better training loss landscape, which further accelerates training.} Power law is an example of fine-grained asymmetry. 

Here we tried several different, more coarse-grained power-law distributions: we divide the $|S_5|=120$ permutations into $m=\{5,10,20,40,60\}$ bins in lexicographical order. Then we assign the sum probability for different bins with the power law over $i\in\{1,2,3,...,m\}$ with the sum probability $P_{i,sum}\propto \frac1{i^\alpha}
$. Within each bin, we keep the individual skill probability in each bin uniform, so the individual skill split $P_{i,sum}$ evenly within the bin. 

The intuition is that the larger $m$ is, the distribution is more fine-grained and closer to original power law. Here we pick $\alpha = 1.5$ for better learning speed.
The experiments are shown in \Cref{fig:coarse_grained_powerlaw}, the more fine-grained the distribution is, the faster the model learns. 

\label{appendix:granularity}
\begin{figure}
    \centering
    \includegraphics[width=0.6\linewidth]{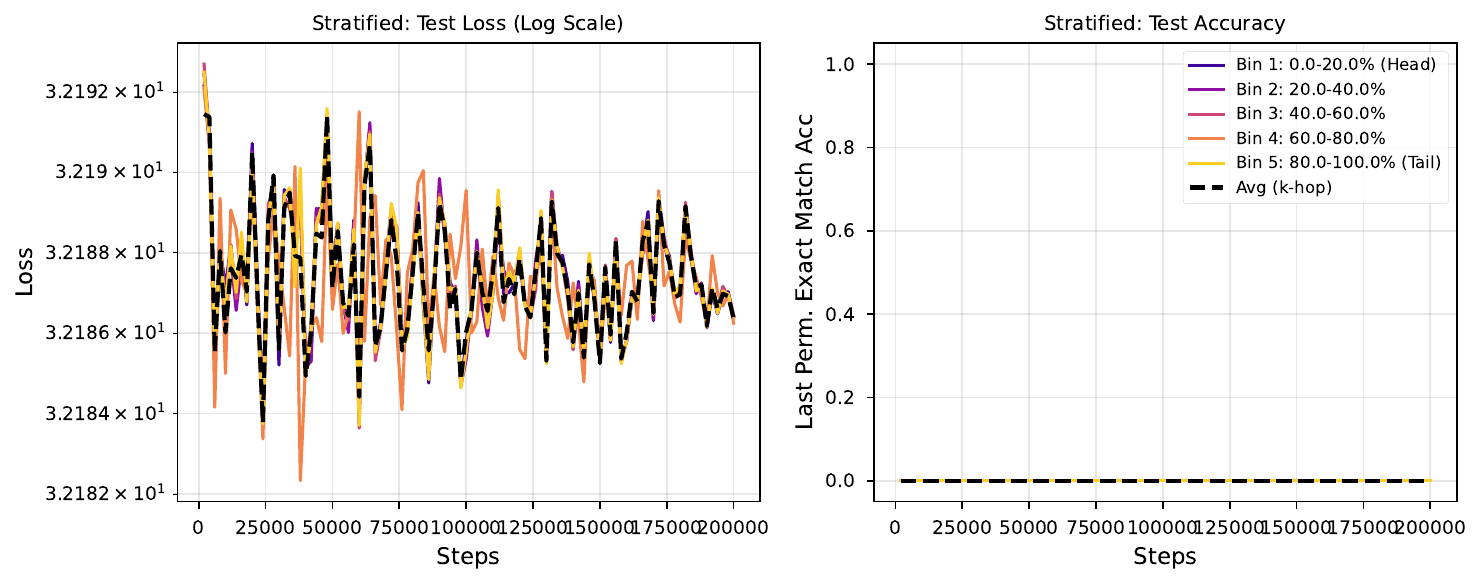}
    \includegraphics[width=0.6\linewidth]{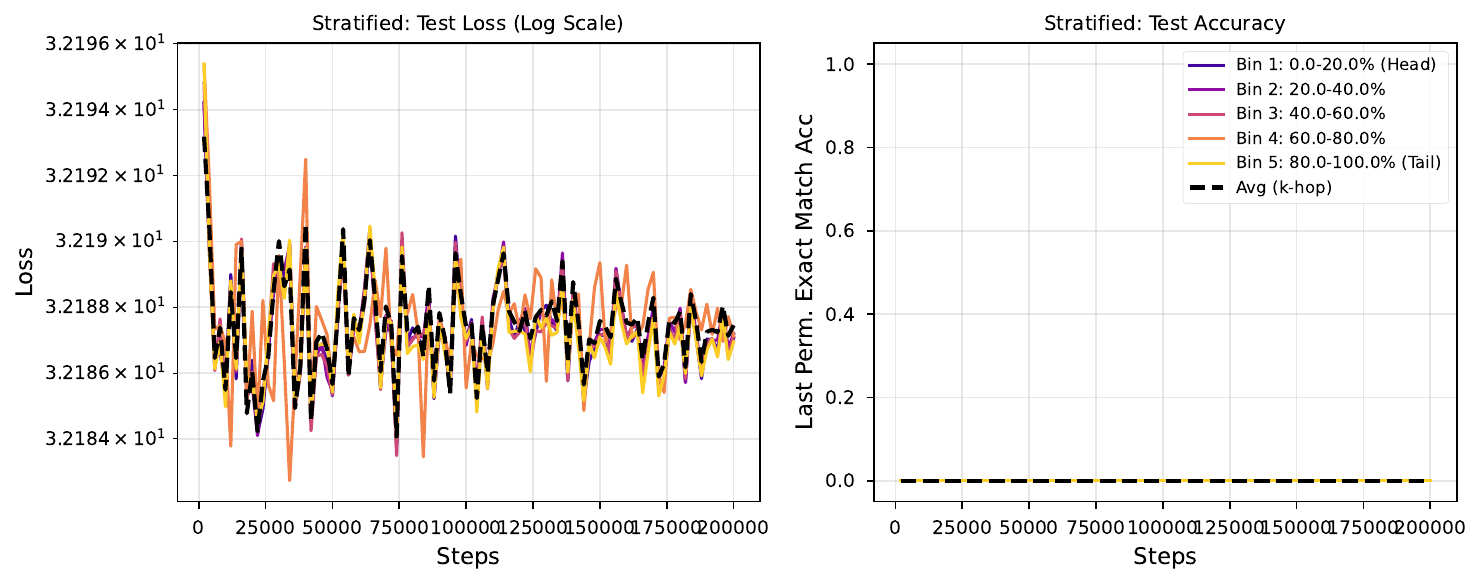}
    \includegraphics[width=0.6\linewidth]{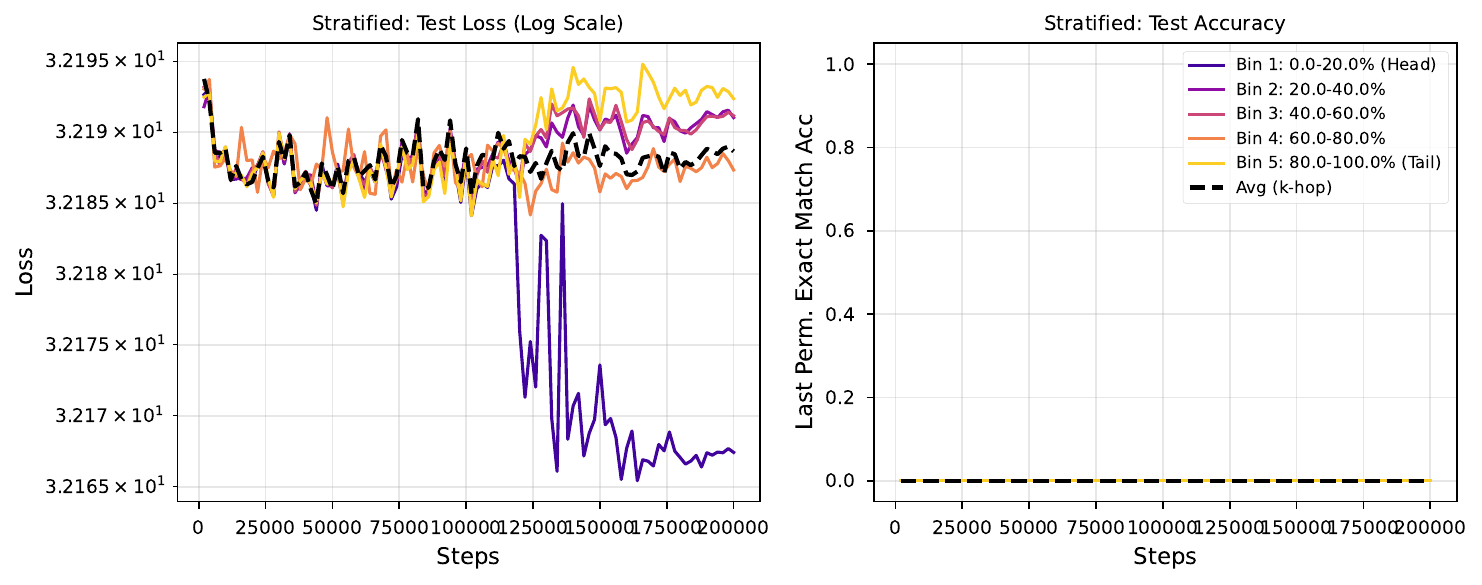}
    \includegraphics[width=0.6\linewidth]{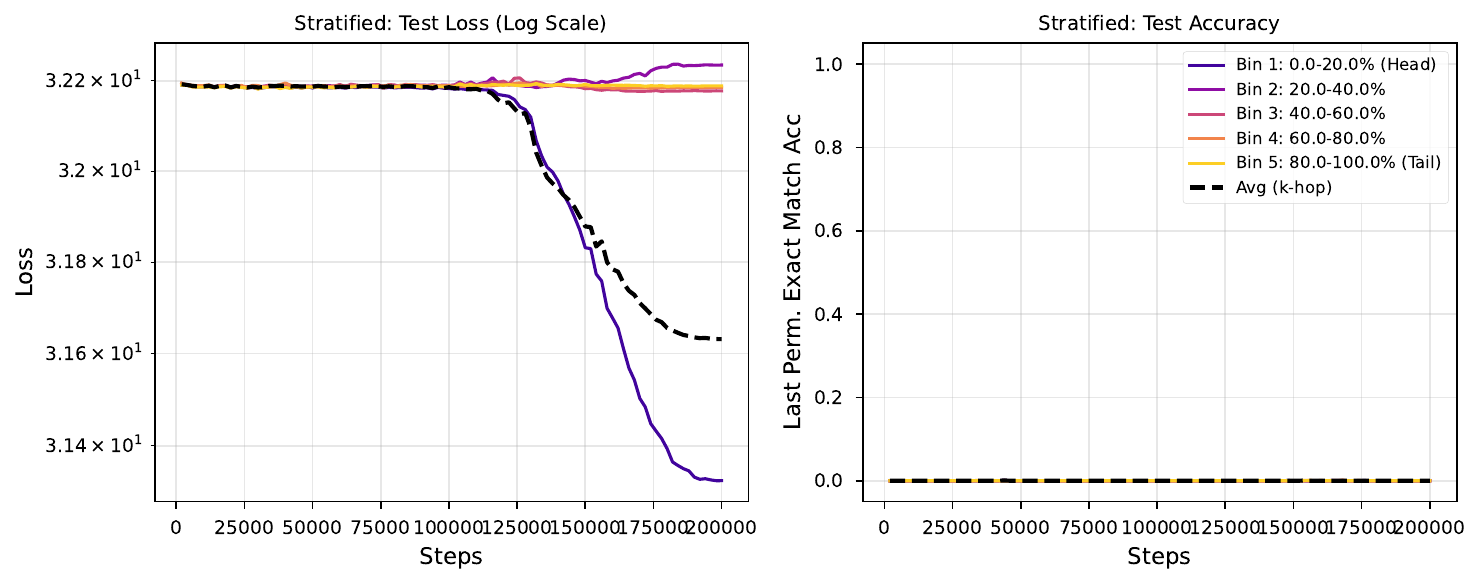}
    \includegraphics[width=0.6\linewidth]{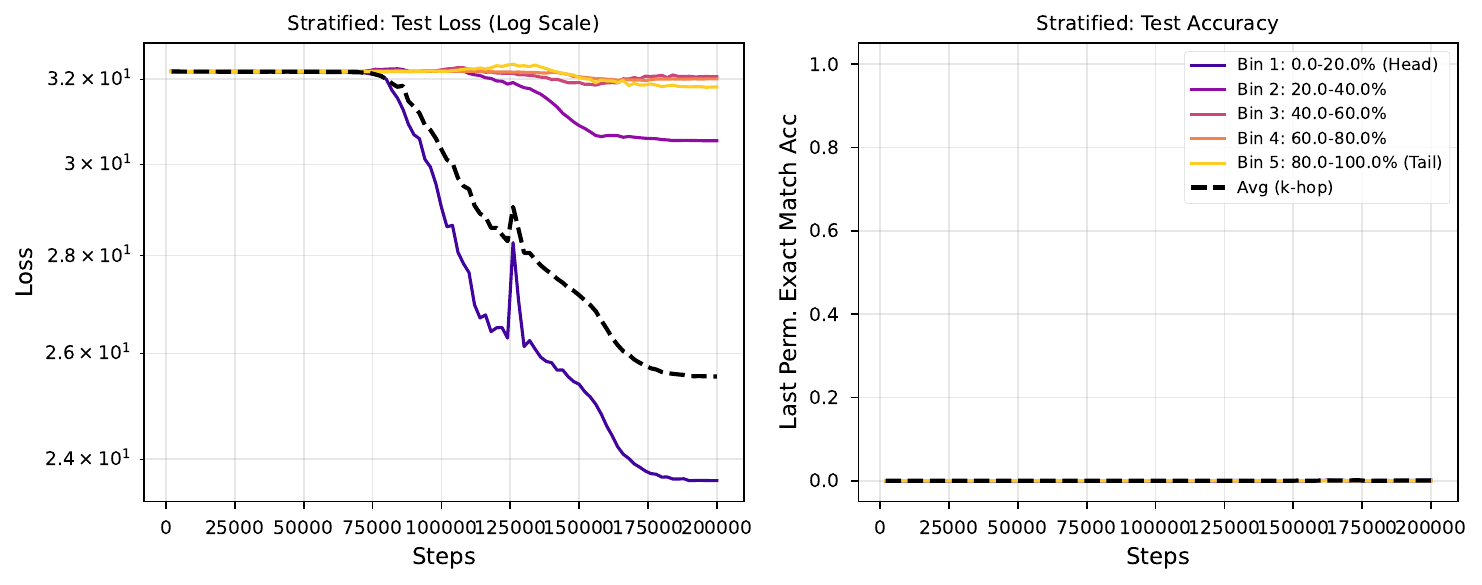}
    \caption{The granularity ablation experiments on $S_5$. From top to bottom are \# of bins = \{5,10,20,40,60\}. When \# of bin = 120, it falls back to the original power law. Here $\alpha =1.5$. As shown in the plot, coarse-grained power law learns much slower compared to fine-grained power law. We conjecture that the fine-grained asymmetry is the key to improve the landscape when the task is intrinsically symmetric and many saddle points exist. }
    \label{fig:coarse_grained_powerlaw}
\end{figure}

\section{Multi-hop QA}
\label{appendix:multi-hop_qa}
We followed \cite{yao2025language} to construct the natural language multi-hop QA task. Comparing with arithmetic tasks, the QA task is more knowledge-heavy and with a slightly simpler structure. \cite{yao2025language} found that the model needs exponentially many $k$-hop data for transformers to learn.

\paragraph{Dataset} The dataset contains 
$|{E}|$ entities—each with a unique name—and 
$N$ relation types. We created $|E|$ distinct single-token person names (e.g., Jennifer) and $|\mathcal{R}|=20$ single-token relation names (e.g., instructor) to serve as namespaces for entities and relations. We reused the name list in \cite{yao2025language}. The complete list of relation names and a partial list of entity names appear in Tables 5 and 6 in \cite{yao2025language}. The multi-hop questions are generated through a graph with $|E|$ individuals. Each entity is connected to $|\mathcal{R}|$ randomly chosen person in the graph. We considered the number of individuals $|E|\in\{20,50\}$.

For training, we use online sampled 3-hop and 4-hop training set and test on the leave-out 4096 test questions. For each test instance, we greedy decode the single token answer given the question prompt (e.g. `Who is the instructor of the teacher of Bob? $\backslash n$ Answer:'). We evaluate the exact match accuracy. The prompt format is as follows:
\begin{theobox*}{Multi-hop QA}
\small
\vspace{0.05cm}
\textbf{Facts:} The teacher of Bob is Carol. The instructor of Carol is Alice.

\textbf{Prompt:} ``Who is the instructor of the teacher of Bob? $\backslash n$ Answer:" \textbf{Label}. Alice.
\end{theobox*}

\paragraph{Training details} We use GPT2-tokenizer with the special tokens like names and relations added to the tokenizer. We use AdamW optimizer with $(\beta_1,\beta_2)=(0.9,0.98), \epsilon=10^{-6}$ and gradient clipping 1.0. We run 1000 steps of linear warmup followed by a cosine learning rate schedule to minimal learning rate 0.1 of the peak learning rate. We use bf16 training with packing with context length 1024 tokens. QA pairs from distinct samples are masked from each other during training. We all run the experiments with $3$ different random seed and calculate mean and variance.

We use a base model architecture with 384 hidden dimensions, 6 attention heads, and 6 layers. We set the learning rate to 0.0002 with 1000 warmup steps and train for a total of 80,000 steps using batch size 1024, with cosine learning rate decay to $0.1\times$ initial learning rate. We run all experiments across 3 random seeds and report the average performance.

\subsection{Additional experiments}
Similar to the $k=3, |E|=50$ case in the main paper, the superiority of the power law holds generally across different task settings and random seeds. We set $k\in \{3,4\}, |E|\in\{20,50\}$. We re-plot the figure in the main text for completeness in \Cref{fig:multi-hop_k}. We also did some coarse-grained power law that separates the permutations into 4 groups decided according to the order (in the probability of the power law assigned, see \Cref{fig:coarse-grained-powerlaw-multihop}). 

\begin{figure}[t]
    \centering
    \includegraphics[width=0.45\linewidth]{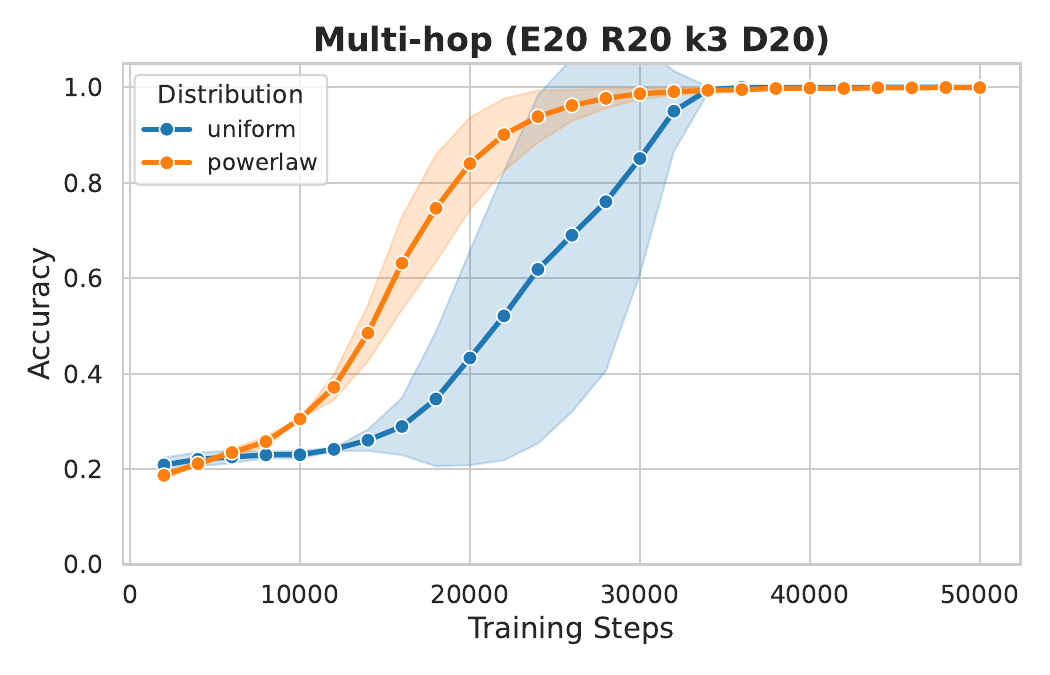}
    \includegraphics[width=0.45\linewidth]{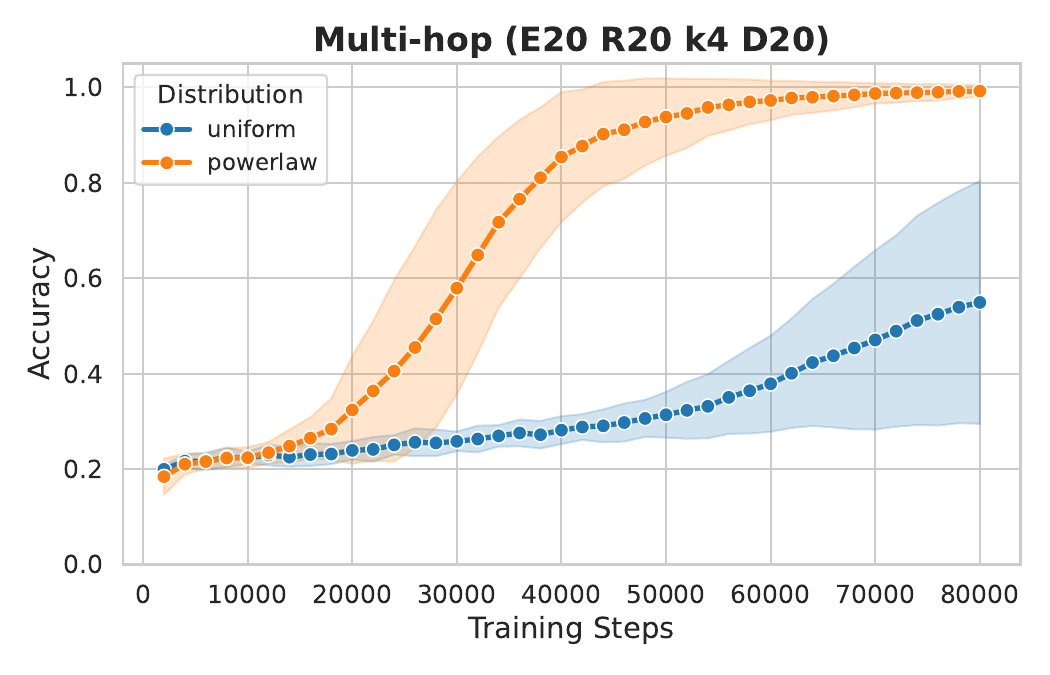}
    \includegraphics[width=0.45\linewidth]{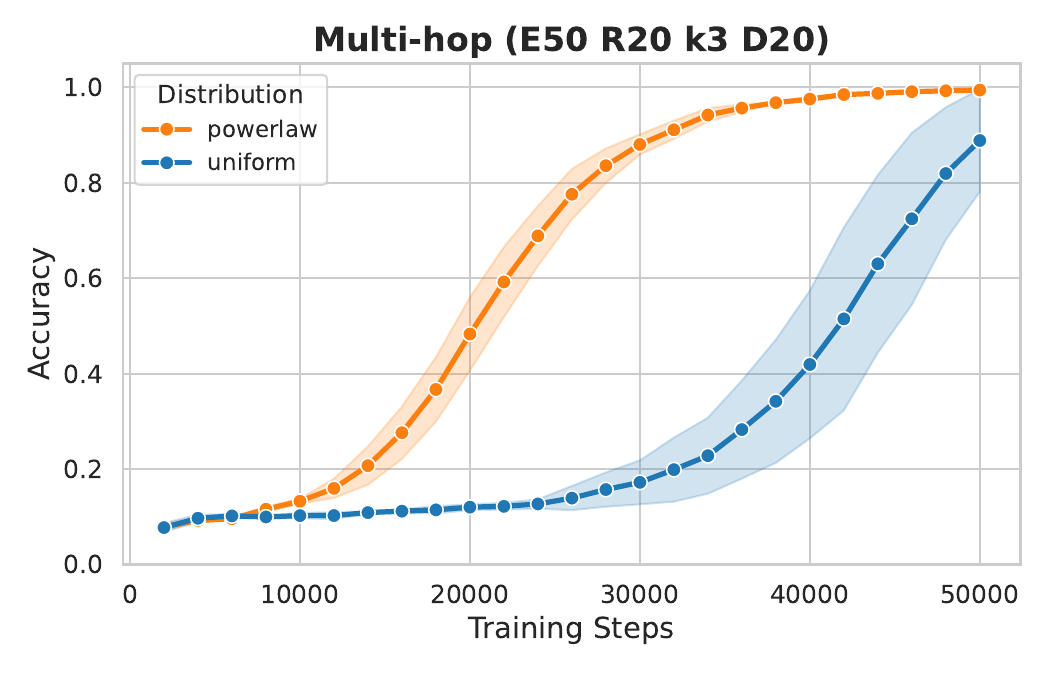}
    \includegraphics[width=0.45\linewidth]{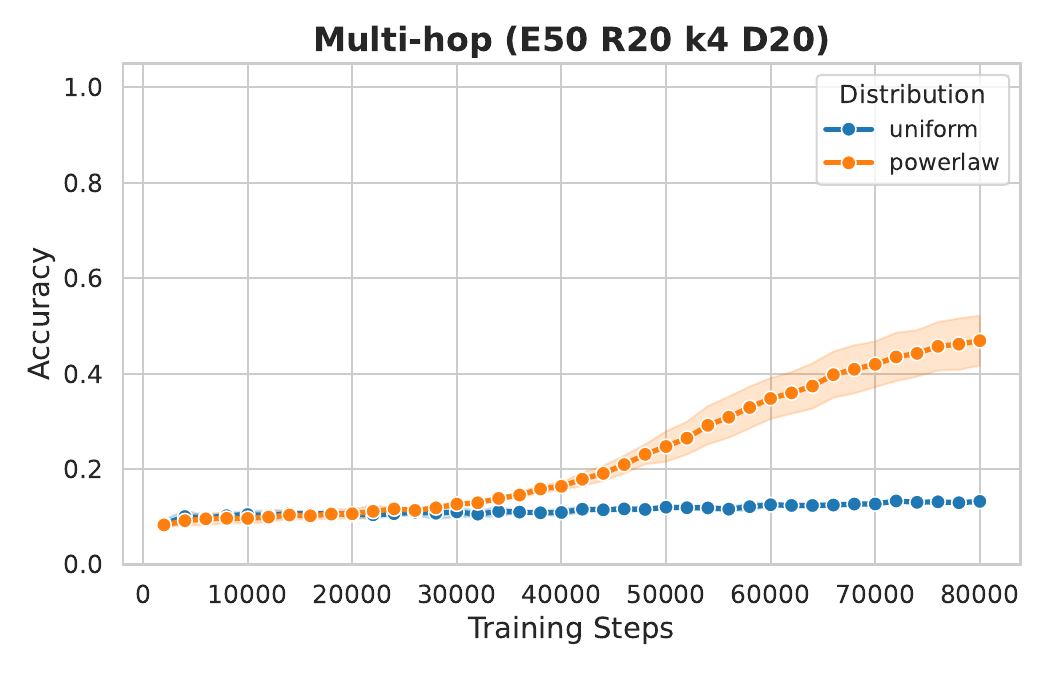}
    \vspace{-0.05cm}
    \caption{Accuracy plots for the multi-hop QA task. Across different data settings, power-law distribution generally accelerate the learning of such multi-hop natural language reasoning tasks. The difficulty indeed increases when the hop number $k$ and individual number $|E|$ grows, but power law always help in terms of training.}
        \vspace{-0.3cm}
    \label{fig:multi-hop_k}
\end{figure}
\begin{figure}[t]
    \centering
    \includegraphics[width=0.45\linewidth]{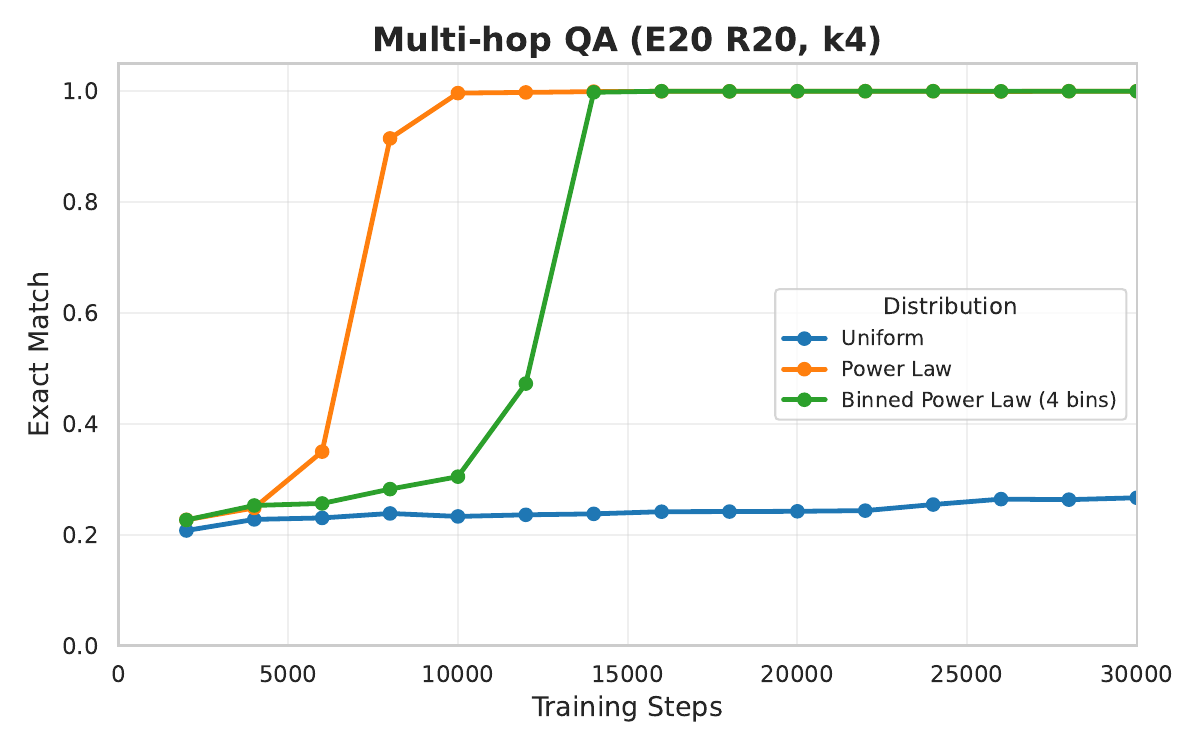}
    \includegraphics[width=0.45\linewidth]{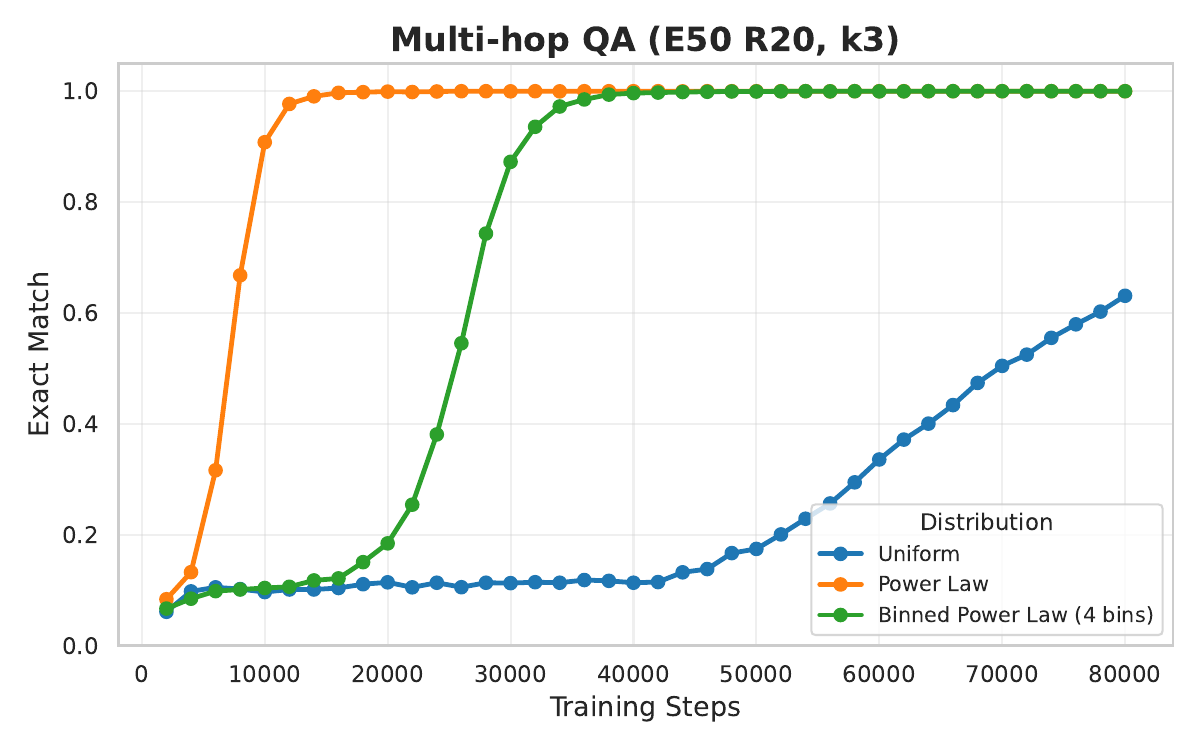}
    \caption{Test accuracy in the multi-hop QA tasks. Training with fine-grained power law outperforms coarser-grained (binned) power law and uniform distribution.}
    \label{fig:coarse-grained-powerlaw-multihop}
    \vspace{-0.3cm}
\end{figure}

\newpage
\section{Grade school math problems}
\label{appendix:gsm}

\paragraph{Training details} We use a standard GPT-2 tokenizer extended with necessary special tokens. We train a decoder-only Transformer model equivalent to GPT-2 Small, featuring 12 layers, 12 attention heads, and an embedding dimension of 768, totaling approximately 124M parameters. We use the AdamW optimizer with $(\beta_1, \beta_2) = (0.9, 0.95)$ and a weight decay of 0.1. We employ a cosine learning rate schedule with a peak learning rate of $5 \times 10^{-4}$ and a minimum learning rate of $5 \times 10^{-5}$ ($0.1\times$ peak), following a linear warmup of 100 steps. 
Training is performed in bfloat16 precision with a context length of 1024 tokens and a global batch size of approximately 0.5M tokens ($8 \times 64 \times 1024$). The model is trained for a total of 5 billion tokens, with checkpoints saved every 100 million tokens. We run all experiments across 3 random seeds and report the average performance with standard deviation. 

For data synthesize pipeline, we directly reuses the structure dependency graph generator in \citet{zhou2025gsm} and switch to a more natural language template as follows:
\begin{theobox*}{Synthetic GSM}
\small

\textbf{Problem:} ``We are in Mare Serenitatis. There are 18 Eyelash Viper. There are 2 Minke Whale. There are 197 Pelican. There are 2 Forest Mammoth. There are 18 Boomslang. There are 26 Gull. There are 185 Boxfish. There are 160 Dyeing Dart Frog. The total number of Dyeing Dart Frog and Chinstrap Penguin is the quotient of Dyeing Dart Frog and Chinstrap Penguin. ... We know the total number of Dyeing Dart Frog, Chinstrap Penguin, Boxfish, Gull, Boomslang, Forest Mammoth, Pelican, Minke Whale and Eyelash Viper is 18. What is the Chinstrap Penguin?" 
\vspace{0.1cm}

\textbf{Solution:} We know the total number of Dyeing Dart Frog, Chinstrap Penguin, Boxfish, Gull, Boomslang, Forest Mammoth, Pelican, Minke Whale and Eyelash Viper is 18. We know the Eyelash Viper is 18. Total number of Dyeing Dart Frog, Chinstrap Penguin, Boxfish, Gull, Boomslang, Forest Mammoth, Pelican and Minke Whale is 18 - 18 = 0. We know the Minke Whale is 2. Total number of Dyeing Dart Frog, Chinstrap Penguin, Boxfish, Gull, Boomslang, Forest Mammoth and Pelican is 0 + 2 = 2. We know the Pelican is 197. Multiplying 2 by 197 gives 394... We know the Dyeing Dart Frog is 160. Splitting 160 evenly into 20 parts gives 8, which is the Chinstrap Penguin. Answer: \#\#\#\# 8
\vspace{0.1cm}

\textbf{Label}. 8.
\end{theobox*}
The operation number is limited within $\{2,3,...,8\}$. The test set is sampled from data with uniform distribution.\textbf{ The skill here is the number \{1, 2, 3, ..., $\max p$\}.} (1) When we allow modular arithmetic, we directly sample the number from 0 to $p$. It will exhibit a perfect power law curve. We consider $p=211$. (2) When we use basic arithmetic, we sample the number from 0 to $p$, but reject the sampling when the answer exceed 1000 or cannot be divided. That will upsample some small numbers like 1 to 10, but the rest will still follow power law distribution.

\subsection{Additional experiments}
The additional experiments are (1) using a multi-hop template where allows non-rigorous combination of adjacent steps. (2) use basic arithmetic without modulo $p$. In all settings, power-law significantly performs better/train faster than uniform. See \Cref{fig:gsm_additional}.
\begin{figure}[t]
    \centering
    \includegraphics[width=0.45\linewidth]{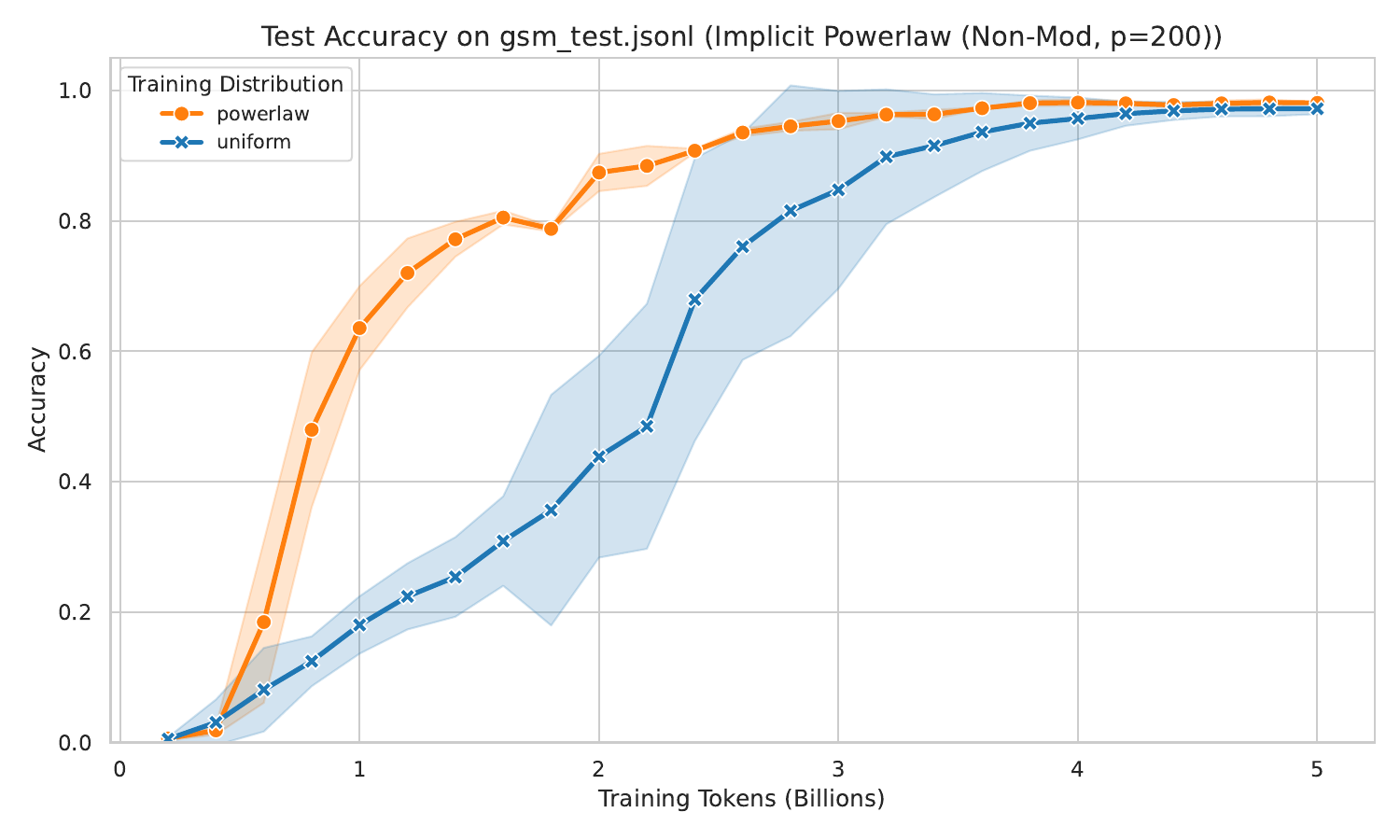}
    \includegraphics[width=0.45\linewidth]{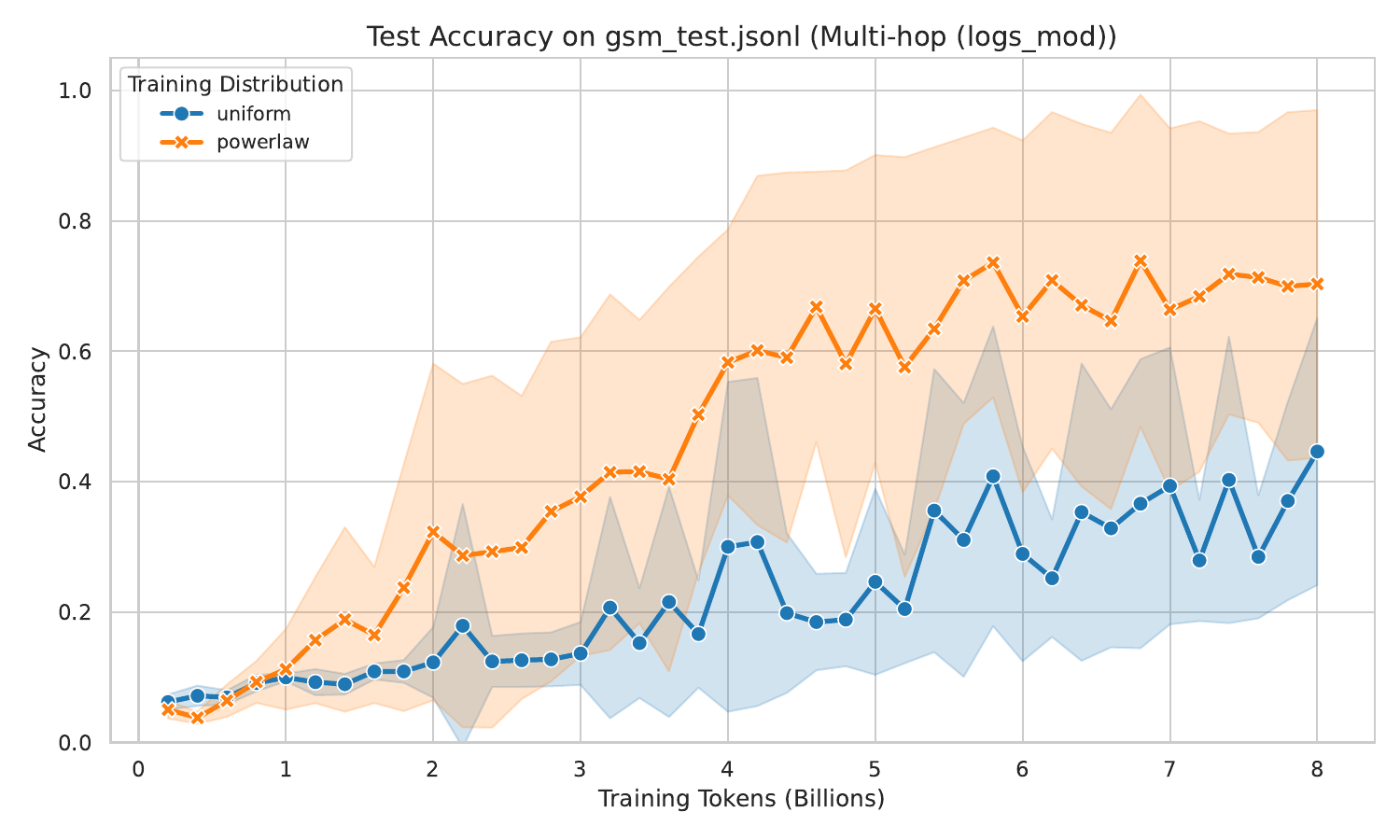}
    \caption{Accuracy plots for GSM tasks. \textbf{Left}: non-modular arithmetic with maximum leaf value $p=200$. \textbf{Right}: modular arithmetic with $p=211$, but with multi-hop template randomly combine two steps. Power-law distributions generally helps the model to learn to solve Grade school math synthetic problems much faster than uniform distribution.}
    \label{fig:gsm_additional}
    \vspace{-0.3cm}
\end{figure}

\end{document}